%% file: main.tex
\documentclass[11pt]{article}

\usepackage{style}

\input{commands}
\usepackage{microtype}
\usepackage{pgfplots}
\usepackage{placeins}
\pgfplotsset{compat=1.18}
\usetikzlibrary{arrows.meta}

\makeatletter
\renewcommand*{\theHALG@line}{\thealgorithm.\arabic{ALG@line}}
\makeatother

\title{The Price of Hidden Curvature:\\ Improved Lower Bounds for Bandit Convex Optimization}

\author{
  Nived Rajaraman\\
  \textit{Microsoft Research}\\
  \texttt{nrajaraman@microsoft.com} 
  \and
  Yanjun Han \\
  \textit{New York University} \\
  \texttt{yanjunhan@nyu.edu}
}
\date{}

\begin{document}

\maketitle

\begin{abstract}
\noindent We establish improved lower bounds on the minimax expected regret of stochastic bandit convex optimization for $1$-Lipschitz functions on the $d$-dimensional Euclidean ball. For time horizons $n\ge d^{10/3}$, we prove a lower bound of $\Omega(d^{4/3}\sqrt{n})$, the first nontrivial bound that exceeds the $d\sqrt{n}$ dependence of linear bandits. This shows that stochastic bandit convex optimization is fundamentally harder than linear bandits. For $d^2\le n\le d^{10/3}$, we obtain a lower bound of $\Omega(\sqrt{d}n^{3/4})$, matching the regret of the algorithm of \citet{flaxman2005} and thereby establishing its optimality in this regime.

\medskip
\noindent The hard class of convex functions we construct takes the following form in dimension $2d$: for an action $a = (\aone,\atwo) \in \bbB^{2d}$, each function is the scaled soft maximum of a ``tube'', $r^{-1} \| W^\star \aone - \frac{r}{8\eps} \atwo \|$ (hyperparameterized by $\eps,r$), and a squared distance function, $\frac12 \| \aone - u^\star \|^2 - \frac12 \| u^\star \|^2$. Here $u^\star\in\mathbb R^d$ is the unknown target determining the minimizer, while $W^\star\in\mathbb R^{d\times d}$ hides the region in which the quadratic curvature is observable. Indeed, observations reveal substantial information about $u^\star$ only when the learner acts near the hidden tube $
a^2\approx \frac{8\varepsilon}{r}W^\star a^1$;
away from it, the tube branch dominates and masks the quadratic branch. Thus the learner must pay to uncover the geometry encoded by $W^\star$ before it can effectively exploit the curvature that identifies $u^\star$. Formalizing this tradeoff yields a sample complexity lower bound of
$
\Omega(\frac{d^{5/2}}{\varepsilon^2}\wedge \frac{d^2}{\varepsilon^4})
$
for finding an $\varepsilon$-optimal action, and ultimately the $\Omega(d^{4/3}\sqrt n \wedge \sqrt{d}n^{3/4})$ regret lower bound.

\medskip
\noindent The proof was developed by GPT-5.5 Pro and GPT-5.6 Sol Pro under the authors' guidance.
\end{abstract}


\tableofcontents

\section{Introduction}
\label{sec:introduction}

Stochastic bandit convex optimization asks a learner to minimize an unknown convex function $f$ using only noisy function evaluations. Let
the action space be a convex set denoted $\calA \subseteq \bbR^d$. At round $t$, the learner
chooses an action $a_t \in \calA$ as a function of the past history of interaction and receives the noisy observation,
\begin{equation}
    r_t=f(a_t)+Z_t,
    \qquad
    Z_t\stackrel{\mathrm{i.i.d.}}{\sim}N(0,1).
    \label{eq:observation-model}
\end{equation}
Let $R$ collect all internal randomness used by the learner independent of the unknown convex function and the noise sequence $(Z_t)_{t \ge 1}$, and define the filtration $\calH_t\triangleq\sigma\bigl(R,(a_s,r_s)_{s\le t}\bigr)$ with $\calH_0\triangleq\sigma(R)$. A possibly randomized learner chooses an $\calH_{t-1}$-measurable action $a_t$, and every estimator produced after $t$ rounds is $\calH_t$-measurable.
For a fixed loss $f$ and learner, $\bbE_f$ denotes expectation under the
induced joint law of $R$, the actions, and the observations.

\medskip
\noindent For an action space $\calA\subseteq\bbR^d$, let $\calF_d (\calA)$ be the class of continuous convex functions $f:\calA\to\bbR$ that are $1$-Lipschitz and have a global minimizer $a^\star$ in $\bbB^d$, i.e., the $d$-dimensional unit $\ell_2$ ball. The minimax expected (pseudo-)regret of stochastic bandit convex optimization is defined by
\begin{equation}
    \mathfrak R_n^\star(d ; \calA)
    = \inf_{a^n} \sup_{f\in\calF_d (\calA)}
      \bbE_{f}\left[
        \sum_{t=1}^n
        (f(a_t)-f(a^\star))
      \right],
    \label{eq:minimax-pseudo-regret}
\end{equation}
where the infimum is taken over all possible action sequences $a^n \in \calA^n$ adapted to $(\calH_{t-1})_{t=1}^n$. \cref{eq:minimax-pseudo-regret} marginalizes over observation noise and the learner's randomization. In this paper, we will focus on two settings: $\calA = \bbB^d$ (Euclidean ball action space) and $\calA = \bbR^d$ (unconstrained action space).

\medskip
\noindent Bandit convex optimization has a long history, with a sequence of algorithmic advances steadily improving regret upper bounds \citep{flaxman2005,agarwal2013stochastic,bubeck2015one,bubeckeldan2018,bubeckeldanlee2021}. Lower bounds, however, have seen much less progress: for general Lipschitz convex losses, the strongest dimension-dependent bound remained the $\widetilde\Omega(d\sqrt{n})$ lower bound inherited from linear bandits \citep{dani2008,rusmevichientong2010linearly}. 

\medskip
\noindent In this work, we revisit this
question and establish new lower bounds on the minimax adaptive sample complexity of finding an $\eps$-optimal action for stochastic convex functions. Consequently, we establish regret lower bounds \textit{which strictly improve the best previously known guarantee of $d \sqrt{n}$.}

\begin{theorem}[Estimation Lower Bound]
\label{thm:convex-optimization-lower-bound}
There are universal constants $c_0,c_1>0$ such that the following holds for
all sufficiently large $d$. For every
$\eps \in (0,1/2)$ and every adaptive learner over $\calA = \bbB^d$ that makes $n$ noisy function-value queries and outputs an $\calH_n$-measurable
$\widehat{a}\in\bbB^{d}$, with
\begin{equation}
    n<c_1\begin{cases}
        \frac{d^{2.5}}{\varepsilon^2} &\text{if } \varepsilon \lesssim d^{-1/4} \\
        \frac{d^2}{\varepsilon^4} &\text{if } d^{-1/4} \lesssim \varepsilon \lesssim 1
    \end{cases}, 
    \label{eq:hidden-map-query-budget}
\end{equation}
there exists a function
$f \in \calF_d (\bbB^d)$ such that $\bbE_{f}\left[ f(\widehat{a}) - \min_{a^\star\in \bbB^d} f(a^\star) \right] \ge c_0\eps$. The expectation is over the observation noise and any learner randomization.
\end{theorem}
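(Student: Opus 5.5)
We work in dimension $2d$ and use the hard family described in the abstract, then rescale so that each function is $1$-Lipschitz with its minimizer in the unit ball. For $a=(\aone,\atwo)$, set
\[
f_{W,u}(a)=\kappa\cdot\mathrm{smax}\Bigl(r^{-1}\bigl\|W\aone-\tfrac{r}{8\eps}\atwo\bigr\|,\;\tfrac12\|\aone-u\|^2-\tfrac12\|u\|^2\Bigr).
\]
Here $\mathrm{smax}$ is a convex soft maximum (for instance $\tau\log(e^{x/\tau}+e^{y/\tau})$, or the exact maximum followed by convex smoothing). The target $u$ is uniform on a packing of the sphere of radius $\asymp\sqrt{\eps}$. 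The matrix $W$ is drawn with i.i.d.\ Gaussian or Rademacher entries scaled so that $\|W\|_{\mathrm{op}}=O(1)$. The plan has three steps.

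\textbf{Step 1 (geometry).} Check convexity and Lipschitzness. The tube term is the norm of a linear map, hence convex, and the quadratic is convex. Next, find the minimizer. It lies on the tube $\atwo=\frac{8\eps}{r}W\aone$ with $\aone\approx u$, and its value is about $-\frac12\|u\|^2\asymp-\eps$. Any action whose $\aone$-part is far from $u$, in the sense $\|\aone-u\|\gtrsim\sqrt\eps$, is $\Omega(\eps)$-suboptimal. By Fano over the packing of $u$, it then suffices to show that the mutual information $I(u;\mathcal H_n)$ is at most a small fraction of $d$.

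\textbf{Step 2 (information decomposition).} Under Gaussian noise, the chain rule gives
\[
I(u;\mathcal H_n)\le\sum_t\mathbb E\bigl[(f_{W,u}(a_t)-\bar f_W(a_t))^2\bigr],
\]
where $\bar f_W$ is a reference function that does not depend on $u$, namely the tube term alone, conditioned on $W$. The quadratic branch only matters where it exceeds the tube branch. That requires $\|W\aone-\frac{r}{8\eps}\atwo\|\lesssim r\eps$, so a query is informative only inside a tube of width $\asymp r\eps$. Outside the tube the difference is zero, or exponentially small after smoothing. Inside the tube the difference is at most $O(\eps)$, so each informative query contributes $O(\eps^2)$ bits, or $O(\eps^2/d)$ per coordinate after symmetrizing over the packing. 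Finding $u$ therefore costs about $d/\eps^2$ informative queries. The remaining task is to show that informative queries are rare unless the learner first learns $W$.

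\textbf{Step 3 (hiding $W$; the main obstacle).} Bound the total information the learner gathers about $W$. This information also comes only from queries near the tube, because far from it the observation is a smooth function of $W\aone-\frac{r}{8\eps}\atwo$ whose variation reveals only $O(\cdot)$ information per query. Learning $W$ to precision $r\eps$ in the relevant directions requires $\Theta(d^2)$ scalar parameters. Without knowledge of $W$, a query lands within width $r\eps$ of the tube only with probability about $(r\eps)^d$-small, or, after projecting to the effective dimension, with probability of order $\eps^2$ per query.

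The plan is a two-phase potential argument. First, bound the expected number of near-tube hits by (information about $W$ accumulated so far)$\times$(hit rate). Then choose $r$ to balance the two costs: $d^2/(\text{info per query on }W)$ for learning $W$, and $d/\eps^2$ for learning $u$. The two choices of $r$ should give the branches $d^{2.5}/\eps^2$ and $d^2/\eps^4$, with the crossover at $\eps\asymp d^{-1/4}$.

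I expect the rigorous version of this step to be the hardest part. The learner acts adaptively, so near-tube hits and information about $W$ are coupled. I would first try a change of measure to a null model in which $W$ is resampled independently of the history. I would then bound hitting probabilities through a Gaussian anti-concentration estimate on $W\aone$ given the history, combined with a Hellinger/chi-square bound that transfers the estimate back to the true model.
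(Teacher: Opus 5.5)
Your proposal has a genuine gap, and it lies in Steps~2--3. Both steps are organized around scalar quantities: a chain-rule bound on $I(u;\mathcal{H}_n)$ against a $u$-independent reference, and the total number of near-tube hits. The obstruction is directional, and no scalar bound of this kind holds at the claimed budget.

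Work in the correct normalization ($f=\varepsilon\,\mathrm{softmax}(\cdot,\cdot)$, $\|u\|\asymp 1$) and take $n\asymp d^{5/2}/\varepsilon^2$. A learner can spend $\widetilde{O}(d^2/\varepsilon^2)\ll n$ queries localizing $b=Wv$ for a single direction $v$ to accuracy $r$ (as in the paper's \texttt{TubeRefine}). It then plays $(cv,\tfrac{8\varepsilon c}{r}\widehat{b})$ for the rest of the horizon. Essentially every such query lies in the tube with $p_t\asymp 1$. So the number of hits is of order $n$, and your chain-rule sum is of order $n\varepsilon^2/d\asymp d^{3/2}\gg d$, although $u$ has been learned along one direction only. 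Hence neither ``$I(u;\mathcal{H}_n)\le cd$'' in your form nor ``hits $\lesssim$ (information on $W$)$\times$(hit rate)'' can be proved, and Fano used this way cannot reach $d^{5/2}/\varepsilon^2$.

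The paper instead proves two matching directional statements.
\begin{itemize}
\item Accurate estimation forces the posterior-averaged Fisher matrix $\overline{\mathcal{I}}_n^u$ to have at least $d/2$ eigenvalues $\ge d$. This comes from matrix Cram\'er--Rao for the marginal posterior of $u$, plus a matrix Laplace-transform bound on the Hessian correction.
\item The normalized partial-trace Fisher matrix $K_t$ of the posterior of $W$ is a submartingale with $\mathbb{E}\Tr K_n\le d+\varepsilon^2 n/(r^2d^2)$. Deleting eigendirections of $Q_{t-1}K_tQ_{t-1}$ above $\Lambda$ gives a decreasing family of projections $Q_t$. This copes with $K_t$ not being pathwise monotone, gives $\mathbb{E}\Tr(I_d-Q_n)\le d/16$, and gives $\mathbb{E}\Tr(\overline{\mathcal{I}}_n^uQ_n)\lesssim n\varepsilon^2r^2\bigl(1+\varepsilon^2n/(r^2d^3)\bigr)$.
\end{itemize}
Comparing with $d^2/16$ yields $n\gtrsim\min\{d^{5/2}/\varepsilon^2,\,d^2/(\varepsilon^2r^2)\}$, and taking $r=\max\{16\varepsilon,d^{-1/4}\}$ finishes.

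The hit-rate heuristic in Step~3 is also incorrect. By shrinking the first block ($a^1=\eta v$ with $\eta\lesssim r$, $a^2=0$), the learner lands in the tube with probability one. So no per-query bound like $(r\varepsilon)^d$ or $\varepsilon^2$ can hold. The correct statement is scale-invariant: it weights a hit by $\|a^1\|^2$ against the directional posterior uncertainty of $Wa^1$,
\[
(a_t^1)^\top K_{t-1}^{-1}a_t^1\cdot\mathbb{E}[p_t^2\mathbf{1}\{\|u\|\le 1/2\}\mid\mathcal{H}_{t-1},a_t]\lesssim r^2 .
\]
The posterior of $W$ after adaptive nonlinear observations is not Gaussian, so this needs a small-ball bound driven by Fisher information, not Gaussian anti-concentration. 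The paper uses the sharp Sobolev inequality in $d\ge 3$, $\sup_z\Pr(\|Z-z\|\le s)\lesssim s^2\Tr J(Z)/d^2$, applied to $Z=Wa_t^1$ and tied to $K_{t-1}$ by conditional Jensen.

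A change of measure to a null model with $W$ independent of the history cannot substitute. The learner legitimately acquires a lot of information about $W$ (already of order $d\log(1/r)$ nats for the single direction above), so the KL or chi-square price of such a transfer is large.

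Finally, your Step~1 scaling is inconsistent. With $\|u\|\asymp\sqrt{\varepsilon}$, either $\kappa\asymp 1$ and the tube term has Lipschitz constant $\asymp 1/\varepsilon$ in $a^2$, or $\kappa\asymp\varepsilon$ and the suboptimality gap drops to $\varepsilon^2$. The paper uses $\kappa=\varepsilon$ with $\|u\|\asymp 1$ and proves everything under a Gaussian prior. It then conditions on a bounded event to get almost-sure $1$-Lipschitzness.
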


\begin{theorem}[Regret Lower Bound]\label{thm:convex-bandit-regret-lower-bound}
There are universal constants $c,C>0$ such that, for every $d\ge C$ and $n\ge C d^2$, the minimax regret satisfies
\begin{equation}
    \mathfrak{R}_n^\star(d ; \bbB^d)
    \ge c\begin{cases}
        \sqrt{d}n^{3/4} &\text{if } d^2 \lesssim n \lesssim d^{10/3} \\
        d^{4/3}\sqrt{n} & \text{if } n \gtrsim d^{10/3} 
    \end{cases}.
    \label{eq:convex-bandit-regret-lower-bound}
\end{equation}
\end{theorem}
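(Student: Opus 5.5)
We derive the regret bound from \cref{thm:convex-optimization-lower-bound} with the standard online-to-batch reduction, choosing $\eps$ as a function of $n$ and $d$ in each regime. Suppose a learner $a^n$ has expected regret $\mathfrak R$ on every $f\in\calF_d(\bbB^d)$. Let $\widehat a$ be an action drawn uniformly from $\{a_1,\dots,a_n\}$, using extra internal randomness, which is allowed because $R$ may contain it. Then $\widehat a$ is $\calH_n$-measurable and lies in $\bbB^d$, and
\[
\bbE_f\bigl[f(\widehat a)-f(a^\star)\bigr]=\frac1n\,\bbE_f\Bigl[\sum_{t=1}^n (f(a_t)-f(a^\star))\Bigr]\le \frac{\mathfrak R}{n}.
\]
So if $n$ is below the budget in \cref{eq:hidden-map-query-budget} for some admissible $\eps$, then some $f$ forces $\mathfrak R/n\ge c_0\eps$, that is, $\mathfrak R\ge c_0\eps n$. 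Convexity of $f$ is not even needed here, because we average losses rather than actions.

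\textbf{Choosing $\eps$.} We take $\eps$ as large as possible while keeping $n<c_1\,\mathrm{budget}(\eps)$.
\begin{itemize}
\item In the regime $\eps\gtrsim d^{-1/4}$, set $\eps=(c_1 d^2/(2n))^{1/4}$. This gives $n= c_1d^2/(2\eps^4)<c_1 d^2/\eps^4$, and the regret bound becomes $\eps n\asymp d^{1/2}n^{3/4}$.
\item In the regime $\eps\lesssim d^{-1/4}$, set $\eps=(c_1 d^{5/2}/(2n))^{1/2}$. This gives $\eps n\asymp d^{5/4}n^{1/2}$.
\end{itemize}

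\textbf{Validity of each choice.} We must check that $\eps$ lands in the correct regime and in $(0,1/2)$.
\begin{itemize}
\item For the first choice, $\eps\asymp (d^2/n)^{1/4}$ lies in $[d^{-1/4},1/2)$ exactly when $Cd^2\le n\lesssim d^3$. Here the lower limit $n\ge Cd^2$ with $C$ large ensures $\eps<1/2$.
\item For the second choice, $\eps\lesssim d^{-1/4}$ requires $n\gtrsim d^{3}$.
\end{itemize}
The two expressions agree at $n\asymp d^3$, so there the rate crosses over from $\sqrt d\,n^{3/4}$ to $d^{5/4}\sqrt n$.

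\textbf{Main obstacle: the $d^{10/3}$ threshold.} This is the expected hard step. The statement claims $\sqrt d\,n^{3/4}$ up to $n\asymp d^{10/3}$ and $d^{4/3}\sqrt n$ beyond it. The two target rates themselves meet at $n=d^{10/3}$, since $d^{1/2}n^{3/4}=d^{4/3}n^{1/2}$ iff $n^{1/4}=d^{5/6}$. This does not match a naive reading of \cref{thm:convex-optimization-lower-bound}, whose regime boundary $\eps\asymp d^{-1/4}$ corresponds to $n\asymp d^3$.

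So I would first try to confirm that the estimation bound really holds in the form $\min\{d^{5/2}/\eps^2,\,d^2/\eps^4\}$, as the abstract suggests. With that form, the $d^2/\eps^4$ branch is the binding one whenever $\eps\gtrsim d^{-1/4}$.

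If the direct reduction only yields $d^{5/4}\sqrt n$ for $n\gtrsim d^3$, the stated $d^{4/3}\sqrt n$ would need the underlying hard instance to be used directly, rather than \cref{thm:convex-optimization-lower-bound} as a black box. Regret charges every round, not only the final output. The learner pays roughly $\eps$ per round while exploring away from the tube to locate $W^\star$, and the tube scale $r$ can be retuned against $n$. I would redo the information-theoretic accounting with the per-round regret cost of off-tube queries made explicit, and optimize over $(\eps,r)$. The target is total regret $\gtrsim\min_{m}\bigl(m\,\eps + (n-m)\eps\,\mathbb 1\{m\lesssim \text{cost of learning }W^\star\}\bigr)$, where $m$ is the number of exploratory rounds. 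Balancing should produce the exponent $4/3$ with threshold $d^{10/3}$. Making this balance rigorous is the main obstacle. Finally, the constants $C$ and $c$ are chosen so that the hypotheses hold, namely $d$ large and $\eps<1/2$.
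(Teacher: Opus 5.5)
Your online-to-batch step is valid, but it only proves the theorem for $Cd^2\le n\lesssim d^3$. With the (correct) budget $\min\{d^{5/2}/\eps^2,d^2/\eps^4\}$, the bound $\mathfrak R\ge c_0\eps n$ gives at best $\min\{\sqrt d\,n^{3/4},d^{5/4}\sqrt n\}$. For $n\gtrsim d^3$ this is strictly weaker than the claim: it is below $\sqrt d\,n^{3/4}$ on $d^3\lesssim n\lesssim d^{10/3}$, and below $d^{4/3}\sqrt n$ beyond. That regime contains the headline $d^{4/3}\sqrt n$ rate.

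Neither re-reading \Cref{thm:convex-optimization-lower-bound} nor retuning $r$ fixes this. The $d^{5/2}/\eps^2$ term in \Cref{thm:gaussian-hidden-map-estimation} does not depend on $r$, and it caps any black-box reduction at $d^{5/4}\sqrt n$. The loss is structural. Online-to-batch charges each round only the final-output scale $\eps$, whereas in the estimation problem a query spent learning $W^\star$ costs only one sample. Your fallback heuristic also charges an exploratory round ``roughly $\eps$'', so it reproduces the same bound and does not contain the missing idea.

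The missing ingredient is a bound on the $W^\star$-information in terms of the regret rather than $n$ (\Cref{lem:tension-regret}).
\begin{itemize}
\item The excess loss is at least $\frac{c\eps}{r}D(a)$, where $D(a)=\|W^\star a^1-\frac{r}{8\eps}a^2\|$.
\item The Fisher small-ball bound \cref{eq:basic-ineq} gives $\bbE[D(a_t)\mid\calH_{t-1},a_t]\gtrsim({a_t^1}^\top K_{t-1}^{-1}a_t^1)^{1/2}\ge\|Q_{t-1}a_t^1\|/\sqrt\Lambda$. Here $Q_t$ is the projection obtained by deleting the directions in which $K_t$ exceeds $\Lambda$.
\item Consequently, each query that gathers $W^\star$-information along a not-yet-learned direction costs about $\eps/r$ times the current posterior uncertainty. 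This is far more than $\eps$, and the total regret is $\gtrsim\frac{\eps}{r\sqrt\Lambda}\sum_t\bbE\|Q_{t-1}a_t^1\|^2$.
\item The paper pairs this with a modified matrix $\widetilde K_t$ that accumulates score only along $Q_{s-1}$. It is a submartingale, dominates $K_t$ on the range of $Q_{t-1}$, and satisfies $\bbE\Tr(\widetilde K_n)\le 2d+\frac{\eps^2}{r^2d^2}\sum_t\bbE\|Q_{t-1}a_t^1\|^2$.
\item Together these give $\bbE\Tr(I_d-Q_n)\lesssim d/\Lambda+\eps\mathfrak R_n/(rd^2\sqrt\Lambda)$, so $\Lambda\asymp 1+\eps^2\mathfrak R_n^2/(r^2d^6)$ suffices.
\end{itemize}

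Feed this into the tension bound $\bbE\Tr(\overline{\calI}_m^uQ_m)\lesssim m\eps^2r^2\Lambda$. Compare it with the $d^2/16$ lower bound from \Cref{lem:posterior-target-spread}, applied to the averaged action over a truncated horizon $m$. This yields $\mathfrak R_n\gtrsim\min\{n\eps,\,d^{8/3}/\eps,\,d^2/(\eps r^2)\}$. The $d^{8/3}$, replacing the $d^{5/2}$ of the estimation bound, is exactly what moves the crossover from $d^3$ to $d^{10/3}$. Take $r=\max\{16\eps,d^{-1/3}\}$ and $\eps$ as in \cref{eq:eps-choice}, then transfer to the bounded prior via \cref{eq:regret-transfer}; this gives the theorem. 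Without such a regret-dependent control of the $W^\star$-information, your argument cannot get past $d^{5/4}\sqrt n$.
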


\noindent We also extend the lower bounds in \Cref{thm:convex-optimization-lower-bound,thm:convex-bandit-regret-lower-bound} to the unconstrained action space setting where $\calA = \bbR^d$ in \Cref{thm:unconstrained-convex-bandit-regret-lower-bound}. In the next section we discuss the main implications of this work and prior connections.

\subsection{Related Work}

There is a long line of work on upper bounds for bandit convex optimization
in both the adversarial and stochastic settings
\citep{agarwal2013stochastic,bubeck2015one,bubeckeldan2018,bubeckeldanlee2021}. In the adversarial setting, among the first known sublinear regret guarantees is the $\calO( \sqrt{d} n^{3/4})$ upper bound of \cite{flaxman2005}.
Currently, the best known information-theoretic and polynomial-time regret bounds are, respectively,
$\widetilde O(d^{5/2}\sqrt{n})$ and $\widetilde O(d^{7/2}\sqrt{n})$
\citep{lattimore2020adversarial,fokkema2024}. In the stochastic setting, for domains including the Euclidean ball, the dimension dependence has improved from
$\widetilde O(d^{9/2}\sqrt{n})$ to $\widetilde O(d^{3/2}\sqrt{n})$, up to
lower-order terms \citep{lattimoregyorgy2021,lattimoregyorgy2023,fokkema2024}.

\medskip
\noindent Before \Cref{thm:convex-bandit-regret-lower-bound}, the strongest general dimension-dependent lower bound for stochastic bandit convex optimization was inherited from stochastic linear bandits.
To make this comparison explicit, for the parameter space $\Theta = \bbB^d$ and action space $\calA = \bbB^d$, define
\begin{equation}
    \mathfrak R_{n,\text{lin}}^\star (d ; \bbB^d)
    = \inf_{a^n} \sup_{\theta \in \Theta}
      \bbE_{\theta}\left[
        \sum_{t=1}^n
        (\langle \theta, a_t\rangle - \min_{a^\star \in \calA} \langle \theta, a^\star\rangle)
      \right],
    \label{eq:minimax-linear-pseudo-regret}
\end{equation}
The minimax regret of linear bandits is known to scale as
\smash{$\mathfrak R_{n,\text{lin}}^\star (d ; \bbB^d) = \widetilde\Theta(n\wedge d\sqrt n)$}
\citep{dani2008,rusmevichientong2010linearly,rajaraman2024,pmlr-v291-zhang25b}. Since linear losses are convex and $1$-Lipschitz under this parameterization, this implies
$\mathfrak R_n^\star(d ; \bbB^d)\ge
\widetilde\Omega(n\wedge d\sqrt n)$. Lower bounds in related settings (smooth convex functions, and strongly convex functions) were also studied by \citet{shamir2013,jamieson2012query,akhavan2020exploiting,akhavan2024gradient}, but did not result in stronger implications for the general Lipschitz convex class. 

\medskip
\noindent For stochastic \textit{first-order}
convex optimization, hard instances are realized by linear functions
\citep{agarwal2009information}. This stems from the inequality $f (x) -  f(x^\star) \le \langle \nabla f(x), x-x^\star \rangle$ for (differentiable) convex functions, which shows that the true suboptimality of a point $x$ (to the minimum $x^\star$) is always dominated by the suboptimality incurred by the local linearization of $f$ around $x$. The same intuition might suggest that even with noisy zeroth-order (i.e., function evaluation) feedback the same behavior holds, and that the optimal regret scales as
$\widetilde\Theta(n\wedge d\sqrt n)$. \Cref{thm:convex-bandit-regret-lower-bound} shows that this intuition surprisingly
fails, separating convex and linear bandits over the Euclidean ball.

\medskip
\noindent Finally, in the regime $d^2 \lesssim n \lesssim d^{10/3}$, our lower bound is matched by the upper bound of \cite{flaxman2005}, who establish a regret upper bound of $\calO (\sqrt{d} n^{3/4})$ when specialized to the Euclidean ball action space. This shows that the minimax regret for convex bandits undergoes at least one new phase transition, switching from linear regret for $n \lesssim d^2$ to $\Theta (\sqrt{d} n^{3/4})$ for $d^2 \lesssim n \lesssim d^{10/3}$ before the eventual $d^{\Theta(1)} \sqrt{n}$ regime. We illustrate different regimes in \Cref{fig:regret}. 

\subsection{Structure and Interpretation of the Lower Bound}
\label{subsec:technical-novelty}

\begin{figure}
    \centering
    \begin{tikzpicture}[thick, scale=2, line cap=round]
\def\plotdimension{2}

  \pgfmathsetmacro{\Tone}{pow(\plotdimension,2)}
  \pgfmathsetmacro{\Tlower}{pow(\plotdimension,3.3333333333)}
  \pgfmathsetmacro{\Tupper}{pow(\plotdimension,4)}
  \pgfmathsetmacro{\Tmax}{1.45*\Tupper}

  \pgfmathsetmacro{\Rone}{pow(\plotdimension,2)}
  \pgfmathsetmacro{\Rlower}{pow(\plotdimension,3)}
  \pgfmathsetmacro{\Rupper}{pow(\plotdimension,3.5)}

  \pgfmathsetmacro{\xscale}{4/\Tmax}
  \pgfmathsetmacro{\Rmax}{1.12*pow(\plotdimension,1.5)*sqrt(\Tmax)}
  \pgfmathsetmacro{\yscale}{2.40/\Rmax}

  \draw[<->] (0,2.38) -- (0,0) -- (4.25,0);
  \node[above] at (0,2.38) {minimax regret};
  \node[below] at (3.85,0) {time horizon $n$};

  \draw[red, dashed]
    ({\xscale*\Tone},0) -- ({\xscale*\Tone},{\yscale*\Rone})
    -- (0,{\yscale*\Rone});
  \node[below, red] at ({\xscale*\Tone},0) {$d^2$};
  \node[left, red] at (0,{\yscale*\Rone}) {$d^2$};

  \draw[red, dashed]
    ({\xscale*\Tlower},0) -- ({\xscale*\Tlower},{\yscale*\Rlower})
    -- (0,{\yscale*\Rlower});
  \node[below, red] at ({\xscale*\Tlower},0) {$d^{10/3}$};
  \node[left, red] at (0,{\yscale*\Rlower}) {$d^3$};

  \draw[blue, dashed]
    ({\xscale*\Tupper},0) -- ({\xscale*\Tupper},{\yscale*\Rupper});
  \node[below, blue] at ({\xscale*\Tupper},0) {$d^4$};

  \def\redtailslope{0.75} 
  \def\redcurvebulge{0.2} 
  
  \draw[red] (0,0)
    -- ({\xscale*\Tone},{\yscale*\Tone});
  \draw[red, domain=\Tone:\Tlower, smooth, variable=\x, samples=200]
    plot ({\xscale*\x},
          {\yscale*sqrt(\plotdimension)*pow(\x,0.75)});

  \draw[red, domain=\Tlower:\Tmax, smooth, variable=\x, samples=200]
  plot ({\xscale*\x},
        {\yscale*(
          \Rlower
          + \redtailslope*
            (pow(\plotdimension,1.3333333333)*sqrt(\x)-\Rlower + \redcurvebulge*
        sin(180*(\x-\Tlower)/(\Tmax-\Tlower)))
        )});

  \draw[blue, dashed] (0,0)
    -- ({\xscale*\Tone},{\yscale*\Tone});
   \draw[blue, dashed, domain=\Tone:\Tlower,
        smooth, variable=\x, samples=200]
    plot ({\xscale*\x},
          {\yscale*sqrt(\plotdimension)*pow(\x,0.75)});
  \draw[blue, domain=\Tlower:\Tupper,
        smooth, variable=\x, samples=200]
    plot ({\xscale*\x},
          {\yscale*sqrt(\plotdimension)*pow(\x,0.75)});
  \draw[blue, domain=\Tupper:\Tmax,
        smooth, variable=\x, samples=200]
    plot ({\xscale*\x},
        {\yscale*(
            (pow(\plotdimension,1.5)*sqrt(\x)+ 0.1*
        sin(180*(\x-\Tupper)/(\Tmax-\Tupper)))
        )});

  \pgfmathsetmacro{\Tshared}{0.60*\Tone+0.40*\Tlower}
  \pgfmathsetmacro{\Rshared}{sqrt(\plotdimension)*pow(\Tshared,0.75)}
  \node[below, xshift=-7pt]
    at ({\xscale*\Tshared*0.87},{\yscale*\Rshared*1.3})
    {$\sqrt{d}\,n^{3/4}$};

  \pgfmathsetmacro{\Tlabel}{0.86*\Tmax}
  \pgfmathsetmacro{\Rupperlabel}{pow(\plotdimension,1.5)*sqrt(\Tlabel)}
  \pgfmathsetmacro{\Rlowerlabel}{pow(\plotdimension,1.3333333333)*sqrt(\Tlabel)}
  \node[above=0.1cm, blue] at ({\xscale*\Tlabel},{\yscale*\Rupperlabel})
    {$d^{3/2}\sqrt{n}$};
    \node[right=1cm, blue] at ({\xscale*\Tlabel},{2.2cm})
    {UB};
  \node[below=0.25cm, red] at ({\xscale*\Tlabel},{\yscale*\Rlowerlabel})
    {$d^{4/3}\sqrt{n}$};
    \node[right=1cm, red] at ({\xscale*\Tlabel},{\yscale*\Rlowerlabel})
    {LB};

  \node[above left, xshift=-1pt, yshift=2pt]
    at ({\xscale*0.64*\Tone},{\yscale*0.44*\Tone}) {$n$};
\end{tikzpicture}
    \caption{Upper and lower bounds on the minimax regret, suppressing logarithmic factors. The red curve indicates the regret lower bound we establish in this work, improving over the prior best known lower bound of $\Omega (d \sqrt{n})$ for $n \gtrsim d^2$. The blue curve indicates the current best known upper bound on the minimax regret, scaling as $\widetilde{\calO} ( n \wedge \sqrt{d} n^{3/4} \wedge d^{3/2} \sqrt{n} )$.}
    \label{fig:regret}
\end{figure}

\begin{figure}
\centering
\includegraphics[width=0.66\textwidth,trim={0cm 2cm 0cm 3cm},clip]{img/BCO.pdf}
\caption{\textbf{Geometry of the hard functions $f_{\theta^\star}$.}  The shaded ball is the
action space $\calA=\bbB^{2d}$, with each action split as $a=(\aone,\atwo)$.  The dotted cylinder depicts the narrow tube surrounding the hidden relation $\atwo=(8\eps/r)W^\star\aone$, while the white slice $\aone=u^\star$ is the set on which the second branch of $f_{\theta^\star}$ is minimized. The intersection of this slice with the center of the tube contains the minimizer $a_{\theta^\star}=(u^\star,(8\eps/r)W^\star u^\star)$.  Thus a low-loss action must both locate $u^\star$ and satisfy the tube constraint determined
by $W^\star$.}
\label{fig:hidden-map-geometry}
\end{figure}

In order to interpret \Cref{thm:convex-optimization-lower-bound}, we first describe the structure of the hard functions we study. For simplicity of presentation, we construct the hard instance in dimension $2d$: 
each action is split into two equal parts, $a=(\aone,\atwo) \in \bbB^{2d}$. The ground-truth function is indexed by $\theta^\star=(W^\star,u^\star)$: $u^\star\in\bbR^d$ is an unknown target vector that the learner must find to achieve low regret; the matrix $W^\star\in\bbR^{d\times d}$ is a nuisance parameter which hides information about $u^\star$. The ground-truth convex function $f_{\theta^\star}$ is (throughout $\|\cdot\|$ denotes the $\ell_2$ norm)
\begin{equation}
    f_{\theta^\star}(a) = \eps \cdot \smax \left( \frac1r \big\|  W^\star \aone - \frac{r}{8\eps} \atwo \big\|, \frac12 \big\| \aone-u^\star \big\|^2-\frac12 \| u^\star \|^2 \right).
    \label{eq:intro-hidden-map-softmax}
\end{equation}
Here $\smax(s,t)=\log(e^s+e^t)-\log (2)$ is the softmax function of two inputs, and $u^\star$ is assumed to be of the scale $\| u^\star \| \asymp 1$ and $W^\star$ of the scale $\| W^\star \|_{\mathrm{op}} \asymp 1$. The hyperparameters $r$ and $\eps$ are chosen such that $\eps\in (0,\frac{1}{2})$ and $r\ge 16\eps$. A pictorial depiction of $f_{\theta^\star}$ is in \Cref{fig:hidden-map-geometry}; we summarize several properties of $f_{\theta^\star}$, including convexity and Lipschitzness, in \Cref{sec:construction-properties}. In order to make $f_{\theta^\star}$ small, the learner must choose $\aone \approx u^\star$ and $\atwo$ such that the first branch of the softmax is small, which requires $\atwo \approx \frac{8\eps}{r} W^\star u^\star$.

\medskip
\noindent The structure of $f_{\theta^\star}$ is such that unless $\frac{8 \eps}{r} W^\star \aone \approx \atwo$, the first branch of the softmax dominates, and the function value is large. This means that for any chosen $\aone$, unless the learner's action satisfies
\begin{equation} \label{eq:tube}
    \left\| \frac{8\varepsilon}{r} W^\star\aone - \atwo \right\| \lesssim \varepsilon,
\end{equation}
the first branch within the softmax of $f_{\theta^\star}$ dominates (i.e., is at least a large constant), which occludes information about $u^\star$, \textit{even if the action $\aone$ happened to be informative about $u^\star$.}

\medskip
\noindent We refer to queries that fall within the set \cref{eq:tube}, informally, as the \emph{tube}. The learner can obtain information about $u^\star$ essentially only by playing actions within the tube. If $W^\star$ were known, the learner
can freely enter the center of the tube by taking $\atwo=\frac{8\varepsilon}{r}W^\star\aone$. However, because $W^\star$ is hidden, the learner must instead guess $W^\star\aone$ or learn enough about $W^\star$ to be able to predict this direction for each $\aone$ of its choosing. In the sequel, we focus on the cost of learning an estimator $\widehat{u}$ such that $\| u^\star - \widehat{u} \| \le c$ for some small constant $c>0$. This is a necessary condition to find an action $a$ such that $f_{\theta^\star}(a) - f_{\theta^\star}(a_{\theta^\star}) \le c' \varepsilon$ since $f_{\theta^\star}(a) - f_{\theta^\star}(a_{\theta^\star}) \gtrsim \eps \| \aone - u^\star \|^2$. 

\medskip
\noindent \textit{A simple learning rule.} For a small constant $c_0>0$, pick $d$ orthogonal vectors in $c_0\bbS^{d-1}$, $\calV = (v_i)_{i=1}^d$, uniformly at random, and let $b_i = W^\star v_i$. The learner tries to learn the projections $\beta_i = \langle u^\star, v_i \rangle$ one at a time across $i \in [d]$. In order to do so, the learner first learns an estimator $\widehat{b}_i$ of $b_i$ to some accuracy by querying $f_{\theta^\star}$, to estimate the tube in the direction of $\aone \gets v_i$. The learner can then play actions of the form $(\aone,\atwo)=(v_i,(8\eps/r)\widehat{b}_i)$, which approximately lie in the tube; the resulting observations reveal information about $\beta_i = \langle u^\star, v_i \rangle$. In each iteration, the learner recovers $\beta_i$ to error $\le c/\sqrt{d}$ for some small $c>0$ (let the estimator be denoted $\widehat{\beta}_i$). Finally, combining all the estimates into the vector $\widehat{u} = \sum_{i=1}^d \widehat{\beta}_i v_i$ satisfies $\| \widehat{u} - u^\star \| \le c$ by the Pythagorean theorem.

\medskip
\noindent \textit{Per-iteration sample complexity.} First we calculate the cost of learning $\widehat{b}_i$ for each $i$. We will drop subscripts in this paragraph and refer to $v_i,b_i$ and $\widehat{b}_i$ as $v,b$ and $\widehat{b}$ respectively. Let $E$ denote the learner's target estimation scale for predicting $b$, so that $\norm{b-\widehat b}$ is typically of order $E$ under the posterior. Note that the effective noise variance of the first branch of the softmax is $\sigma^2_1 \asymp r^2/\eps^2$; ignoring logarithmic factors, the $W^\star$\textit{-exploration} cost of learning $b$ to error $E$ is
\begin{equation*}
    N_{W^\star\text{-explore}} (E;r)
    \gtrsim
    \frac{d^2 \sigma^2_1}{E^2} \cdot \bbI(E<1) \asymp \frac{d^2 r^2}{\eps^2 E^2}\cdot \bbI(E<1).
\end{equation*}
Here the indicator is because the learner does not need to carry out any estimation when $E=1$. 
Given such an estimator $\widehat b$, the learner must choose $\atwo$ so that
$\|(r/(8\eps))\atwo-b\|\lesssim r$ in order to learn
$\langle u^\star,v\rangle$ through the second branch of
$f_{\theta^\star}$.  Conditional on the information available to the learner,
let $p_{\mathrm{tube}}(E)$ denote the posterior probability that this query
falls in the tube, where $E$ represents the typical posterior error
$\norm{b-\widehat b}$.  In dimension $d\ge3$, heuristically our analysis bounds the posterior probability of $\atwo$ hitting the tube, i.e., $\{\|(r/(8\eps))\atwo-b\|\lesssim r\}$ by
\begin{equation}
    p_{\mathrm{tube}}(E)
    \lesssim
    \min\left\{1,\frac{r^2}{E^2}\right\}.
    \label{eq:tube-proof-heuristic}
\end{equation}
This is an \emph{anticoncentration} inequality which upper bounds the posterior uncertainty of $b=W^\star v$. At first, this bound may appear optimistic, since if the learner has only localized $b$ to a ball of radius $E$, then finding an action that falls in the tube would seem to occur with probability around $(r/E)^d$. However, what the quadratic dependence in \cref{eq:tube-proof-heuristic} really suggests is a tradeoff between $p_{\mathrm{tube}}$ and the effective SNR: one can shrink $(\aone, \atwo)$ into $\frac{r}{E}(\aone, \atwo)$ to boost the probability of $\|(r/(8\eps))\atwo-W^\star\aone\|\lesssim r$ to $\Omega(1)$, but this scaling also shrinks the Fisher information for $u^\star$ by a factor of $(\frac{r}{E})^2$. This tradeoff is formally established in \cref{eq:anti-concentration}, where $E^2$ is replaced by the directional quantity ${\aone_t}^\top K_{t-1}^{-1}\aone_t$ which represents the squared uncertainty of $W^\star\aone_t$ under the posterior distribution of $W^\star$.  

\medskip
\noindent The learner needs $d\sigma_2^2$ effective observations which fall in the tube to localize \smash{$\langle u^\star, v \rangle$} to error \smash{$\lesssim \frac{1}{\sqrt{d}}$}, where $\sigma_2^2\asymp\varepsilon^{-2}$ is the effective noise variance in the second branch of $f_{\theta^\star}$. The total cost of $u^\star$\textit{-exploration} therefore scales as
\begin{equation*}
    N_{u^\star\text{-explore}} (E;r)
    \gtrsim
    \frac{d \sigma_2^2}{p_{\text{tube}} (E)}
    \gtrsim
    \frac{dE^2}{\eps^2r^2},
\end{equation*}
where in the last inequality, we assume $E \ge r$. Combining both exploration costs gives 
\begin{equation*}
    N_{W^\star\text{-explore}}(E;r)+N_{u^\star\text{-explore}}(E;r)
    \gtrsim
    \frac{d^2r^2}{\eps^2E^2} \cdot \bbI (E < 1)
    +\frac{dE^2}{\eps^2r^2}.
\end{equation*}
Ignoring logarithmic factors, choosing $r \asymp \eps$ and by an application of AM-GM inequality, we get
\begin{equation*}
    N_{W^\star\text{-explore}}(E;r)+N_{u^\star\text{-explore}}(E;r)
    \gtrsim
    \frac{d^2}{E^2} \bbI (E < 1)
    +\frac{dE^2}{\eps^4} \gtrsim \min \left\{ \frac{d^{3/2}}{\eps^2}, \frac{d}{\eps^4} \right\}. 
\end{equation*}
Thus every choice of $E$ costs at least order \smash{$\frac{d^{3/2}}{\eps^2} \wedge \frac{d}{\eps^4}$} to estimate $ \langle u^\star, v_i \rangle$ within accuracy $d^{-1/2}$. Finally, repeating this process across all $d$ target directions, $v_1,\cdots,v_d$, the statistical cost of finding such a \smash{$\widehat{u} = \sum_{i=1}^d \widehat{\beta}_i v_i$} satisfying \smash{$\| \widehat{u} - u^\star \| \le c$} scales as \smash{$\frac{d^{5/2}}{\eps^2} \wedge \frac{d^2}{\eps^4}$}.

\medskip
\noindent \textit{Regret lower bound.} The regret lower bound of $\Omega ( d^{4/3} \sqrt{n})$ follows from the estimation lower bound by accounting for learning costs carefully, as we discuss below. Note that by the scaling of the branches within $f_{\theta^\star}$, a query used to learn \(W^\star v_i\) incurs regret of order \(\varepsilon E/r\), with $E$ being the target estimation error of \(W^\star v_i\), whereas a query used to learn \(u^\star\) incurs regret of order \(\varepsilon\). 
With the choice $r \asymp \eps$, and adding up all $d$ dimensions, the regret of the learner and the total time horizon scale as
\begin{align*}
    \mathfrak{R}_n &= d (E N_{W^\star\text{-explore}} (E;r) + \eps N_{u^\star\text{-explore}} (E;r)) \asymp \frac{d^3}{E} \bbI ( E < 1) + \frac{d^2 E^2}{\eps^3},\\
    n &= d( N_{W^\star\text{-explore}} (E;r) + N_{u^\star\text{-explore}} (E;r)) \asymp \frac{d^3}{E^2} \bbI ( E < 1)  + \frac{d^2 E^2}{\eps^4}. 
\end{align*}
If the learner chooses $E=1$, then $\mathfrak{R}_n \asymp \sqrt{d} n^{3/4}$. If $E < 1$, we have 
\begin{equation*}
    \mathfrak{R}_n  \asymp \frac{d^3}{E} + \frac{d^2 E^2}{\eps^3} \gtrsim \frac{d^3}{E} + \left( \frac{d^3}{E} \right)^{1/3} \left( \frac{d^2 E^2}{\eps^3} \right)^{2/3} \gtrsim d^{4/3} \left(\frac{d^{3/2}}{E} + \frac{d E}{\eps^2}\right) \asymp d^{4/3} \sqrt{n}. 
\end{equation*}
The equality case is given by the choice $E \asymp \eps d^{1/3}$. Altogether, this suggests the regret lower bound $\mathfrak{R}_n \gtrsim d^{4/3} \sqrt{n} \wedge \sqrt{d} n^{3/4}$.


\medskip
\noindent A formal version of the exploration algorithm sketched in this section can be shown to achieve a regret upper bound of $\calOtilde ( \sqrt{d} n^{3/4} \wedge d^{4/3} \sqrt{n} \wedge n)$ when the ground truth is any function belonging to the support of the prior over convex functions considered (cf. \Cref{thm:constructed-family-regret-upper-bound}). This of course does not imply that the minimax regret itself is $\widetilde{\Theta}( \sqrt{d} n^{3/4} \wedge d^{4/3} \sqrt{n} \wedge n)$, but implies that stronger lower bounds can only come by changing the prior over convex functions considered. We discuss these aspects in further detail in \Cref{sec:discussion}.

\paragraph{Proof Organization.}

\Cref{sec:hidden-map-family} introduces the hard Gaussian family and develops the information bottlenecks underlying both lower bounds. We first establish the estimation and regret lower bounds under the Gaussian prior, and then condition this prior on a bounded event to obtain almost surely Lipschitz hard functions and complete the proofs of \Cref{thm:convex-optimization-lower-bound,thm:convex-bandit-regret-lower-bound}. The proofs of the main technical lemmas are given in \Cref{sec:gaussian-sample-proof}. In \Cref{sec:discussion}, we study the tightness of the construction and present a matching regret upper bound for the hard instance constructed in this paper. We then extend both lower bounds to unconstrained action spaces, and discuss limitations of the construction and related open questions. Technical proofs are deferred to \Cref{sec:construction-properties,sec:constructed-family-regret-upper-bound-proof,sec:conic-extension-proofs}.

\paragraph{Statement on AI use.} Both the construction of the hard instance and proofs of lower bounds were developed by GPT-5.5 Pro and GPT-5.6 Sol Pro under the authors' guidance. The authors take full responsibility for the final content.


\section{Proof of Main Results}
\label{sec:hidden-map-family}

\paragraph{Notation.}
For symmetric matrices, $A\preceq B$ denotes the Loewner order.  Throughout, $c,C>0$ denote universal constants that may change from line to line, and all logarithms are base $e$.  Unless explicitly conditioned or subscripted, all expectations and probabilities are under the joint law of the Gaussian prior, the observation noise, and the learner's randomization.  As in \Cref{subsec:technical-novelty}, the ambient dimension is $2d$: actions lie in $\bbR^d\times\bbR^d$, $W^\star\in\bbR^{d\times d}$, and $u^\star\in\bbR^d$. The main lemmas introduced in this section are proved in \Cref{sec:gaussian-sample-proof}.

\medskip
\noindent For the action $a_t=(\aone_t,\atwo_t)$ at round $t$, define the two scalar arguments of the softmax in \cref{eq:intro-hidden-map-softmax} by
\begin{equation*}
    f_t^1
    \triangleq\frac1r \left\| W^\star\aone_t-\frac{r}{8\eps}\atwo_t \right\|,
    \qquad
    f_t^2
    \triangleq\frac12\norm{\aone_t-u^\star}^2
      -\frac12\norm{u^\star}^2.
\end{equation*}
Then the mean loss at time $t$ is $f_{\theta^\star}(a_t) = \eps \, \mathrm{softmax}(f_t^1, f_t^2) = \eps[\log(e^{f_t^1}+e^{f_t^2})-\log 2]$, and the corresponding softmax weight on the second branch of the softmax is the scalar
\begin{equation}
    p_t\triangleq\frac{e^{f_t^2}}{e^{f_t^1}+e^{f_t^2}}\in(0,1).
    \label{eq:softmax-second-branch-weight}
\end{equation}
When $f_t^1$ dominates $f_t^2$, the weight $p_t$ is small, so the query reveals
little information about $u^\star$. Some basic properties of $f_{\theta^\star}$ are summarized in \Cref{sec:construction-properties}. 

\medskip
\noindent The proof begins with the independent Gaussian prior
\begin{equation}
    \operatorname{vec}(W^\star)
    \sim N\left(0,\frac1d I_{d^2}\right),
    \qquad
    u^\star\sim N\left(0,\frac1{16d}I_d\right).
    \label{eq:gaussian-priors}
\end{equation}
For the parameters $(\eps,r)$, we will always choose $\eps<\frac12$ and $r\ge 16\eps$ to ensure the Lipschitz property with high probability; see \Cref{lem:high-probability-lipschitz}. We will first prove the lower bounds under the Gaussian prior, and then truncate the Gaussian prior to ensure Lipschitzness almost surely in \Cref{subsec:bounded-prior-reduction}. 

\medskip
\noindent The proof uses two information matrices. The first matrix is $\overline{\calI}_n^u$ in \cref{eq:target-likelihood-fisher}, the likelihood Fisher information of $u^\star$ averaged over the posterior distribution of $(W^\star, u^\star)$ given $\calH_n$. We will show that a learner which estimates $u^\star$ accurately must have accrued a large amount of $\overline{\calI}_n^u$ along many directions. The second matrix is $K_t$ in \cref{eq:Kt}, the Fisher information matrix of the posterior distribution of $W^\star$ given $\calH_t$. We will use this matrix to characterize the uncertainty in estimating $W^\star$. Finally, we establish an information bottleneck for learning $u^\star$ through learning $W^\star$. Roughly speaking, we will prove an upper bound of $\bbE[\Tr(K_n)]$ showing the difficulty of learning $W^\star$, and also an upper bound of $\bbE[\Tr(\overline{\calI}_n^u K_n^{-1})]$ showing that learning $u^\star$ requires learning $W^\star$ first. The precise definitions and details are explained below. 

\paragraph{Likelihood Fisher Information about $u^\star$.}

Define the positive-semidefinite matrices
\begin{equation}
    \calI_n^u
    \triangleq\sum_{t=1}^n
      \nabla_{u^\star}f_{\theta^\star}(a_t)
      \nabla_{u^\star}f_{\theta^\star}(a_t)^\top
    =\eps^2\sum_{t=1}^n p_t^2\aone_t{\aone_t}^\top,
    \qquad
    \overline{\calI}_n^u
    \triangleq\bbE[\calI_n^u\mid\calH_n].
    \label{eq:target-likelihood-fisher}
\end{equation}
Both matrices in \cref{eq:target-likelihood-fisher} lie in
$\bbR^{d\times d}$.
Here $\calI_n^u$ is the realized (i.e. without taking expectation) likelihood Fisher information about $u^\star$,
while $\overline{\calI}_n^u$ averages it over the posterior distribution of $(W^\star,u^\star)$ given $\calH_n$ and is the
\emph{posterior-averaged likelihood Fisher information}. The matrix $\overline{\calI}_n^u$ tracks the effective amount of information the learner has acquired
about $u^\star$ in different directions. In particular, for a unit vector $v$,
\begin{equation*}
  v^\top\overline{\calI}_n^u v = \eps^2 \cdot \bbE\left[ \sum_{t=1}^n p_t^2\langle \aone_t,v\rangle^2 \ \middle|\ \calH_n \right].
\end{equation*}
Up to the $\eps^2$ factor, the LHS counts the effective queries about $u^\star$ in direction $v$: a query counts proportional to $\langle \aone_t, v \rangle^2$, and the second branch has appreciable ``weight'' (i.e., large $p_t$). Our first lemma argues that if $u^\star$ is localized to constant $\ell_2$ error, then the matrix $\overline{\calI}_n^u$ has many large eigenvalues with a constant probability.

\begin{lemma}[Posterior-Averaged Likelihood Fisher Information Required to Estimate $u^\star$] \label{lem:posterior-target-spread}
Let $\calA = \bbB^{2d}$. There is a universal constant $c_1>0$ such that the following holds.  If an $\calH_n$-measurable estimator $\widehat{u}$ satisfies $\bbE [\norm{\widehat{u}-u^\star}^2] \le c_1$, then there is an event $\calS_n\in\sigma(\calH_n)$ with
$\Pr(\calS_n)\ge1/2$ on which $\overline{\calI}_n^u$ has at least $d/2$
eigenvalues of size at least $d$.
\end{lemma}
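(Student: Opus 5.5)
Our approach is a Bayesian van Trees (Bayesian Cramér--Rao) inequality applied to $u^\star$, with the posterior-averaged Fisher information $\overline{\calI}_n^u$ in the role of the likelihood information. The prior on $u^\star$ is $N(0,\frac1{16d}I_d)$, independent of $W^\star$, so its prior information is $J_0 = 16d\, I_d$. Conditionally on $(W^\star,u^\star)$, the observations are Gaussian with unit variance and mean $f_{\theta^\star}(a_t)$. The score of the full likelihood with respect to $u^\star$ is therefore $\sum_t Z_t \nabla_{u^\star} f_{\theta^\star}(a_t)$, with $a_t$ adapted to the history. By the chain rule for adaptive designs, its conditional covariance given the parameters is $\bbE[\calI_n^u\mid W^\star,u^\star]$.

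\textbf{Step 1 (matrix van Trees).} For any unit vector $v$, take the scalar functional $\psi(u)=\langle v,u\rangle$. The multivariate van Trees inequality, with $W^\star$ treated as a nuisance parameter that is integrated over, gives
\begin{equation*}
\bbE\big[\langle v,\widehat u-u^\star\rangle^2\big]\ \ge\ v^\top\big(\bbE[\calI_n^u]+16d\, I_d\big)^{-1}v .
\end{equation*}
It is cleaner to work directly with the trace form:
\begin{equation*}
\bbE\norm{\widehat u-u^\star}^2\ \ge\ \Tr\big((\bbE[\calI_n^u]+16d I)^{-1}\big)\ \ge\ \frac{d^2}{\Tr(\bbE[\calI_n^u])+16d^2}.
\end{equation*}
The trace bound alone is too weak, since it only controls the sum of the eigenvalues. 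We need a count of large eigenvalues of the random matrix $\overline{\calI}_n^u$, and this needs to hold on an event.

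\textbf{Step 2 (localizing to an event).} We use the conditional form of van Trees: the Bayesian Cramér--Rao bound holds conditionally on any $\sigma(\calH_n)$-measurable event. An alternative is a data-dependent projection. Let $P$ be the $\calH_n$-measurable orthogonal projection onto the eigenvectors of $\overline{\calI}_n^u$ with eigenvalue below $d$. We would then apply the posterior (conditional) version of the inequality: given $\calH_n$, the posterior mean of $u^\star$ minimizes the error. A Brascamp--Lieb / posterior-variance lower bound of Cramér--Rao type is
\begin{equation*}
\bbE[\norm{P(u^\star-\bbE[u^\star\mid\calH_n])}^2\mid\calH_n]\ \gtrsim\ \Tr\big(P(\overline{\calI}_n^u+16d I)^{-1}P\big)\ \ge\ \frac{\operatorname{rank}(P)}{17d}.
\end{equation*}
Here $\overline{\calI}_n^u$ enters because the posterior Fisher information is bounded by prior information plus the posterior average of the likelihood information, via the martingale identity $\bbE[\text{score}\,\text{score}^\top\mid\calH_n]$. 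On the event $\calS_n^c$ that fewer than $d/2$ eigenvalues exceed $d$, we have $\operatorname{rank}(P)>d/2$, so the conditional squared error is at least $c/34$. Taking expectations gives $c_1\ge \bbE\norm{\widehat u-u^\star}^2\ge \Pr(\calS_n^c)\cdot c/34$. Choosing $c_1<c/68$ forces $\Pr(\calS_n^c)<1/2$.

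\textbf{Main obstacle.} The hard part is making the conditional (posterior) Cramér--Rao step rigorous with the right information matrix. Adaptivity means we cannot simply plug in a fixed design, so the projection $P$ must be handled through a sequential/martingale argument. We would first try the van Trees inequality applied to the estimator $P\widehat u$ with the data-dependent matrix $P$. The score-tensor argument is of the form $\bbE[\langle \widehat u - u^\star, PS\rangle]$, where $S$ is the joint prior-plus-likelihood score. Integration by parts in $u^\star$ gives the trace $\bbE\Tr(P)$. We would then bound $\bbE\norm{PS}^2\le \bbE\Tr(P(\calI_n^u+16dI))=\bbE\Tr(P(\overline{\calI}_n^u+16dI))\le \bbE[\operatorname{rank}(P)]\cdot 17d$, using tower conditioning on $\calH_n$. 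One subtlety: $P$ depends on the data, so the integration by parts picks up a derivative of $P$ through the observations. We would handle this by noting that the integration by parts is in $u^\star$ with the data held fixed, in the joint density. Cauchy--Schwarz then gives $\bbE\norm{P(\widehat u-u^\star)}^2\ge (\bbE\operatorname{rank}P)^2/(17d\,\bbE\operatorname{rank}P)$. This yields the desired bound on $\Pr(\calS_n^c)$ without any per-event conditioning.
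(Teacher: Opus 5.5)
There is a genuine gap, and it is the same step in both of your routes: bounding the relevant information by $16dI_d+\overline{\calI}_n^u$. The mean is nonlinear in $u^\star$, with Hessian $B_t\triangleq\nabla_u^2 f_{\theta^\star}(a_t)=\eps p_t(1-p_t)\aone_t{\aone_t}^\top$. Integrating by parts under the joint posterior of $(W^\star,u^\star)$ therefore gives, for the joint score $S$,
\begin{equation*}
  \bbE[SS^\top\mid\calH_n]=16dI_d+\overline{\calI}_n^u-\overline{\mathcal C}_n^u,
  \qquad
  \overline{\mathcal C}_n^u=\bbE\Big[\sum_{t=1}^n Z_tB_t\,\Big|\,\calH_n\Big].
\end{equation*}
For the marginal posterior one only gets $J(\rho_{n,u})\preceq 16dI_d+\overline{\calI}_n^u-\overline{\mathcal C}_n^u$. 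The identity $\bbE[SS^\top]=16dI_d+\bbE[\calI_n^u]$ that you invoke holds unconditionally, but it does not survive conditioning on $\calH_n$.

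In route (a), the pathwise claim $J(\rho_{n,u})\lesssim 16dI_d+\overline{\calI}_n^u$ is therefore unjustified, and the principle behind it is false for nonlinear means. Already for $y=u^2+Z$ with a Gaussian prior and $Z=-R$, the posterior is roughly $N(0,1/(2R))$. Its Fisher information is about $2R$, while the posterior-averaged likelihood information $\bbE[4u^2\mid y]$ is about $2/R$, so no constant absorbs the difference. In route (b), the step $\bbE\norm{PS}^2\le\bbE\Tr(P(\calI_n^u+16dI_d))$ is equivalent to $\bbE\bigl[\sum_t Z_t\Tr(PB_t)\bigr]\ge 0$. Since $P$ is a function of $r_1,\dots,r_n$, and hence of the noise, nothing forces this. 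The data-dependence of $P$ that you flagged is harmless in the numerator $\bbE\langle\widehat u-u^\star,PS\rangle=\bbE\Tr P$; it bites in the denominator.

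The missing ingredient, which is the core of the paper's proof, is a concentration bound for this noise-driven curvature term. Take the filtration that reveals $(W^\star,u^\star,\calH_{t-1},a_t)$ before $Z_t$. Relative to it, $B_t$ is predictable and $B_t^2\preceq\eps^2p_t^2\aone_t{\aone_t}^\top$. Tropp's matrix Laplace-transform supermartingale bound then gives $\bbE\Tr\exp(-\eta\mathcal C_n^u-\frac{\eta^2}{2}\calI_n^u)\le d$. Convexity of $M\mapsto\Tr e^M$ transfers this to the posterior averages. Markov's inequality with $\eta=1$ then yields $-\overline{\mathcal C}_n^u\preceq\frac12\overline{\calI}_n^u+\log(4d)I_d$ with probability at least $3/4$.

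Hence $J(\rho_{n,u})\preceq C(dI_d+\overline{\calI}_n^u)$ holds only on an event of probability $3/4$. Your route (a) goes through after intersecting with that event, giving $\Pr(\calS_n^c)\le\frac14+O(c_1)$. Route (b) can be repaired too. Set $X=-\overline{\mathcal C}_n^u-\frac12\overline{\calI}_n^u$; the inequality $\Tr(PX)\le d(\log\Tr e^X)_+$ gives $\bbE\Tr(P(-\overline{\mathcal C}_n^u))\le\frac12\bbE\Tr(P\overline{\calI}_n^u)+d\log(1+d)$. Whenever $\Pr(\calS_n^c)\ge1/2$ (so $\bbE\operatorname{rank}P\ge d/4$), this extra term is at most $4d\,\bbE\operatorname{rank}P$. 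Without one of these steps, however, the proposal does not prove the lemma.
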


\noindent  The key insight behind this lemma is that an estimator which localizes $u^\star$ to a constant-radius $\ell_2$ ball must acquire substantial
information about $u^\star$ along a constant fraction of directions. This is witnessed by the number of eigenvalues of $\overline{\calI}_n^u$ that are at
least of order $d$. More formally, the prior variance of each coordinate of $u^\star$ is of order $1/d$, so nontrivial estimation requires posterior-averaged likelihood Fisher information of order $d$ along a constant fraction of the directions. Since an effective query (with large $p_t$) contributes order $\eps^2$, reaching this threshold costs order $d/\eps^2$ effective queries per direction.


\paragraph{Posterior Fisher Information about $W^\star$.}
For any integer $m\ge1$ and smooth density $\nu$ on $\bbR^m$, let $J(\nu) \triangleq\bbE_{X\sim\nu}\left[ \nabla\log\nu(X)\nabla\log\nu(X)^\top \right] \in\bbR^{m\times m}$ denote the Fisher matrix of the density $\nu$. Let $\rho_{t,W}$ denote the marginal distribution of $W^\star$ given $\calH_t$, integrating out uncertainty in $u^\star$.

\medskip
\noindent We vectorize matrices row by row and identify gradients with respect to
$W\in\bbR^{d\times d}$ with gradients with respect to
$\operatorname{vec}(W)\in\bbR^{d^2}$.  For any
$M\in\bbR^{d^2\times d^2}$, write $M_{ii'}\in\bbR^{d\times d}$ for the block
indexed by output coordinates $i,i'\in\{1,\ldots,d\}$.  Define
\begin{equation*}
    \Trout(M)
    \triangleq\sum_{i=1}^dM_{ii}\in\bbR^{d\times d},
    \qquad
    [\Trout(M)]_{jj'}
    =\sum_{i=1}^dM_{(i,j),(i,j')}.
\end{equation*}
The posterior Fisher matrix of $W^\star$ is
$J(\rho_{t,W})\in\bbR^{d^2\times d^2}$.  For any density $\rho$ on
$\bbR^{d \times d}$, define its \emph{total input Fisher information} matrix and the resulting posterior process by
\begin{equation} \label{eq:Kt}
    K(\rho)\triangleq\frac1{d^2}\Trout(J(\rho))\in\bbR^{d\times d},
    \qquad
    K_t \triangleq K(\rho_{t,W}).
\end{equation}
The matrix $K_t\succeq0$ is indexed by input directions of $W^\star$, and the
normalization gives $K_0=I_d$.  For a unit vector $v\in\bbR^d$,
\begin{equation*}
    v^\top K_t v
    =\frac1{d^2}\sum_{i=1}^d
      \bbE_{\rho_{t,W}}\left[
        \left\langle
          \nabla_{W_{i,:}}\log\rho_{t,W}(W),v
        \right\rangle^2
      \right].
\end{equation*}
The $i$th summand captures directional information about the $i$th
coordinate $(W^\star v)_i$; up to scaling $v^\top K_t v$ is the total
information about the vector $W^\star v$, summed across its $d$
coordinates.  Since the partial trace preserves the trace,
$\Tr(K_t)=d^{-2}\Tr(J(\rho_{t,W}))$ is the total posterior Fisher
information about $W^\star$ up to scaling.

\medskip
\noindent It is clear that $K_t \succeq 0$, and $K_0= I_d$ due to the Gaussian prior of $W^\star$ in \cref{eq:gaussian-priors}. The following lemma summarizes additional properties of $K_t$. 
\begin{lemma}[Properties of $K_t$]\label{lem:Kt-property}
For $d\ge 3$, the following properties hold for $K_t$: 
\begin{enumerate}
    \item $(K_t)_{t\ge 0}$ is a submartingale, i.e. $K_t\preceq \bbE[K_{t+1} \mid \calH_t]$. 
    \item the trace of $K_n$ is upper bounded in expectation: 
    \begin{align}\label{eq:info-constraint}
        \bbE[\Tr(K_n)] \le d + \frac{\varepsilon^2 n}{r^2d^2}. 
    \end{align}
    \item the quantity ${\aone_t}^\top K_{t-1}^{-1} a_t^1$ measures the posterior uncertainty of $W^\star a_t^1$ and controls the weight $p_t$ on the second branch: there exists a universal constant $C>0$ such that
    \begin{align}\label{eq:anti-concentration}
    {\aone_t}^\top K_{t-1}^{-1} a_t^1\cdot \bbE[p_t^2 \Ind_{\|u^\star\|\le 1/2} \mid \calH_{t-1}, a_t] \le Cr^2. 
    \end{align}
\end{enumerate}
\end{lemma}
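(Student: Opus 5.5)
We prove the three properties in order. Each one rests on a separate standard fact: the posterior Fisher information is convex in the density, Fisher information grows by the expected observed information along a Gaussian-noise channel, and a posterior Fisher bound yields a posterior anticoncentration bound.

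\emph{Submartingale property.} The prior-level posterior $\rho_{t,W}$ is the mixture $\bbE[\rho_{t+1,W}\mid\calH_t]$, where the average runs over the next action and observation. The map $\rho\mapsto J(\rho)$ is jointly convex in the Loewner sense. To see this, write $v^\top J(\rho)v=\int (v^\top\nabla\rho)^2/\rho$, which is a perspective of a quadratic. Applying Jensen gives $J(\rho_{t,W})\preceq\bbE[J(\rho_{t+1,W})\mid\calH_t]$. The linear map $M\mapsto d^{-2}\Trout(M)$ preserves the Loewner order, so $K_t\preceq\bbE[K_{t+1}\mid\calH_t]$.

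\emph{Trace bound.} Because the posterior is a Bayes update with Gaussian noise, we use the standard identity for the expected Fisher information increment. Write $\rho_{t}\propto\rho_{t-1}\cdot \bbE_{u}[\phi(r_t-f_\theta(a_t))]$. The chain rule for Fisher information (data processing in the reverse direction) gives
\[
\bbE\Tr J(\rho_{t,W})-\bbE\Tr J(\rho_{t-1,W})\le \bbE\|\nabla_W f_{\theta^\star}(a_t)\|_F^2 .
\]
In words, the increase is at most the expected likelihood Fisher information about $W$ from one observation with unit noise. This is the step where I would invoke the Fisher-information decomposition $J(\text{posterior})=\bbE[\text{prior score}+\text{likelihood score}]$. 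The cross terms vanish by the martingale property of scores, and marginalizing over $u^\star$ only decreases information. Now $\nabla_W f=\eps(1-p_t)\,\frac1r\,\frac{g\,\aone_t{}^\top}{\|g\|}$ with $g=W^\star\aone_t-\frac r{8\eps}\atwo_t$, so $\|\nabla_W f\|_F^2\le \eps^2\|\aone_t\|^2/r^2\le\eps^2/r^2$. Summing over $n$ rounds and dividing by $d^2$, together with $\Tr K_0=d$, gives \eqref{eq:info-constraint}.

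\emph{Anticoncentration.} This is the main obstacle. Fix $x=\aone_t$ and $y=\frac r{8\eps}\atwo_t$. On $\|u^\star\|\le 1/2$ we have $f_t^2\le O(1)$, so $p_t\lesssim e^{-f^1_t}=e^{-\|W^\star x-y\|/r}$. It therefore suffices to show
\[
\bbE_{\rho_{t-1,W}}\bigl[e^{-2\|Wx-y\|/r}\bigr]\lesssim r^2/(x^\top K_{t-1}^{-1}x).
\]
Set $Z=W^\star x\in\bbR^d$. Its law $\mu$ satisfies the Fisher projection inequality $\Tr J(\mu)\le d^2\,\bigl(x^\top K_{t-1}^{-1}x\bigr)^{-1}$ up to constants. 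This follows from the standard bound $J(\text{linear image})\preceq (A J^{-1}A^\top)^{-1}$ applied with the map $W\mapsto Wx$, whose coordinates decouple across output rows $i$. Because $K$ aggregates over rows, I would first bound a row-averaged quantity and then use convexity. The quantity needed reduces to the following uncertainty principle for a density $\mu$ on $\bbR^d$ with $d\ge3$: for every $y$,
\[
\bbE_\mu[h(Z-y)]\lesssim r^2\,\Tr J(\mu)/d^2,\qquad h(z)=e^{-2\|z\|/r}.
\]
I would prove this with a Hardy-type inequality. Choose a vector field $\Phi$ with $\nabla\cdot\Phi\ge c\,d\,h/r^2\cdot(\text{scale})$, for example $\Phi(z)=z/\max(\|z\|,r)^2$, whose divergence is $\asymp d/\max(\|z\|,r)^2\ge d\,h/r^2$. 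Integration by parts then gives
\[
\int \mu\,\nabla\!\cdot\!\Phi=-\int \mu\langle\nabla\log\mu,\Phi\rangle\le \sqrt{\Tr J(\mu)}\sqrt{\bbE\|\Phi\|^2}.
\]
Using $\|\Phi\|^2\le 1/\max(\|z\|,r)^2\lesssim \nabla\!\cdot\!\Phi/d$ closes the inequality, yielding $\bbE[\nabla\cdot\Phi]\lesssim \Tr J(\mu)/d$ and hence $\bbE h\lesssim r^2\Tr J(\mu)/d^2$. The condition $d\ge3$ is what makes the divergence positive with the factor $d-2$. Combining this with the projection inequality gives \eqref{eq:anti-concentration}. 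The delicate point I expect is matching the normalizations between $K$ (with its $1/d^2$ factor and its aggregation over rows) and $J(\mu)$, which I would handle by Cauchy–Schwarz across rows.
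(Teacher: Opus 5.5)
Your first two parts follow the paper. For the submartingale you use convexity of $J$ under the posterior martingale. For the trace bound you use the prior-plus-likelihood score decomposition, the vanishing cross term, Jensen over $u^\star$, and $\|\nabla_{W^\star}f_{\theta^\star}(a_t)\|_F\le\eps/r$.

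Your anticoncentration step takes a genuinely different and valid route. The paper first proves the small-ball bound $\sup_z\bbP(\|Z-z\|\le s)\le Cs^2\Tr(J(f))/d^2$ via H\"older and Talenti's sharp Sobolev inequality. It then handles $p_t^2$ by a dyadic decomposition over $f_t^1$. You instead bound $p_t^2\Ind_{\|u^\star\|\le1/2}\le e^{2-2f_t^1}$ pointwise and control the radial weight by a Hardy inequality, and this checks out:
\begin{itemize}
  \item $\Phi(z)=z/\max(\|z\|,r)^2$ is continuous and bounded;
  \item it satisfies $\nabla\cdot\Phi\ge(d-2)/\max(\|z\|,r)^2\ge(d-2)\|\Phi\|^2$;
  \item $e^{-2\|z\|/r}\le r^2/\max(\|z\|,r)^2$.
\end{itemize}
Integration by parts and Cauchy--Schwarz then give $\bbE_\mu[e^{-2\|Z-y\|/r}]\le r^2\Tr J(\mu)/(d-2)^2\le 9r^2\Tr J(\mu)/d^2$.

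Your route is more elementary: it needs no sharp Sobolev constant, gives explicit constants, and skips the dyadic step. Applying Markov to $s^2/\max(\|Z-z\|,s)^2$ also recovers the paper's small-ball bound at every scale $s$, which is what the regret bottleneck later uses. The Sobolev route gains in a different direction. It gives an $L^{d/(d-2)}$ bound on the density, hence $\bbP(Z\in A)\lesssim\mathrm{vol}(A)^{2/d}\Tr J/d$ for arbitrary sets $A$, not only balls.

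Two steps need tightening.

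\emph{The per-round trace increment.} The inequality $\bbE\Tr J(\rho_{t,W})-\bbE\Tr J(\rho_{t-1,W})\le\bbE\|\nabla_{W^\star}f_{\theta^\star}(a_t)\|_F^2$ is not what the decomposition yields. Write $M_{t-1}=\sum_{s<t}Z_s\nabla_{W^\star}f_{\theta^\star}(a_s)$. Because $u^\star$ is marginalized, the $W$-score increment at round $t$ also contains the revision $\bbE[M_{t-1}\mid\calH_t,W^\star]-\bbE[M_{t-1}\mid\calH_{t-1},W^\star]$ of earlier scores. A later round that localizes $u^\star$ can unlock $W$-information from earlier rounds, so the per-round claim need not hold. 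Apply the decomposition once to the full history and use Jensen on $\bbE[M_n\mid\calH_n,W^\star]$, as the paper does; only the cumulative bound is needed.

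\emph{The projection step.} The rows do not decouple, since the posterior couples them. You also need neither constants nor Cauchy--Schwarz across rows. Use the linear-image bound in its uninverted form, $A^\top J(\mu)A\preceq J(\rho_{t-1,W})$ with $A=I_d\otimes x^\top$. This is exactly the paper's conditional Jensen applied to $\bbE[\nabla_W\log\rho_{t-1,W}(W)\mid Z]=\nabla\log\mu(Z)\,x^\top$. Taking the output partial trace gives $\Tr(J(\mu))\,xx^\top\preceq d^2K_{t-1}$, i.e.\ $\Tr J(\mu)\le d^2/(x^\top K_{t-1}^{-1}x)$ exactly.
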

\noindent In \Cref{lem:Kt-property}, the submartingale property is unsurprising as $\bbE[K_t]$ is an information matrix. However, the reader should note that $K_t \preceq K_{t+1}$ does \emph{not} hold pathwise due to the randomness in the new observation which updates the posterior; this turns out to be a technical issue and will be addressed below. \cref{eq:info-constraint} is a usual upper bound on the information accumulation for $W^\star$, and shows that each round can accumulate $O(\frac{\varepsilon^2}{r^2d^2})$ amount of information. \cref{eq:anti-concentration} is the key motivation behind the introduction of $K_t$. Intuitively, it shows that $\delta^2 := {\aone_t}^\top K_{t-1}^{-1} a_t^1$ is the remaining posterior uncertainty of $W^\star a_t^1$ at the beginning of time $t$. If $\delta \ge r$, then the learner cannot reliably choose $a_t^2$ to ensure a small tube component $f_t^1$, and the probability of success (i.e. to arrive at the target branch) is of order $O(r^2/\delta^2)$, formalizing the heuristic in \cref{eq:tube-proof-heuristic}. Technically, \eqref{eq:anti-concentration} establishes an anti-concentration result of $W^\star a_t^1$ in terms of its Fisher information, and its proof uses a sharp Euclidean Sobolev inequality in \cite{talenti1976}. 

\paragraph{Information Bottleneck for Learning $u^\star$ through $W^\star$.} Based on \Cref{lem:posterior-target-spread} and \Cref{lem:Kt-property}, we can readily see why learning $u^\star$ is hard. \emph{Assume} for now that $K_t\preceq K_n$ for all $t\in [n]$, and ignore the high-probability indicator $\Ind_{\|u^\star\|\le 1/2}$ in \cref{eq:anti-concentration}. Summing \cref{eq:anti-concentration} over $t=1,\dots,n$ and using the definition of $\overline{\calI}_n^u $ in  \cref{eq:target-likelihood-fisher}, we have
\begin{align}\label{eq:tension-intuition}
\bbE[\Tr(\overline{\calI}_n^u K_n^{-1})] \lesssim n\varepsilon^2 r^2. 
\end{align}
Intuitively, \cref{eq:tension-intuition} means that learning $u^\star$ (i.e. a large $\overline{\calI}_n^u$) requires learning $W^\star$ (i.e. a large $K_n$). However, \cref{eq:info-constraint} shows that the total information for learning $W^\star$ is limited. As a consequence, the matrix $\overline{\calI}_n^u$ cannot be large along too many directions, which by \Cref{lem:posterior-target-spread} shows that an accurate estimation of $u^\star$ is impossible. 

\medskip
\noindent To formalize the intuition in \cref{eq:tension-intuition}, a technical difficulty is that $K_t \preceq K_n$ does not hold pathwise. The next lemma addresses this issue by introducing an orthogonal projection matrix $Q_n$. 

\begin{lemma}[Information bottleneck for estimation complexity]\label{lem:tension-estimation}
Let $\calA=\bbB^{2d}$ and $d\ge 3$. For every horizon $n$, there exist universal constants $c, C>0$ and an $\calH_n$-measurable orthogonal projection $Q_n\in \bbR^{d\times d}$ such that $\bbE[\Tr(I_d - Q_n)] \le \frac{d}{16}$ and
\begin{align}
    \bbE[\Tr(\overline{\calI}_n^u Q_n)] &\le C n\varepsilon^2(r^2+e^{-cd})\cdot \left( 1+\frac{\varepsilon^2 n}{r^2d^3}\right). \label{eq:tension}
\end{align}
\end{lemma}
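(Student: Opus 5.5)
The plan is to take $Q_n$ to be a spectral projection that discards the directions in which the learner has accumulated an atypically large amount of information about $W^\star$. On the remaining directions we then convert $a^\top Q_n a$ into $a^\top K_{t-1}^{-1}a$ up to a factor $\lambda$, and apply \cref{eq:anti-concentration}. Concretely, fix the threshold
\begin{equation*}
    \lambda \triangleq 32\Bigl(1+\frac{\eps^2 n}{r^2d^3}\Bigr).
\end{equation*}
Let $M_n\succeq 0$ be an $\calH_n$-measurable matrix dominating the path, i.e.\ $M_n\succeq K_{t-1}$ for every $t\le n$, whose expected trace is still $O(d+\eps^2n/(r^2d^2))$. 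Set $Q_n$ to be the projection onto the span of the eigenvectors of $M_n$ with eigenvalue at most $\lambda$.

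The first bound is Markov's inequality on eigenvalues: $\Tr(I_d-Q_n)\le \Tr(M_n)/\lambda$. With the expected-trace bound on $M_n$ (obtained from \cref{eq:info-constraint}), this gives
\begin{equation*}
    \bbE[\Tr(I_d-Q_n)]\le \frac{C\bigl(d+\eps^2n/(r^2d^2)\bigr)}{\lambda}\le \frac d{16}
\end{equation*}
once the constant in $\lambda$ is large enough.

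For \cref{eq:tension}, start from $\Tr(\overline{\calI}_n^uQ_n)=\bbE[\Tr(\calI_n^u Q_n)\mid\calH_n]$ and the tower property to get
\begin{equation*}
    \bbE[\Tr(\overline{\calI}_n^uQ_n)]=\eps^2\sum_t\bbE\bigl[p_t^2\,{\aone_t}^\top Q_n\aone_t\bigr].
\end{equation*}
Since $Q_n$ lives on directions where $M_n\preceq\lambda I$ and $M_n\succeq K_{t-1}$, we have $Q_n\preceq \lambda M_n^{-1}$ restricted to that range. Hence ${\aone_t}^\top Q_n\aone_t\le \lambda\, {\aone_t}^\top K_{t-1}^{-1}\aone_t$, since $K\preceq M$ implies $M^{-1}\preceq K^{-1}$ on the relevant subspace. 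Moreover ${\aone_t}^\top K_{t-1}^{-1}\aone_t$ is $\calH_{t-1}\vee\sigma(a_t)$-measurable, so conditioning and \cref{eq:anti-concentration} give $\eps^2\lambda Cr^2$ per round on the event $\{\|u^\star\|\le 1/2\}$. On the complement we use $p_t\le1$ and $\|\aone_t\|\le1$, together with the Gaussian tail $\Pr(\|u^\star\|>1/2)\le e^{-cd}$ for $u^\star\sim N(0,\frac1{16d}I_d)$. This contributes $\eps^2 e^{-cd}$ per round. One subtlety is that $Q_n$ depends on the future, so the inequality ${\aone_t}^\top Q_n\aone_t\le\lambda\,{\aone_t}^\top K_{t-1}^{-1}\aone_t$ must be used pathwise before conditioning. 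The $e^{-cd}$ term needs Cauchy--Schwarz or a bound with $Q_n\preceq I$, which is harmless. Summing over $t$ gives $Cn\eps^2(r^2+e^{-cd})\lambda$, which is the claim.

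The main obstacle is constructing $M_n$. As the paper stresses, $K_t\preceq K_n$ fails pathwise, so $M_n=K_n$ does not directly work. Doob's maximal inequality does not apply to matrix-valued submartingales in the Loewner order. My first attempt would be to use the scalar submartingales $v^\top K_t v$ together with the trace submartingale $\Tr(K_t)$. Concretely, I would take $M_n$ built from a stopped or maximal version such as $\sup_{t}\Tr(K_t)\cdot(\text{projection})$, and control $\bbE\sup_t\Tr(K_t)$ by Doob's $L^1$ (or $L\log L$) inequality. If this loses too much, I would fall back on an alternative that avoids pathwise domination: define $Q_n$ through the eigenvalues of $K_n$ directly. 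I would then compare $\bbE[{\aone_t}^\top K_n^{-1}\aone_t\mid\calH_{t-1},a_t]$ with ${\aone_t}^\top K_{t-1}^{-1}\aone_t$ using the submartingale property. The point is that the losses from the spectral cut can be absorbed into the factor $1+\eps^2n/(r^2d^3)$ in $\lambda$. I expect this maximal/comparison step to be the only genuinely delicate part.
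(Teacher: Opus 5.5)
\textbf{There is a genuine gap: the matrix $M_n$ is never constructed, and neither suggested route can produce it.}

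Your reduction from a pathwise-dominating matrix $M_n$ to the two bounds is sound. The eigenvalue-counting bound, the pathwise inequality ${\aone_t}^\top Q_n\aone_t\le\lambda\,{\aone_t}^\top K_{t-1}^{-1}\aone_t$ used before conditioning, and the treatment of $\{\|u^\star\|>1/2\}$ all match the paper. But the existence of $M_n$ is the whole content of the lemma.

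A strong-type bound $\bbE\Tr(M_n)\lesssim\bbE\Tr(K_n)$ with $M_n\succeq K_{t-1}$ for all $t$ is not available even for scalar nonnegative submartingales. Doob gives only the weak-type bound $\Pr(\max_{t\le n}X_t\ge\lambda)\le\bbE X_n/\lambda$, and the $L\log L$ version needs moment control that \cref{eq:info-constraint} does not supply.

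Your concrete candidate $\sup_t\Tr(K_t)\cdot P$ fails for a more basic reason. It must dominate $K_0=I_d$, which forces $P=I_d$. Then every eigenvalue of $M_n$ is at least $d$, so $\Tr(M_n)\ge d^2$, off by a factor of $d$. Where the lemma is used, $\lambda=O(1)<d$, so your spectral cut gives $Q_n=0$.

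The fallback, thresholding the spectrum of $K_n$ and comparing $\bbE[{\aone_t}^\top K_n^{-1}\aone_t\mid\cdot\,]$ with ${\aone_t}^\top K_{t-1}^{-1}\aone_t$, also fails. The submartingale property bounds $K_n$ from below only in conditional mean. Operator convexity of inversion gives $\bbE[K_n^{-1}\mid\calH_{t-1}]\succeq(\bbE[K_n\mid\calH_{t-1}])^{-1}$, which is the wrong direction. Concretely, take a direction $v$ already well learned at time $t-1$, so \cref{eq:anti-concentration} allows $p_t\approx 1$ with $\aone_t$ along $v$. Its $K$-eigenvalue can dip below the threshold by time $n$, so $v$ is retained in $Q_n$. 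Since $Q_n$ and $p_t$ are both correlated with the observations, nothing controls these contributions. The factor $1+\eps^2n/(r^2d^3)$ comes only from $\bbE\Tr(K_n)/d$ and cannot absorb them.

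The missing idea is to replace pathwise domination of $K_t$ by a monotone, stopping-time construction of the projections. This is a matrix version of the weak-type Doob argument.
\begin{itemize}
\item Set $Q_0=I_d$, and let $Q_t$ be the spectral projection of $Q_{t-1}K_tQ_{t-1}$ onto eigenvalues in $(0,\Lambda]$. Each direction is discarded the first time it becomes well learned.
\item Then $Q_n\preceq Q_{t-1}$ and $Q_{t-1}K_{t-1}Q_{t-1}\preceq\Lambda Q_{t-1}$ give, by Cauchy--Schwarz, ${\aone_t}^\top Q_n\aone_t\le{\aone_t}^\top Q_{t-1}\aone_t\le\Lambda\,{\aone_t}^\top K_{t-1}^{-1}\aone_t$ pathwise. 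Here $Q_{t-1}$ is $\calH_{t-1}$-measurable, so the conditioning step goes through.
\item For the codimension, $\Lambda\Tr(Q_{t-1}-Q_t)\le\Tr(K_t(Q_{t-1}-Q_t))$. The submartingale property, applied against the $\calH_t$-measurable positive semidefinite matrix $Q_{t-1}-Q_t$, gives $\bbE\Tr(K_t(Q_{t-1}-Q_t))\le\bbE\Tr(K_n(Q_{t-1}-Q_t))$. This telescopes to $\bbE\Tr(I_d-Q_n)\le\Lambda^{-1}\bbE\Tr(K_n)$.
\end{itemize}
With $\Lambda=16\bigl(1+\eps^2n/(r^2d^3)\bigr)$, the rest of your argument then goes through unchanged.
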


\noindent Recall that an orthogonal projection matrix $Q$ satisfies $Q=Q^\top$ and $Q^2 = Q$, and therefore has all eigenvalues $0$ or $1$ and satisfies $\Tr(Q) = \mathrm{rank}(Q)$. To see how \Cref{lem:tension-estimation} formalizes the intuition in \cref{eq:tension-intuition}, note that the RHS of \cref{eq:tension} is essentially the RHS of \cref{eq:tension-intuition} (with an additional exponential term $e^{-cd}$ that accounts for the low probability event $\norm{u^\star}>1/2$) times the upper bound of $\Tr(K_n)/d$ in \cref{eq:info-constraint}.  

\medskip
\noindent We also briefly comment on how $Q_n$ is constructed in the proof of \Cref{lem:tension-estimation}. Initialize $Q_0=I_d$. Suppose $Q_{t-1}$ is constructed and $K_t$ is computed at the end of time $t$, perform the eigendecomposition of $K_t$ restricted to the range of $Q_{t-1}$ (or equivalently, the eigendecomposition of $Q_{t-1}K_tQ_{t-1}$). We remove all eigendirections whose eigenvalue exceeds a given factor $\Lambda$ from $Q_{t-1}$ to form $Q_t$. In this way, we arrive at a sequence $Q_0 \succeq Q_1 \succeq \dots \succeq Q_n$, and argue from \cref{eq:info-constraint} that for an appropriately chosen $\Lambda$, there are still many remaining directions in $Q_n$, i.e. $\bbE[\Tr(I_d - Q_n)] \le \frac{d}{16}$. Moreover, this construction ensures that $Q_tK_tQ_t\preceq \Lambda Q_t$ for every $t$, and as a consequence, it holds that
\begin{align*}
{\aone_t}^\top K_{t-1}^{-1} a_t^1 \ge \frac{1}{\Lambda}\|Q_{t-1}a_t^1\|^2 \ge \frac{1}{\Lambda} {\aone_t}^\top Q_n a_t^1. 
\end{align*}
Therefore, by moving from $K_t$ to $Q_t$, we overcome the issue that $K_t$ may not be pathwise increasing. Using the above inequality, summing \cref{eq:anti-concentration} over $t=1,\dots,n$, and plugging in the choice of $\Lambda$, we will arrive at the target inequality \cref{eq:tension}. 

\medskip
\noindent To establish the regret lower bound, we make a slight modification to \Cref{lem:tension-estimation}. 

\begin{lemma}[Information bottleneck for regret]\label{lem:tension-regret}
Let $\calA=\bbB^{2d}$ and $d\ge 3$, and suppose a learner achieves an expected regret $\mathfrak{R}_n$ averaged over the randomness in $(W^\star,u^\star,r_1,\dots,r_n)$ over a time horizon $n$. There exist universal constants $c, C>0$ and an $\calH_n$-measurable orthogonal projection $Q_n\in \bbR^{d\times d}$ such that $\bbE[\Tr(I_d - Q_n)] \le \frac{d}{16}$ and
\begin{align}
    \bbE[\Tr(\overline{\calI}_n^u Q_n)] \le Cn\varepsilon^2(r^2+e^{-cd})\cdot \left( 1+\frac{\eps^2 \mathfrak{R}_n^2}{r^2 d^6}\right). \label{eq:tension-2}
\end{align}
\end{lemma}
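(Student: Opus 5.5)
The approach is to rerun the proof of \Cref{lem:tension-estimation} with one change: the bound \cref{eq:info-constraint} on $\bbE[\Tr(K_n)]$ is replaced by a regret-dependent bound. The construction of $Q_n$ stays the same. Start from $Q_0=I_d$, and at each round remove from the range of $Q_{t-1}$ every eigendirection of $Q_{t-1}K_tQ_{t-1}$ whose eigenvalue exceeds $\Lambda$. This keeps $Q_tK_tQ_t\preceq\Lambda Q_t$ and hence ${\aone_t}^\top K_{t-1}^{-1}\aone_t\ge \Lambda^{-1}{\aone_t}^\top Q_n\aone_t$. Summing \cref{eq:anti-concentration} over $t$, handling the event $\norm{u^\star}>1/2$ through the $e^{-cd}$ term, and using $\norm{\aone_t}\le1$ gives
\[
\bbE[\Tr(\overline{\calI}_n^u Q_n)]\le Cn\eps^2(r^2+e^{-cd})\Lambda .
\]
As in the estimation lemma, the submartingale property and Doob-type control of the removed directions give $\bbE[\Tr(I_d-Q_n)]\lesssim \sup_t\bbE[\Tr(K_t)]/\Lambda$. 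The whole task therefore reduces to showing
\[
\bbE[\Tr(K_n)]\le d+C\frac{\eps^2\mathfrak R_n^2}{r^2d^5},
\]
after which we take $\Lambda\asymp 1+\eps^2\mathfrak R_n^2/(r^2d^6)$ with a constant large enough that $\bbE[\Tr(I_d-Q_n)]\le d/16$.

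To prove the regret-dependent trace bound, I would go back to the per-round information increment behind \cref{eq:info-constraint}. In round $t$, the growth of $\Tr(J(\rho_{t,W}))$ is at most the likelihood Fisher information about $W^\star$ carried by $r_t$, namely $\eps^2(1-p_t)^2\norm{\nabla_W f_t^1}^2\lesssim \frac{\eps^2}{r^2}\norm{\aone_t}^2$. Dividing by $d^2$ gives $\Tr(K_n)-\Tr(K_0)\lesssim \frac{\eps^2}{r^2d^2}\sum_t\norm{\aone_t}^2$. The next step is to bound the instantaneous regret from below by $\norm{\aone_t}$. Because $f_{\theta^\star}$ is a softmax of two branches, each bounded below, the regret at round $t$ is at least of order $\eps\,(f_t^1\vee f_t^2-\min)$ minus constants. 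Along a fresh direction, the tube branch satisfies $f_t^1\gtrsim \norm{W^\star\aone_t-\tfrac{r}{8\eps}\atwo_t}/r$, which should be of order $\delta_t/r$. Since $W^\star$ is unknown, its posterior spread $\delta_t^2$ in direction $\aone_t$ is of order $\norm{\aone_t}^2/d$ while that direction is still uninformed. I would combine this with Cauchy--Schwarz in a form where the regret controls $\sum_t\norm{\aone_t}$ and $n$ stays absent. My first attempt is to show the pathwise-in-expectation inequality $\bbE[\text{regret}_t\mid\calH_{t-1}]\gtrsim \eps\,\bbE[\norm{\aone_t}]\cdot\sqrt{d}\,\cdot(\text{info per unit})$, which gives $\sum_t\norm{\aone_t}^2\lesssim(\sum_t\norm{\aone_t})^2\lesssim\mathfrak R_n^2/(\eps^2d^3)$.

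A cleaner route for the same step is to bound the information increment directly by the regret instead of by $\norm{\aone_t}^2$. The score of $r_t$ with respect to $W$ is large only when the derivative of the tube branch is nonnegligible. When $\aone_t$ has norm $s$, the posterior-predictive spread of $W^\star\aone_t$ is about $s$, so a generic query sits at tube distance $\asymp s/r\cdot$(constant) and pays regret $\gtrsim \eps s\cdot\frac{1}{r}\cdot r$ after accounting for the $8\eps/r$ scaling. Making this precise through the anticoncentration in \cref{eq:anti-concentration} should yield $\norm{\aone_t}\lesssim \text{regret}_t\cdot\frac{\sqrt{d}}{\eps}\cdot\frac{1}{d^2}$ up to the needed normalization. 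Squaring and summing then gives the target. I am least certain about the exact powers of $d$, and I would fix them so that the budget-matching heuristic from \Cref{subsec:technical-novelty} is reproduced: for $r\asymp\eps$, the $W$-information cost $d^3/E^2$ should correspond to regret $d^3/E$.

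The main obstacle is this regret-to-information conversion. The regret is a global loss while the information is local, and the learner can shrink $\aone_t$ toward zero to reduce both at once. The argument therefore has to show that information about $W^\star$ is bought at a price linear in $\norm{\aone_t}$ relative to regret, and it must do so uniformly even though $W^\star$ is random with only a posterior. If the conversion comes only as a sum over $t$, then a Cauchy--Schwarz step is unavoidable. In that case I would check that it gives $(\sum_t\text{regret}_t)^2$ and not $n\sum_t\text{regret}_t^2$, since the latter would reintroduce $n$ and destroy the improvement over \Cref{lem:tension-estimation}.
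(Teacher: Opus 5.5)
Your outer scaffolding matches the paper: the same peeling construction of $Q_t$, the bound $\bbE[\Tr(\overline{\calI}_n^uQ_n)]\le Cn\eps^2(r^2+e^{-cd})\Lambda$, and the final choice of $\Lambda$. The gap is in the reduction you rely on. The inequality $\bbE[\Tr(K_n)]\le d+C\eps^2\mathfrak R_n^2/(r^2d^5)$ is false for some learners, for any universal $C$.

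The per-round likelihood information about $W^\star$ is $\frac{\eps^2}{r^2}(1-p_t)^2\norm{\aone_t}^2$. It does \emph{not} vanish near the center of the tube, because $\nabla_W\norm{W\aone-z}$ has Frobenius norm $\norm{\aone}$ whenever $W\aone\neq z$. Consider a learner that does the following:
\begin{itemize}
\item it fixes $\aone_t=x$ with $\norm{x}$ a small constant;
\item it localizes $W^\star x$ to accuracy $r$, at regret $O(d^2\log(edn)/\eps)$ by the \texttt{TubeRefine} bounds with $E=r$;
\item it then probes second blocks at tube distance $\asymp r$ in varying directions.
\end{itemize}
This learner pays $O(\eps)$ regret per round. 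Yet $1-p_t$ stays bounded below, so it keeps gaining information of order $\eps^2/r^2$ per round about $W^\star x$, spread over many output directions. Thus $\bbE[\Tr(K_n)]-d\gtrsim\eps^2n/(r^2d^2)$ while $\mathfrak R_n=O(n\eps)$.

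In the paper's own regime ($n>d^{10/3}$, $\eps\asymp d^{4/3}n^{-1/2}$, $r=16\eps$), the left side is $\gtrsim n/d^2$ and your right side is $d+O(n/d^{7/3})$. Information about already-learned directions is cheap, so it cannot be charged to regret. No choice of powers of $d$ in your per-round conversion repairs this. Separately, on a fresh direction the relevant spread of $W^\star\aone_t$ is $\norm{\aone_t}$, not $\norm{\aone_t}/\sqrt d$: \cref{eq:basic-ineq} with $K_{t-1}=I_d$ gives $\bbP(\norm{W^\star\aone_t-z}\le s)\lesssim s^2/\norm{\aone_t}^2$.

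The missing idea is to charge to regret only the information gathered along directions that are still unlearned when the query is made. The paper sets
$\widetilde S_t(W^\star)=\sum_{s\le t}\bbE[Z_s\nabla_{W^\star}f_{\theta^\star}(a_s)\mid\calH_t,W^\star]\,Q_{s-1}$ and $\widetilde K_t=2I_d+d^{-2}\bbE[\widetilde S_t^\top\widetilde S_t\mid\calH_t]$. It then shows three things:
\begin{itemize}
\item $\widetilde K_t$ is a submartingale.
\item $Q_{t-1}K_tQ_{t-1}\preceq Q_{t-1}\widetilde K_tQ_{t-1}$, using $K_t\preceq2I_d+d^{-2}\bbE[S_t^\top S_t\mid\calH_t]$ for the relative score $S_t$ and $\widetilde S_tQ_{t-1}=S_tQ_{t-1}$. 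This is all the peeling step needs, so $\bbE[\Tr(I_d-Q_n)]\le\Lambda^{-1}\bbE[\Tr(\widetilde K_n)]$.
\item $\bbE[\Tr(\widetilde K_n)]\le2d+\frac{\eps^2}{r^2d^2}\sum_t\bbE\norm{Q_{t-1}\aone_t}^2$.
\end{itemize}

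These increments are exactly what regret pays for. With $D_t=\norm{W^\star\aone_t-\frac{r}{8\eps}\atwo_t}$, \cref{eq:basic-ineq} and Markov give $\bbE[D_t\mid\calH_{t-1},a_t]\gtrsim({\aone_t}^\top K_{t-1}^{-1}\aone_t)^{1/2}\ge\Lambda^{-1/2}\norm{Q_{t-1}\aone_t}$. Hence $\mathfrak R_n\gtrsim\frac{\eps}{r\sqrt\Lambda}\sum_t\bbE\norm{Q_{t-1}\aone_t}\ge\frac{\eps}{r\sqrt\Lambda}\sum_t\bbE\norm{Q_{t-1}\aone_t}^2$.

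No Cauchy--Schwarz step is needed. Information is quadratic and regret is linear in $\norm{Q_{t-1}\aone_t}\le1$, so shrinking $\aone_t$ only hurts the learner. This yields $\bbE[\Tr(I_d-Q_n)]\lesssim d/\Lambda+\eps\mathfrak R_n/(rd^2\sqrt\Lambda)$. The $\mathfrak R_n^2$ in $\Lambda\asymp1+\eps^2\mathfrak R_n^2/(r^2d^6)$ comes from solving this self-consistently in $\Lambda$, not from squaring a sum.
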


\noindent Compared with \Cref{lem:tension-estimation}, the main difference in \Cref{lem:tension-regret} is that the quantity $n$ in \cref{eq:tension} is replaced by a function of the regret $\mathfrak{R}_n$ in \cref{eq:tension-2}. Intuitively, such a replacement is obtained by the observation that the instantaneous regret can be lower bounded by a function of the tube error $\|W^\star a^1 - \frac{r}{8\varepsilon}a^2\|$. To arrive at the precise form of \cref{eq:tension-2}, in the proof we define a counterpart $\widetilde{K}_t$ of $K_t$ which only measures the information along the unlearned directions given by $Q_{t-1}$, and lower bound the regret $\mathfrak{R}_n$ through these unlearned directions. We leave these details to \Cref{sec:gaussian-sample-proof}.

\subsection{Estimation Lower Bound under the Gaussian Prior}

In this section, we prove a simplified version of \Cref{thm:convex-optimization-lower-bound} under the Gaussian prior in \cref{eq:gaussian-priors}, using the information bottleneck established in \Cref{lem:tension-estimation}. 

\begin{theorem}[Estimation Lower Bound under the Gaussian Prior]
\label{thm:gaussian-hidden-map-estimation}
Let $\calA = \bbB^{2d}$ and $d\ge 3$ be large enough. Under the prior in \cref{eq:gaussian-priors}, with $f_{\theta^\star}$ defined in
\cref{eq:intro-hidden-map-softmax}, there are universal constants $c, c'>0$ such that for every hyperparameter $(\eps,r)$ with $0<\eps<\frac{1}{2}$ and $r\ge \max\{16\eps,d^{-100}\}$, every adaptive learner that produces an
$\calH_n$-measurable action $\widehat{a}\in \calA$ with $\bbE[f_{\theta^\star}(\widehat{a})-f_{\theta^\star}(a_{\theta^\star})]\le c'\varepsilon$ must satisfy
\begin{equation*}
  n\ge c\min\left\{\frac{d^{5/2}}{\eps^2}, \frac{d^2}{\varepsilon^2 r^2}\right\}.
\end{equation*}
In particular, any choice of $r$ with $\max\{16\eps,d^{-100}\} \le r\le \max\{16\eps, d^{-1/4}\}$ gives an estimation lower bound
\begin{equation*}
  n\ge c\min\left\{\frac{d^{5/2}}{\eps^2}, \frac{d^2}{\varepsilon^4}\right\}.
\end{equation*}
\end{theorem}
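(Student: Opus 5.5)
The plan is to combine \Cref{lem:posterior-target-spread} with \Cref{lem:tension-estimation}. The first lemma says accurate estimation of $u^\star$ forces many large eigenvalues of $\overline{\calI}_n^u$. The second says $\overline{\calI}_n^u$ has small trace on a large random subspace. Together they force $n$ to be large.

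\textbf{Step 1: from suboptimality to estimation of $u^\star$.} Given $\widehat a=(\widehat a^1,\widehat a^2)$, set $\widehat u\triangleq\widehat a^1$. I will use the property recorded in \Cref{sec:construction-properties} that $f_{\theta^\star}(a)-f_{\theta^\star}(a_{\theta^\star})\gtrsim \eps\|\aone-u^\star\|^2$. A crude version is enough: the second branch alone gives the softmax value at least $\log(1+e^{f^2})-\log 2$, and this exceeds the minimum by order $\|\aone-u^\star\|^2$ because $\|u^\star\|\lesssim 1$ with high probability. From this,
\[
\bbE\|\widehat u-u^\star\|^2\lesssim c'+e^{-cd}.
\]
The $e^{-cd}$ term covers the event $\|u^\star\|>1/2$, where $\|\widehat u-u^\star\|^2$ is still controlled since $\widehat u\in\bbB^d$ and $u^\star$ is Gaussian. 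Choosing $c'$ small and $d$ large makes this at most $c_1$. \Cref{lem:posterior-target-spread} then supplies an event $\calS_n$ with $\Pr(\calS_n)\ge 1/2$ on which $\overline{\calI}_n^u$ has at least $d/2$ eigenvalues that are at least $d$.

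\textbf{Step 2: lower bound $\bbE[\Tr(\overline{\calI}_n^u Q_n)]$.} Take $Q_n$ from \Cref{lem:tension-estimation}. By Markov, $\Pr(\Tr(I-Q_n)>d/4)\le 1/4$, so with probability at least $1/4$ both $\calS_n$ holds and $\operatorname{rank}(Q_n)\ge 3d/4$. On that event, let $U$ be the span of the eigenvectors with eigenvalue at least $d$, so $\dim U\ge d/2$. Then $\dim(U\cap\operatorname{range}Q_n)\ge d/4$. For a projection $P$ onto this intersection, $P\preceq Q_n$ and $\overline{\calI}_n^u\succeq 0$, which gives
\[
\Tr(\overline{\calI}_n^u Q_n)\ge\Tr(P\overline{\calI}_n^u P)\ge d\cdot d/4.
\]
Hence $\bbE[\Tr(\overline{\calI}_n^uQ_n)]\ge d^2/16$.

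\textbf{Step 3: solve for $n$.} Combining with \cref{eq:tension} gives
\[
\frac{d^2}{16}\le Cn\eps^2(r^2+e^{-cd})\Bigl(1+\frac{\eps^2 n}{r^2d^3}\Bigr).
\]
Since $r\ge d^{-100}$ and $d$ is large, $e^{-cd}\le r^2$, so the first factor is at most $2r^2$. Now split on which term in the bracket dominates. If the $1$ dominates, then $n\gtrsim d^2/(\eps^2r^2)$. Otherwise, $n^2\eps^4/d^3\gtrsim d^2$, so $n\gtrsim d^{5/2}/\eps^2$. In both cases $n\ge c\min\{d^{5/2}/\eps^2,\,d^2/(\eps^2r^2)\}$.

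For the ``in particular'' part, take $r=\max\{16\eps,d^{-1/4}\}$ (or any $r$ in the stated range). If $r=16\eps$, the bound becomes $d^2/\eps^4$ up to constants. If $r=d^{-1/4}$, then $d^2/(\eps^2r^2)=d^{5/2}/\eps^2$. So $d^2/(\eps^2r^2)\gtrsim\min\{d^{5/2}/\eps^2,\,d^2/\eps^4\}$, which gives the claim.

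\textbf{Main obstacle.} I expect the main difficulty to be Step 1. It needs the quadratic-growth property of $f_{\theta^\star}$ in $\aone$ to hold uniformly over $\atwo$. It also needs the events where $\|u^\star\|$ or $\|W^\star\|_{\mathrm{op}}$ is large to be handled carefully, so that the constant $c_1$ of \Cref{lem:posterior-target-spread} is actually attained. I would first try to bound $\eps\,\smax(s,t)\ge\eps(\log(1+e^t)-\log 2)$ for $s\ge0$, and then expand around the minimum value $-\tfrac12\|u^\star\|^2$.
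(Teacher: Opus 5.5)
Your proposal is correct and follows essentially the same route as the paper: you turn the optimization guarantee into a bound on $\bbE\|\widehat a^1-u^\star\|^2$ via quadratic growth plus an exponentially small tail term, then combine \Cref{lem:posterior-target-spread} and \Cref{lem:tension-estimation} (with Markov and a union bound) to get $\bbE[\Tr(\overline{\calI}_n^u Q_n)]\ge d^2/16$, and finally compare with \cref{eq:tension} using $e^{-cd}\le r^2$. The only differences are cosmetic: you solve the resulting inequality for $n$ directly instead of arguing by contradiction, and you obtain $\Tr(\overline{\calI}_n^uQ_n)\ge d^2/4$ from a subspace-intersection argument rather than the (lower) von Neumann trace inequality.
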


\begin{proof}
It suffices to show the first statement. We first show that if $n< c\min\{\frac{d^{5/2}}{\eps^2}, \frac{d^2}{\varepsilon^2 r^2}\}$ for a sufficiently small constant $c>0$, then every $\calH_n$-measurable estimator $\widehat{u}$ must satisfy $\bbE\norm{\widehat{u}-u^\star}^2\ge c_1$, with constant $c_1>0$ given by \Cref{lem:posterior-target-spread}. Assume the contrary, then \Cref{lem:posterior-target-spread} gives an event $\calS_n$ with $\bbP(\calS_n)\ge \frac{1}{2}$, and $\overline{\calI}_n^u$ has at least $d/2$ eigenvalues of size at least $d$ on $\calS_n$. Next, \Cref{lem:tension-estimation} gives an $\calH_n$-measurable orthogonal projection matrix $Q_n$ such that $\bbE[\Tr(I_d-Q_n)]\le \frac{d}{16}$. By Markov's inequality, $\mathrm{rank}(Q_n) \ge \frac{3d}{4}$ with probability at least $\frac{3}{4}$; call this event $\calE_n$. The union bound gives $\bbP(\calE_n\cap \calS_n)\ge \frac{1}{4}$, and on $\calE_n\cap \calS_n$, von Neumann's trace inequality yields $\Tr(\overline{\calI}_n^u Q_n) \ge \frac{d^2}{4}$ and
\begin{align*}
\bbE[\Tr(\overline{\calI}_n^u Q_n)] \ge \bbE[\Tr(\overline{\calI}_n^u Q_n) \Ind_{\calE_n\cap \calS_n}] \ge \bbE[ \frac{d^2}{4} \Ind_{\calE_n\cap \calS_n}] \ge \frac{d^2}{16}. 
\end{align*}
This is a contradiction to \cref{eq:tension} and the assumption $n< c\min\{\frac{d^{5/2}}{\eps^2}, \frac{d^2}{\varepsilon^2 r^2}\}$ with a small enough $c>0$; note that the assumption of $r\ge d^{-100}$ ensures $e^{-cd}=O(r^2)$ in \cref{eq:tension}. Therefore, $\bbE\norm{\widehat{u}-u^\star}^2\ge c_1$ for every $\calH_n$-measurable estimator $\widehat{u}$. 

\medskip
\noindent Next, assume that an $\calH_n$-measurable action $\widehat{a}\in \calA$ satisfies $\bbE[f_{\theta^\star}(\widehat{a})-f_{\theta^\star}(a_{\theta^\star})]\le c'\varepsilon$. By \cref{eq:target-excess-mse}, this implies $\bbE[\|\widehat{a}^1 - u^\star\|^2 \Ind_{\|u^\star\|\le 2}]\le Cc'$. In addition, $\bbE[\|\widehat{a}^1 - u^\star\|^2 \Ind_{\|u^\star\|> 2}] \le 2\bbE[(1+\|u^\star\|^2)\Ind_{\|u^\star\|> 2}] \le c_2e^{-c_3d}$, and therefore $\bbE[\|\widehat{a}^1 - u^\star\|^2]\le Cc'+ c_2e^{-c_3d}$. For $c'>0$ small enough and $d$ large enough, the right-hand side is smaller than $c_1$, a desired contradiction.  
\end{proof}

\subsection{Regret Lower Bound under the Gaussian Prior}
Similarly, we prove a simplified version of \Cref{thm:convex-bandit-regret-lower-bound} under the Gaussian prior in \cref{eq:gaussian-priors}, using the information bottleneck established in \Cref{lem:tension-regret}. 

\begin{theorem}[Bayes Regret Lower Bound under the Gaussian Prior]
\label{thm:gaussian-hidden-map-regret}
Let $\calA = \bbB^{2d}$ and $d\ge 3$ be large enough. Under the prior in \cref{eq:gaussian-priors}, with $f_{\theta^\star}$ defined in
\cref{eq:intro-hidden-map-softmax}, there is a universal constant
$c$ such that for every hyperparameter $(\eps,r)$ with $0<\eps<\frac{1}{2}$ and $r\ge \max\{16\eps,d^{-100}\}$, every adaptive learner over a time horizon $n$ must suffer from an expected regret
\begin{equation*}
  \mathfrak{R}_n \ge c \min\left\{ n\eps, \frac{d^{8/3}}{\eps}, \frac{d^2}{\eps r^2}\right\}. 
\end{equation*}
In particular, for 
\begin{align} \label{eq:eps-choice}
\varepsilon = 
 c'\begin{cases}
    \sqrt{d}n^{-1/4} & \text{if } d^2 \le n \le d^{10/3}\\
    d^{4/3}n^{-1/2} & \text{if } n > d^{10/3}
\end{cases}
\end{align}
with a small constant $c'>0$ and any choice of $r$ with $\max\{16\eps,d^{-100}\} \le r\le \max\{16\eps, d^{-1/3}\}$, we have
\begin{equation*}
  \mathfrak{R}_n \ge c \min\left\{ \sqrt{d}n^{3/4}, d^{4/3}\sqrt{n} \right\}. 
\end{equation*}
\end{theorem}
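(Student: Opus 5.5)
We will mirror the proof of \Cref{thm:gaussian-hidden-map-estimation}, with \Cref{lem:tension-regret} in place of \Cref{lem:tension-estimation}. The argument is by contradiction. Suppose $\mathfrak{R}_n < c\min\{n\eps, d^{8/3}/\eps, d^2/(\eps r^2)\}$ for a small constant $c$. The goal is to show that $u^\star$ can then be estimated to mean squared error below $c_1$, while the information budget forbids it.

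\emph{Step 1: from small regret to a good estimator.} Average the first components of the played actions, $\widehat u=\frac1n\sum_{t=1}^n \aone_t$; this is $\calH_n$-measurable. Use the excess-loss bound from \cref{eq:target-excess-mse}, namely $f_{\theta^\star}(a)-f_{\theta^\star}(a_{\theta^\star})\gtrsim \eps\|\aone-u^\star\|^2$ on $\{\|u^\star\|\le 2\}$. Combined with Jensen's inequality, this gives
\[
\bbE\bigl[\|\widehat u-u^\star\|^2\Ind_{\|u^\star\|\le2}\bigr]\lesssim \frac{\mathfrak{R}_n}{n\eps}.
\]
The complementary event $\{\|u^\star\|>2\}$ contributes at most $e^{-cd}$, exactly as before. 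So if $\mathfrak{R}_n\le c\,n\eps$ with $c$ small, then $\bbE\|\widehat u-u^\star\|^2\le c_1$. \Cref{lem:posterior-target-spread} then yields the event $\calS_n$. \Cref{lem:tension-regret}, together with Markov's inequality, yields $\calE_n$ with $\mathrm{rank}(Q_n)\ge 3d/4$. On $\calS_n\cap\calE_n$, von Neumann's inequality gives $\bbE[\Tr(\overline{\calI}_n^uQ_n)]\ge d^2/16$, just as in the estimation proof.

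\emph{Step 2: confront with \cref{eq:tension-2}.} Since $r\ge d^{-100}$, the term $e^{-cd}$ is absorbed into $r^2$. We therefore need
\[
d^2\lesssim n\eps^2r^2\Bigl(1+\frac{\eps^2\mathfrak{R}_n^2}{r^2d^6}\Bigr)=n\eps^2r^2+\frac{n\eps^4\mathfrak{R}_n^2}{d^6}.
\]
One of the two terms on the right must be at least $d^2/2$.

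If the first term is, then $n\gtrsim d^2/(\eps^2r^2)$. Combined with $\mathfrak{R}_n\lesssim n\eps$, which we already used, this is the case that produces the $d^2/(\eps r^2)$ branch. To get that branch we must actually use the regret, not $n$. The way to do this is to note that Step 1 only needs the rounds up to an effective horizon. Concretely, apply the argument to the truncated horizon $n'=\min\{n,\lceil C\mathfrak{R}_n/(c\eps)\rceil\}$: for any learner, the first $n'$ rounds already have regret at most $\mathfrak{R}_n$, so Step 1 applies with $n'$. Then $n'\eps^2r^2\lesssim \mathfrak{R}_n\eps r^2$, which is below $d^2$ when $\mathfrak{R}_n\le c\,d^2/(\eps r^2)$.

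Similarly, the second term with $n'$ in place of $n$ is at most $\mathfrak{R}_n^3\eps^3/d^6$. This is below $d^2$ exactly when $\mathfrak{R}_n\lesssim d^{8/3}/\eps$. Both terms are then small, which contradicts Step 1. Hence $\mathfrak{R}_n\ge c\min\{n\eps,d^{8/3}/\eps,d^2/(\eps r^2)\}$.

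\emph{Step 3: the specialization.} With $r\le\max\{16\eps,d^{-1/3}\}$ we have $d^2/(\eps r^2)\gtrsim\min\{d^2/\eps^3,\, d^{8/3}/\eps\}$. For the choice \cref{eq:eps-choice}:
\begin{itemize}
\item When $n\le d^{10/3}$ and $\eps\asymp\sqrt d n^{-1/4}$, we get $n\eps\asymp\sqrt d n^{3/4}$ and $d^2/\eps^3\asymp \sqrt d n^{3/4}$. Also $d^{8/3}/\eps\ge\sqrt dn^{3/4}$, since this is equivalent to $n\le d^{10/3}$.
\item When $n>d^{10/3}$ and $\eps\asymp d^{4/3}n^{-1/2}$, we get $d^{8/3}/\eps\asymp d^{4/3}\sqrt n$ and $n\eps\asymp d^{4/3}\sqrt n$. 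Also $d^2/\eps^3\gtrsim d^{4/3}\sqrt n$, since this is equivalent to $n\ge d^{10/3}$.
\end{itemize}
We must also check $\eps<1/2$, which holds since $n\ge d^2$ and $c'$ is small.

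\emph{Main obstacle.} The main obstacle is the horizon-truncation step. It must be checked that the learner restricted to its first $n'$ rounds still satisfies the hypothesis of \Cref{lem:tension-regret}, with regret bounded by $\mathfrak{R}_n$; this holds because per-round regret is nonnegative. It must also be checked that Step 1 works on $n'$ rounds with the required constant. If this truncation is problematic, the fallback is to derive the $d^2/(\eps r^2)$ branch directly from \cref{eq:tension-2} with $n\lesssim\mathfrak{R}_n/\eps$, restricting to the regime $n\eps\le\mathfrak{R}_n$ up to constants.
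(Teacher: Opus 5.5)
Your proposal is correct and is essentially the paper's argument: truncate to a shorter horizon using $\mathfrak{R}_n\ge\mathfrak{R}_{n'}$, feed the averaged first blocks into \Cref{lem:posterior-target-spread}, and contradict \Cref{lem:tension-regret}. The only difference is that the paper fixes the horizon from the parameters, $m=\lceil c_0\min\{n,d^{8/3}/\eps^2,d^2/(\eps^2r^2)\}\rceil$, and contradicts $\mathfrak{R}_m<cm\eps$, whereas you choose $n'\approx\mathfrak{R}_n/\eps$ from the regret itself. Two things need care in your version. First, the constant in $n'=\lceil \mathfrak{R}_n/(\gamma\eps)\rceil$ must be the fixed Step~1 threshold $\gamma$, which is determined by $c_1$ and \cref{eq:target-excess-mse}. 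Second, the hypothesis constant must satisfy $c\ll\gamma$. If the same $c$ appears in both places, you only get $n'\eps^2r^2\le Cd^2$, not a small multiple of $d^2$, and the contradiction does not close.
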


\begin{proof}
Again it suffices to prove the first statement; the second statement follows from algebra. Let
\begin{align*}
m \triangleq \Big\lceil c_0\min\left\{n, \frac{d^{8/3}}{\eps^2}, \frac{d^2}{\eps^2 r^2} \right\} \Big\rceil \le n,
\end{align*}
for a small enough constant $c_0<1$. Since $\mathfrak{R}_n \ge \mathfrak{R}_m$, it suffices to show that $\mathfrak{R}_m\ge cm\eps$. Suppose the contrary that some learner achieves an expected regret $\mathfrak{R}_m < cm\eps$ for a sufficiently small constant $c>0$ under the Gaussian prior. By \cref{eq:target-excess-mse}, we have
\begin{align*}
\mathfrak{R}_m \ge \sum_{t=1}^m c_2 \varepsilon \bbE[\|\aone_t - u^\star\|^2\Ind_{\|u^\star\|\le 2}] \ge c_2\varepsilon m\cdot \bbE\left[\Big\| \frac{1}{m}\sum_{t=1}^m \aone_t - u^\star \Big\|^2 \Ind_{\|u^\star\|\le 2}\right]. 
\end{align*}
This means that the learner's average action $\widehat{u} = \frac{1}{m}\sum_{t=1}^m a_1^t$ satisfies $\bbE\|\widehat{u}-u^\star\|^2 \le \frac{\mathfrak{R}_m}{c_2\varepsilon m} + \bbE[\|\widehat{u}-u^\star\|^2\Ind_{\|u^\star\|>2}] < \frac{c}{c_2} + c_3e^{-c_4d}$, which is at most the constant $c_1$ in \Cref{lem:posterior-target-spread} for $c>0$ small enough and $d$ large enough. Therefore, \Cref{lem:posterior-target-spread} gives an $\calH_m$-measurable event $\calS_m$ with $\bbP(\calS_m) \ge \frac{1}{2}$, such that on $\calS_m$, $\overline{\calI}_m^u$ has at least $d/2$ eigenvalues of size at least $d$. 

\medskip
\noindent Next, by \Cref{lem:tension-regret}, there exists an $\calH_m$-measurable orthogonal projection matrix $Q_m$ such that $\bbE[\Tr(I_d-Q_m)]\le \frac{d}{16}$. By Markov's inequality, $\mathrm{rank}(Q_m) \ge \frac{3d}{4}$ with probability at least $\frac{3}{4}$; call this event $\calE_m$. Now using the same analysis in \Cref{thm:gaussian-hidden-map-estimation}, we have 
\begin{align*}
\bbE[\Tr(\overline{\calI}_m^u Q_m)] \ge \bbE[\Tr(\overline{\calI}_m^u Q_m) \Ind_{\calE_m\cap \calS_m}] \ge \bbE \left[ \frac{d^2}{4} \Ind_{\calE_m\cap \calS_m} \right] \ge \frac{d^2}{16}. 
\end{align*}
On the other hand, by \cref{eq:tension-2} and the assumption $\mathfrak{R}_m \le cm\eps$, we have (note that $e^{-cd}=O(r^2)$)
\begin{align*}
\bbE[\Tr(\overline{\calI}_m^u Q_m)] \le Cm\eps^2r^2\left( 1+\frac{\eps^2 \mathfrak{R}_m^2}{r^2 d^6}\right) \le C\eps^2 r^2m + Cc^2\frac{\eps^6 m^3}{d^6}. 
\end{align*}
Using the definition of $m$ and choosing $c,c_0>0$ small enough, the above two displays give the desired contradiction.   
\end{proof}

\subsection{Completing the Proofs: Moving from Gaussian Prior to Bounded Prior}\label{subsec:bounded-prior-reduction}
\Cref{thm:gaussian-hidden-map-estimation,thm:gaussian-hidden-map-regret} are established under the Gaussian priors on $(W^\star,u^\star)$, where the event $\calE_{\mathrm{b}}$ in \cref{eq:bounded-prior-event} that ensures $1$-Lipschitzness of $f_{\theta^\star}$ is violated with exponentially small but non-zero probability. To complete the proofs of \Cref{thm:convex-optimization-lower-bound,thm:convex-bandit-regret-lower-bound}, we establish lower bounds for a bounded prior $\pi_{\mathrm{b}}$, which is the restriction of the Gaussian prior
\cref{eq:gaussian-priors} to the event $\calE_{\mathrm{b}}$. By \Cref{lem:high-probability-lipschitz}, the function $f_{\theta^\star}$ is $1$-Lipschitz almost surely under $\pi_{\mathrm{b}}$. 

\medskip
\noindent The key to the reduction is the following technical lemma. 

\begin{lemma}[Transferring Guarantees from $\pi$ to $\pi_{\mathrm{b}}$]
\label{lem:conditioning-transfer}
Let $\pi$ denote the Gaussian prior in \cref{eq:gaussian-priors}.  Fix a
learning algorithm $\Alg$ with $\calH_{t-1}$-measurable actions for $t\le n$
and an arbitrary $\calH_n$-measurable estimate of $u^\star$. Run $\Alg$ in
the following two settings:
\begin{enumerate}[(i)]
    \item under $\bbP_{\mathrm{G}}$, sample $(W^\star,u^\star)\sim\pi$;
    \item under $\bbP_{\mathrm{b}}$, sample
    $(W^\star,u^\star)\sim\pi_{\mathrm{b}}=\pi(\,\cdot\mid\calE_{\mathrm{b}})$.
\end{enumerate}
In both experiments, use the observation model
\cref{eq:observation-model}, the same decision and output rules, and the same
law for the learner's internal randomness. Let $\widehat{u}^q$ denote the
resulting output under $\bbP_q$. Let $\Pi_2$ denote Euclidean projection onto
$2\bbB^d$, and write, for $q\in\{\mathrm{G},\mathrm{b}\}$, $\overline{u}^{q}\triangleq\Pi_2(\widehat{u}^{q})$. Then
\begin{equation}
    \bbE_{\mathrm{G}} \big[ \norm{\overline{u}^{\mathrm{G}}-u^\star}^2 \big]
    \le
      \bbE_{\mathrm{b}} \big[ \norm{\widehat{u}^{\mathrm{b}}-u^\star}^2 \big]
      +Ce^{-cd}.
    \label{eq:conditioning-transfer}
\end{equation}
Similarly, let $\pi^{\mathrm{G}}$ and $\pi^{\mathrm{b}}$ be the resulting policy under the Gaussian and bounded priors, respectively, then their regrets $\mathfrak{R}_n$ satisfy
\begin{align}\label{eq:regret-transfer}
\bbE_{\mathrm{G}}[\mathfrak{R}_n(\pi^{\mathrm{G}})]
    \le \bbE_{\mathrm{b}}[\mathfrak{R}_n(\pi^{\mathrm{b}})] +C\frac{n\varepsilon}{r\wedge 1}e^{-cd}.
\end{align}
\end{lemma}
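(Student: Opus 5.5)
The plan is to decompose the Gaussian expectation according to whether $\calE_{\mathrm{b}}$ occurs. The key structural fact is that, conditional on $(W^\star,u^\star)$, the law of the whole interaction (internal randomness, actions, observations, outputs) is the same kernel under $\bbP_{\mathrm{G}}$ and $\bbP_{\mathrm{b}}$. The algorithm and noise model are identical, and only the prior differs. Hence for any nonnegative measurable functional $F$ of $(W^\star,u^\star,\text{history})$,
\begin{equation*}
\bbE_{\mathrm{G}}[F]=\Pr_\pi(\calE_{\mathrm{b}})\,\bbE_{\mathrm{b}}[F]+\bbE_{\mathrm{G}}[F\Ind_{\calE_{\mathrm{b}}^c}]\le \bbE_{\mathrm{b}}[F]+\bbE_{\mathrm{G}}[F\Ind_{\calE_{\mathrm{b}}^c}].
\end{equation*}
So in each of the two claims it suffices to bound the bad-event contribution by $Ce^{-cd}$ or by $C\frac{n\eps}{r\wedge1}e^{-cd}$, respectively. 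I will use, from \Cref{lem:high-probability-lipschitz} and standard Gaussian concentration, that $\Pr_\pi(\calE_{\mathrm{b}}^c)\le Ce^{-cd}$. Here $\calE_{\mathrm{b}}$ should contain events like $\|u^\star\|\le 1/2$ and $\|W^\star\|_{\mathrm{op}}\le C$, and I assume it is defined that way.

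\textbf{Estimation claim.} Take $F=\|\overline u-u^\star\|^2$. On $\calE_{\mathrm{b}}$ we have $\|u^\star\|\le 2$, so $u^\star\in 2\bbB^d$. Since projection onto a convex set is nonexpansive toward points of that set, $\|\Pi_2(\widehat u)-u^\star\|\le\|\widehat u-u^\star\|$, and therefore $\bbE_{\mathrm{b}}[\|\overline u^{\mathrm{b}}-u^\star\|^2]\le\bbE_{\mathrm{b}}[\|\widehat u^{\mathrm{b}}-u^\star\|^2]$. On the bad event we use the crude bound $\|\overline u-u^\star\|^2\le 2(4+\|u^\star\|^2)$. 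Cauchy–Schwarz then gives $\bbE_{\mathrm{G}}[(8+2\|u^\star\|^2)\Ind_{\calE_{\mathrm{b}}^c}]\le (\bbE(8+2\|u^\star\|^2)^2)^{1/2}\Pr(\calE_{\mathrm{b}}^c)^{1/2}\le Ce^{-cd}$, because $\bbE\|u^\star\|^4=O(1)$. This is the reason for projecting: without it, $\widehat u$ could be arbitrarily large on $\calE_{\mathrm{b}}^c$.

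\textbf{Regret claim.} Take $F=\sum_{t\le n}(f_{\theta^\star}(a_t)-f_{\theta^\star}(a_{\theta^\star}))\ge0$. The main work is a pathwise bound on the instantaneous regret for arbitrary $(W^\star,u^\star)$ and $a_t\in\bbB^{2d}$. Since $\smax$ is $1$-Lipschitz in each argument and monotone, I get $f_{\theta^\star}(a)\le \eps\max(f^1,f^2)+\eps\log$-type constants. Here $f^1\le r^{-1}\|W^\star\|_{\mathrm{op}}+\frac1{8\eps}$ and $|f^2|\le 1+\|u^\star\|$. The minimum value satisfies $f_{\theta^\star}(a_{\theta^\star})\ge \eps(\smax(0,-\tfrac12\|u^\star\|^2))\ge -\eps\tfrac12\|u^\star\|^2$ (the minimizer over the ball is bounded below by the value of the first branch being $\ge0$). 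Combining these,
\begin{equation*}
f_{\theta^\star}(a_t)-f_{\theta^\star}(a_{\theta^\star})\le C\eps\Big(\frac{\|W^\star\|_{\mathrm{op}}}{r}+\frac1\eps+1+\|u^\star\|^2\Big)\le \frac{C\eps}{r\wedge1}\big(1+\|W^\star\|_{\mathrm{op}}+\|u^\star\|^2\big)+C,
\end{equation*}
where the $1/(8\eps)$ term yields $O(1)\le O(\eps/r)$ using $r\ge16\eps$. I need to check that additive $O(1)$ and to absorb it carefully. If it cannot be absorbed, I would recheck the intended bound, since $\frac{\eps}{r}\lesssim 1$ anyway. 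Summing over $t$, multiplying by $\Ind_{\calE_{\mathrm{b}}^c}$, and applying Cauchy–Schwarz with $\bbE\|W^\star\|_{\mathrm{op}}^2=O(1)$ and $\bbE\|u^\star\|^4=O(1)$ gives $C\frac{n\eps}{r\wedge1}e^{-cd}$.

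One subtlety is that the regret benchmark under the bounded prior must be the same minimizer $a_{\theta^\star}$, which holds since only the prior changes. The main obstacle I anticipate is the pathwise regret bound's dependence on $\|W^\star\|_{\mathrm{op}}/r$ and on $\|u^\star\|$ when $u^\star$ lies outside the ball, where $a_{\theta^\star}$ must be interpreted as the constrained minimizer. I would handle this by lower bounding the minimum value directly, rather than using an explicit minimizer.
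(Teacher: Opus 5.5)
Your route is the paper's: split $\bbE_{\mathrm{G}}$ over $\calE_{\mathrm{b}}$ and $\calE_{\mathrm{b}}^c$, use that the interaction kernel given $(W^\star,u^\star)$ is the same in both experiments, and bound the bad-event part pathwise. The estimation claim is correct. Where the paper uses independence of $W^\star$ and $u^\star$ to split $\calE_{\mathrm{b}}^c$ into $\{\norm{W^\star}_{\mathrm{op}}>4\}$ and $\{\norm{u^\star}>3/8\}$, you use Cauchy--Schwarz, and either works.

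The regret claim, however, has a genuine gap exactly where you hesitated. The step ``the $1/(8\eps)$ term yields $O(1)\le O(\eps/r)$ using $r\ge16\eps$'' runs the wrong way: $r\ge16\eps$ gives $\eps/r\le1/16$. Since $\eps f^1_{\theta^\star}(a)\le\frac{\eps}{r}\norm{W^\star}_{\mathrm{op}}\norm{\aone}+\frac18\norm{\atwo}$, the instantaneous regret contains a parameter-free order-one piece. At $a=(0,e_1)$ one has $f_{\theta^\star}(a)=\eps\smax(\frac{1}{8\eps},0)\ge\frac18-\eps\log2$, while $f_{\theta^\star}(a_{\theta^\star})\le f_{\theta^\star}(0)=0$. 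So the regret is at least $\frac18-\eps\log2$ for every $\theta^\star$, which is not $O(\eps/(r\wedge1))$ unless $r\lesssim\eps$. What your argument actually proves is the remainder $Cne^{-cd}$. Since $\frac{\eps}{r\wedge1}\le\frac12$ always, the stated remainder is never larger than this, and the two agree only when $r\asymp\eps$.

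This is not a defect of your approach relative to the paper. The paper's proof relies on the pathwise bound $\mathfrak R_n(\pi)\le n\bigl(\frac{\eps}{r}(1+\norm{W^\star}_{\mathrm{op}})+\eps(1+\norm{u^\star})\bigr)$, which drops the same $\frac18\norm{\atwo}$ term and fails at $a=(0,e_1)$ when $r\gg\eps$. Nor can the gap be closed by a cleverer argument: for $r\gg\eps$ the stated inequality is false for some adaptive policies. For instance, fix $r=\frac12$ and consider this policy:
\begin{itemize}
\item probe with $(\frac12e_1,0)$ for $n/2$ rounds;
\item then play $(0,e_1)$ if the empirical mean reveals $\norm{W^\star e_1}\ge10$, and play $0$ otherwise.
\end{itemize}
The event $\norm{W^\star e_1}\ge10$ is a sub-event of $\calE_{\mathrm{b}}^c$ of probability $e^{-\Theta(d)}$, and it is reliably separated from $\calE_{\mathrm{b}}$ once $n\eps^2$ is large. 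This policy has $\bbE_{\mathrm{b}}[\mathfrak R_n]=O(n\eps)$ but $\bbE_{\mathrm{G}}[\mathfrak R_n\Ind_{\calE_{\mathrm{b}}^c}]\gtrsim ne^{-C'd}$. Hence $\bbE_{\mathrm{G}}[\mathfrak R_n]-\bbE_{\mathrm{b}}[\mathfrak R_n]\ge c'ne^{-C'd}-O(n\eps e^{-cd})$, which exceeds $C\frac{n\eps}{r\wedge1}e^{-cd}$ for fixed $d$ once $\eps$ is small enough.

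The correct transfer has remainder $Cne^{-cd}$. That remainder is harmless in the downstream regret theorem only when $n$ is at most exponential in $d$. For all horizons, use the device of the unconstrained appendix: transfer the clipped tube cost $\sum_t\bbE[\norm{W^\star\aone_t-\frac{r}{8\eps}\atwo_t}\wedge1]$ instead of the regret. Then the bad-event remainder $ne^{-cd}$ enters the codimension bound only multiplied by $\frac{\eps^2}{r^2d^2}$.
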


\begin{proof}
Conditional on $\calE_{\mathrm{b}}$, the parameter has law $\pi_{\mathrm{b}}$. Given the parameter, both experiments use the same learning algorithm, output rule, random-seed law, and observation kernel. Hence the entire adaptive trajectory satisfies
\begin{equation*}
    \operatorname{Law}_{\mathrm{G}}
      \bigl(W^\star,u^\star,R,(a_t^{\mathrm{G}},r_t^{\mathrm{G}})_{t\le n}
        \mid\calE_{\mathrm{b}}\bigr)
    =
    \operatorname{Law}_{\mathrm{b}}
      \bigl(W^\star,u^\star,R,
        (a_t^{\mathrm{b}},r_t^{\mathrm{b}})_{t\le n}\bigr).
\end{equation*}
Splitting the Gaussian risk over
$\calE_{\mathrm{b}}$ and $\calE_{\mathrm{b}}^c$ gives
\begin{align}
    \bbE_{\mathrm{G}} \big[ \norm{\overline{u}^{\mathrm{G}}-u^\star}^2 \big]
    &=
      \bbP_{\mathrm{G}}(\calE_{\mathrm{b}}) \cdot
      \bbE_{\mathrm{b}} \big[ \norm{\overline{u}^{\mathrm{b}}-u^\star}^2 \big] 
      + \bbE_{\mathrm{G}} \big[
        \norm{\overline{u}^{\mathrm{G}}-u^\star}^2
        \, \Ind_{\calE_{\mathrm{b}}^c}
      \big] \nonumber\\
    &\le
      \bbE_{\mathrm{b}} \big[ \norm{\widehat{u}^{\mathrm{b}}-u^\star}^2 \big]
      + \bbE_{\mathrm{G}} \big[
        \norm{\overline{u}^{\mathrm{G}}-u^\star}^2 \,
        \Ind_{\calE_{\mathrm{b}}^c}
      \big]. \label{eq:0004}
\end{align}
For the inequality step, note that under $\bbP_{\mathrm{b}}$,
$\norm{u^\star}\le \frac{3}{8}<2$, so projection onto $2\bbB^d$ cannot increase
the distance to $u^\star$. It remains to bound the second term on the RHS of \cref{eq:0004}. Since
$\norm{\overline{u}^{\mathrm{G}}}\le2$,
\begin{equation*}
\begin{aligned}
    \bbE_{\mathrm{G}}\!\left[
      \norm{\overline{u}^{\mathrm{G}}-u^\star}^2
      \Ind_{\calE_{\mathrm{b}}^c}
    \right] 
    &\le
      \bigl(8+2\bbE_{\mathrm{G}}\norm{u^\star}^2\bigr)
      \bbP_{\mathrm{G}}\!\left(\norm{W^\star}_{\mathrm{op}}>4\right)
      +\bbE_{\mathrm{G}}\!\left[
        \bigl(8+2\norm{u^\star}^2\bigr)
        \Ind_{\{\norm{u^\star}>3/8\}}
      \right] \\
    &\le Ce^{-cd}.
\end{aligned}
\end{equation*}
This proves the inequality in
\cref{eq:conditioning-transfer}. Similarly, for the regret transfer, we have
\begin{align*}
\bbE_{\mathrm{G}}[\mathfrak{R}_n(\pi^{\mathrm{G}})]
    = \bbP_{\mathrm{G}}(\calE_{\mathrm{b}})\cdot \bbE_{\mathrm{b}}[\mathfrak{R}_n(\pi^{\mathrm{b}})] + \bbE_{\mathrm{G}}\big[\mathfrak{R}_n(\pi^{\mathrm{G}})\Ind_{\calE_{\mathrm{b}}^c} \big]. 
\end{align*}
Since for any policy $\pi$ we have
\begin{align*}
\mathfrak{R}_n(\pi) \le n\left(\frac{\varepsilon}{r}(1+\|W^\star\|_{\mathrm{op}}) + \varepsilon(1+\|u^\star\|) \right), 
\end{align*}
the same tail bounds give the remainder term in \cref{eq:regret-transfer}. 
\end{proof}

\paragraph{Proof of \Cref{thm:convex-optimization-lower-bound}.} Suppose that there exists an $\calH_n$-measurable learner $\widehat{a}$ such that $\bbE_{\theta^\star}[f_{\theta^\star}(\widehat{a}) - f_{\theta^\star}(a_{\theta^\star})]\le c_0\varepsilon$ for a small $c_0$ and every $\theta^\star$ supported on $\pi_{\mathrm{b}}$. Since $\|u^\star\|\le \frac{3}{8}$ under $\pi_{\mathrm{b}}$, \cref{eq:target-excess-mse} implies that $\bbE_{\mathrm{b}}[\|\widehat{a}^1 - u^\star\|^2] \le Cc_0$. By \Cref{lem:conditioning-transfer}, this gives an estimator $\widehat{u}$ under the Gaussian prior such that  $\bbE_{\mathrm{G}}[\|\widehat{u} - u^\star\|^2] \le Cc_0 + C'e^{-cd}$. For small enough $c_0$ and large enough $d$, this error is small enough for the proof of \Cref{thm:gaussian-hidden-map-estimation} to hold, and thus \Cref{thm:convex-optimization-lower-bound} is proved. 

\paragraph{Proof of \Cref{thm:convex-bandit-regret-lower-bound}.} By \cref{eq:regret-transfer} in \Cref{lem:conditioning-transfer}, we only need to show that for the choice of $(\varepsilon,r)$ in \Cref{thm:gaussian-hidden-map-regret}, the remainder term $Cn\varepsilon e^{-cd}/(r\wedge 1)$ is dominated by $\min\{\sqrt{d}n^{3/4}, d^{4/3}\sqrt{n}\}$. This can be easily verified: for $\eps$ in \cref{eq:eps-choice} and $r\ge d^{-100}$, we have $n\eps e^{-cd}/(r\wedge 1)\le n\eps \cdot d^{100}e^{-cd}\le \kappa n\eps \le \kappa' \min\{\sqrt{d}n^{3/4}, d^{4/3}\sqrt{n}\}$, where the constants $\kappa, \kappa'>0$ can be made arbitrarily small for $d$ large enough. Therefore \Cref{thm:convex-bandit-regret-lower-bound} is also proved.

\paragraph{Restriction on $d$.} The current construction requires the dimension of the action space to be $2d$, with $d$ large enough. Both constraints can be relaxed easily. First, the case of odd dimensions can be reduced to the even case by making one coordinate of the action vector ineffective. Second, when $d=O(1)$, both the estimation complexity lower bound $\Omega(\varepsilon^{-2})$ and regret lower bound $\Omega(\sqrt{n})$ are standard. 

\section{Proofs of Main Lemmas}
\label{sec:gaussian-sample-proof}

We prove the main lemmas in this section.

\subsection{Proof of \Cref{lem:posterior-target-spread}} \label{sec:posterior-target-spread}

For each realized $\calH_n$, the posterior mean is the best predictor of $u^\star$, and its conditional risk is
\begin{equation}
    \Tr( \Cov(u^\star\mid\calH_n) ) = \inf_{\widetilde{u} \in\bbR^d}
    \bbE \left[
      \norm{\widetilde{u}-u^\star}^2
      \,\middle|\,
      \calH_n
    \right]
    \le \bbE \big[ \norm{\widehat u-u^\star}^2 \, \big| \, \calH_n \big]. \label{eq:oracle-posterior-risk}
\end{equation}
Here, $\widehat{u}$ is any $\calH_n$-measurable estimator. If $\widehat{u}$ achieves a small mean square error $\bbE[\norm{\widehat u-u^\star}^2] \le c_1$, Markov's inequality combined with \cref{eq:oracle-posterior-risk} gives $\Tr( \Cov(u^\star\mid\calH_n) ) \le 4c_1$ with probability at least $\frac{3}{4}$.

\medskip
\noindent Next, the matrix Cram\'er--Rao inequality~\citep{dembo1991information} applied to the conditional
posterior density relates $\Cov(u^\star\mid\calH_n)$ to the
conditional posterior Fisher information about $u^\star$:
\begin{equation*}
\Cov(u^\star\mid\calH_n)
\succeq
J(\rho_{n,u})^{-1},
\end{equation*}
where $\rho_{n,u}$ is the marginal posterior density of $u^\star$ given $\calH_n$, integrating out $W^\star$. We prove the following lemma. 

\begin{lemma}\label{lem:likelihood-Fisher}
There is an absolute constant $C>0$ such that with probability at least $\frac{3}{4}$, 
\begin{align*}
J(\rho_{n,u}) \preceq C\left(dI_d + \overline{\calI}_n^u \right). 
\end{align*}
\end{lemma}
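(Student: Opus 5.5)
The plan is to write the posterior score of $u^\star$ as a conditional expectation and apply a Jensen-type inequality. For any fixed $\calH_n$, the marginal posterior density is
\[
\rho_{n,u}(u)\propto \pi_u(u)\int \pi_W(W)\prod_{t=1}^n \phi\bigl(r_t-f_{(W,u)}(a_t)\bigr)\,dW,
\]
where $\phi$ is the standard normal density. The actions $a_t$ depend on $u$ only through the data, so the policy terms cancel in the posterior.

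Differentiating in $u$ gives
\[
\nabla_u\log\rho_{n,u}(u)=-16d\,u+\bbE\Big[\sum_{t=1}^n Z_t\,\nabla_{u}f_{\theta}(a_t)\,\Big|\,\calH_n,u^\star=u\Big].
\]
Here $Z_t=r_t-f_\theta(a_t)$ is evaluated at the conditional draw of $W$. This is the usual identity in which the marginal score equals the conditional expectation of the joint score.

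By Cauchy--Schwarz ($(x+y)(x+y)^\top\preceq 2xx^\top+2yy^\top$) and conditional Jensen, $J(\rho_{n,u})$ is at most $2\cdot 256 d^2\,\bbE[u^\star u^{\star\top}\mid\calH_n]$ plus $2\,\bbE\big[(\sum_t Z_t\nabla_u f)(\sum_t Z_t\nabla_u f)^\top\mid\calH_n\big]$.

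\emph{Prior term.} We need $d^2\,\bbE[u^\star u^{\star\top}\mid\calH_n]\preceq C d I_d$ with probability $\ge 3/4$. The unconditional second moment is $\frac{1}{16d}I_d$, but the posterior could concentrate along some direction. The cleanest route avoids controlling the posterior second moment directly. Instead, decompose the score of the joint posterior, then apply $J(\text{marginal})\preceq \bbE[\text{score}\,\text{score}^\top]$. For the prior part, $16d\,u$ has $\bbE\big[(16d\,u)(16d\,u)^\top\big]=16d\,I_d$ under the prior. Taking expectation over $\calH_n$ gives $\bbE[d^2\bbE[uu^\top\mid\calH_n]]=\frac{d}{16}I$.

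Then we need a high-probability matrix statement. I would apply Markov's inequality to the trace, or directionally: $\lambda_{\max}$ is bounded by the trace. That gives $\Tr(d^2\bbE[uu^\top\mid \calH_n])\le Cd^2$, which is too weak for an operator bound $Cd$. So here I would replace $\preceq$ with a weaker but sufficient form. Alternatively, use that $u^\star$ given $(\calH_n,W^\star)$ has a log-concave-perturbed Gaussian posterior; this seems hard.

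I expect this prior term to be the main obstacle. My first attempt would use a Loewner bound valid on a $3/4$-probability event, obtained via the matrix Markov inequality applied to $\bbE[\|u^\star\|^2\mid\calH_n]\le \frac{4}{16}$. Since $\|uu^\top\|_{op}\le \|u\|^2$, this gives $d^2\bbE[uu^\top\mid\calH_n]\preceq d^2\cdot\frac14 I$, which is still $d^2$. So I would instead bound only the part actually needed downstream: $\Tr(J^{-1})$ via $\Cov\succeq J^{-1}$. If the exact form is required, I would use the Gaussian structure. Conditionally on $\calH_n$, the prior contributes exactly $16d\,I_d$ to the information of the joint posterior, since the negative Hessian of the log-prior is $16dI$.

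\emph{Noise term.} Conditionally on $(\calH_n,\theta)$, the $Z_t$ are determined, so the expectation does not factor directly. Take the outer expectation over the posterior of $\theta$ given $\calH_n$; this yields the joint posterior Fisher information. The standard identity $\bbE[\text{score score}^\top]=\bbE[-\nabla^2\log]$ for the likelihood part averages, over the noise, to the Fisher information $\sum_t\nabla_uf\nabla_uf^\top=\calI_n^u$. Posterior averaging then gives $\overline{\calI}_n^u$.

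The curvature term $Z_t\nabla_u^2 f$ has zero mean given $\theta$ and the actions. I would control it by Markov's inequality after taking full expectation, using $\|\nabla_u^2 f\|\le C\eps$ and the same martingale structure. The three exceptional events (prior term, curvature term, Markov) each receive probability $1/12$, which yields the claim with probability $3/4$.
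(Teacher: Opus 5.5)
Your main route cannot give the stated Loewner bound. The difficulty you hit in the prior term is structural, not technical.

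Write $S=\nabla_u\ell_n(W^\star,u^\star)=\sum_tZ_t\nabla_uf_{\theta^\star}(a_t)$. Splitting the marginal score $-16du+\bbE[S\mid\calH_n,u^\star=u]$ via $(x+y)(x+y)^\top\preceq2xx^\top+2yy^\top$ discards the near-cancellation between the two pieces. The marginal score has posterior mean zero, so $\bbE[S\mid\calH_n]=16d\,\bar u$ with $\bar u=\bbE[u^\star\mid\calH_n]$. By Jensen, each of your two terms then dominates $512d^2\,\bar u\bar u^\top$. The lemma is used exactly when the learner has localized $u^\star$. There $\norm{\bar u}$ is of constant order, while $\overline{\calI}_n^u$ may be only of order $d$ along $\bar u$, so your right-hand side exceeds $C(dI_d+\overline{\calI}_n^u)$ by a factor of order $d$ in that direction.

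The linear-Gaussian model $r_t=\langle\aone_t,u^\star\rangle+Z_t$ makes this concrete. With $A$ the design matrix, the posterior is $N(\bar u,\Sigma_n)$ with $\Sigma_n^{-1}=16dI_d+A^\top A$, so $J(\rho_{n,u})=16dI_d+A^\top A$ exactly. Meanwhile $\bbE[SS^\top\mid\calH_n]=256d^2\bar u\bar u^\top+A^\top A\Sigma_nA^\top A$. The same example shows your noise-term step is also invalid. The identity $\bbE[SS^\top]=\bbE[-\nabla^2\ell_n]$ averages over the noise with the parameter held fixed. Conditioning on $\calH_n$ freezes the noise, so $\bbE[SS^\top\mid\calH_n]$ is not $\overline{\calI}_n^u-\overline{\calC}_n^u$. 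Retreating to a bound on $\Tr(J^{-1})$ does not repair this, because the conditional second moment of the likelihood score is still uncontrolled, and that bound is not the lemma anyway.

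The missing idea is the one you mention only in passing: use the Hessian form $J(\rho)=\bbE_\rho[-\nabla^2\log\rho]$ instead of the outer-product form. Write $\log\rho_{n,u}(u)=-8d\norm{u}^2+\log\bbE_{W^\star}e^{\ell_n(W^\star,u)}+\text{const}$. Its negative Hessian is $16dI_d+\bbE[-\nabla_u^2\ell_n\mid\calH_n,u]-\Cov(\nabla_u\ell_n\mid\calH_n,u)$. So marginalizing over $W^\star$ only subtracts a positive semidefinite term, and the prior contributes exactly $16dI_d$ with no probability loss. At the true parameter, $-\nabla_u^2\ell_n=\calI_n^u-\calC_n^u$, where $\calC_n^u=\sum_tZ_tB_t$ and $B_t=\eps p_t(1-p_t)\aone_t{\aone_t}^\top$. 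This gives $J(\rho_{n,u})\preceq16dI_d+\overline{\calI}_n^u-\overline{\calC}_n^u$ for every realization of $\calH_n$.

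The curvature term then needs a self-normalized bound, not Markov with $\norm{\nabla_u^2f}\le C\eps$. That only yields $\norm{\overline{\calC}_n^u}_{\mathrm{op}}\lesssim\sqrt n\,\eps$, which is not dominated by $C(dI_d+\overline{\calI}_n^u)$ once $n\eps^2\gg d^2$ — exactly the regime of interest. The right tool uses that $B_t^2\preceq\eps^2p_t^2\aone_t{\aone_t}^\top$, the $t$-th increment of $\calI_n^u$.
\begin{itemize}
\item The matrix Laplace-transform supermartingale gives $\bbE\Tr\exp(-\eta\calC_n^u-\frac{\eta^2}{2}\calI_n^u)\le d$.
\item Convexity of $\Tr\exp$ with conditional Jensen passes this to the posterior averages given $\calH_n$.
\item Markov with $\eta=1$ yields $-\overline{\calC}_n^u\preceq\frac12\overline{\calI}_n^u+\log(4d)I_d$ with probability at least $3/4$, which closes the lemma.
\end{itemize}
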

\begin{proof}
We first show that
\begin{equation}
    J(\rho_{n,u})
    \preceq 16dI_d
      +\overline{\calI}_n^u
      -\overline{\mathcal C}_n^u,
    \label{eq:target-posterior-fisher}
\end{equation}
with
\begin{equation} \label{eq:curvature-correction}
  \mathcal C_n^u
  \triangleq
  \eps\sum_{t=1}^n
  Z_t p_t(1-p_t)\aone_t{\aone_t}^\top,
  \qquad
  \overline{\mathcal C}_n^u
  \triangleq
  \bbE[\mathcal C_n^u\mid\calH_n].
\end{equation}
To show \cref{eq:target-posterior-fisher}, 
let $q_{n,u}$ be the conditional posterior density of $u^\star$ given $\calH_n$: 
\begin{equation*}
    q_{n,u}(u)
    \propto
    \bbE_{W^\star} \exp\left(-8d\norm{u}^2 + \ell_n(W^\star, u)\right).
\end{equation*}
Here $\ell_n(W,u)=\log p(\calH_n\mid W,u)$ is the log-likelihood function. Standard differentiation under the integral gives
\begin{align*}
-\nabla_u^2 \log q_{n,u}(u) &= 16dI_d + \bbE[-\nabla_u^2 \ell_n(W^\star,u)\mid \calH_n, u] - \Cov(\nabla_u \ell_n(W^\star,u)\mid \calH_n, u) \\
&\preceq 16dI_d + \bbE[-\nabla_u^2 \ell_n(W^\star,u)\mid \calH_n, u]. 
\end{align*}
To differentiate, for a generic $u\in\bbR^d$, let
\begin{equation*}
    f_t^2(u)
    \triangleq
    \frac12\norm{\aone_t-u}^2-\frac12\norm{u}^2
    =
    \frac12\norm{\aone_t}^2-\langle\aone_t,u\rangle,
    \qquad
    p_t(u)
    \triangleq
    \frac{e^{f_t^2(u)}}{e^{f_t^1}+e^{f_t^2(u)}}.
\end{equation*}
In particular, $p_t(u^\star)=p_t$.  Since
$\nabla_u f_t^2(u)=-\aone_t$ and $\nabla_u^2f_t^2(u)=0$, differentiating the
softmax gives
\begin{equation*}
    \nabla_u f_{(W^\star,u)}(a_t)
    =
    -\eps p_t(u)\aone_t,
    \qquad
    \nabla_u^2 f_{(W^\star,u)}(a_t)
    =
    \eps p_t(u)(1-p_t(u))\aone_t{\aone_t}^\top.
\end{equation*}
For a single observation $(a_t,r_t)$, the negative Hessian of the log-likelihood term is
\begin{equation*}
\begin{aligned}
    -\nabla_u^2\left[
      -\frac12\bigl(r_t-f_{(W^\star,u)}(a_t)\bigr)^2
    \right]
    =
    \nabla_u f_{(W^\star,u)}(a_t)
    \nabla_u f_{(W^\star,u)}(a_t)^\top 
    -\bigl(r_t-f_{(W^\star,u)}(a_t)\bigr)
      \nabla_u^2 f_{(W^\star,u)}(a_t).
\end{aligned}
\end{equation*}
A combination of the above gives $\bbE[-\nabla_u^2 \ell_n(W^\star,u^\star)\mid \calH_n] = \overline{\calI}_n^u - \overline{\calC}_n^u$, establishing \cref{eq:target-posterior-fisher}.

\medskip
\noindent To proceed, consider the filtration that reveals
$(W^\star,u^\star,\calH_{t-1},a_t)$ immediately before $Z_t$ is drawn.  Relative to this
filtration, $B_t=\eps p_t(1-p_t)\aone_t{\aone_t}^\top$ is predictable and $Z_t$ is conditionally standard Gaussian, so that for every $\eta>0$, 
\begin{equation*}
    \bbE\left[e^{-\eta Z_tB_t} \, \middle| \, W^\star,u^\star,\calH_{t-1},a_t\right]
    =e^{\eta^2B_t^2/2}. 
\end{equation*}
We use the standard matrix Laplace-transform
supermartingale inequality \citep{tropp2011freedman}: if $(X_t)_{t=1}^n$ is an adapted sequence of Hermitian matrices and $(A_t)_{t=1}^n$ is predictable with $\bbE_{t-1}[e^{X_t}]\preceq e^{A_t}$, then
\begin{equation*}
    \bbE \big[ \Tr \big( e^{\sum_{t=1}^nX_t-\sum_{t=1}^nA_t} \big) \big]
    \le d.
\end{equation*}
Applying this with $X_t=-\eta Z_tB_t$ and $A_t=\eta^2B_t^2/2$ and noting that $\calC_n^u=\sum_{t=1}^nZ_tB_t$, gives
\begin{equation*}
    \bbE \Big[
      \Tr \Big( e^{ -\eta\mathcal C_n^u -\frac{\eta^2}{2}\sum_{t=1}^nB_t^2} \Big)
    \Big]
    \le d.
\end{equation*}
Because $\sum_{t=1}^nB_t^2\preceq\sum_{t=1}^n \eps^2 p_t^2\aone_t{\aone_t}^\top=\calI_n^u$ and the trace exponential is
monotone in the Loewner order, we obtain
\begin{equation*}
    \bbE \Big[ \Tr \Big( e^{
      -\eta\mathcal C_n^u-\frac{\eta^2}{2}\calI_n^u} \Big)\Big]
    \le d.
\end{equation*}
The map $M\mapsto\Tr(e^M)$ is convex, and
$-\eta\overline{\mathcal C}_n^u-\frac{\eta^2}{2}\overline{\calI}_n^u$
is the conditional expectation of
$-\eta\mathcal C_n^u-\frac{\eta^2}{2}\calI_n^u$ given
$\calH_n$. Jensen's inequality then gives
\begin{equation*}
\begin{aligned}
    &\bbE \Big[ \Tr \Big(e^{ -\eta\overline{\mathcal C}_n^u - \frac{\eta^2}{2}\overline{\calI}_n^u} \Big) \Big] \le
    \bbE \Big[ \Tr \Big( e^{
        -\eta\mathcal C_n^u
        -\frac{\eta^2}{2}\calI_n^u}
      \Big)
    \Big]
    \le d. 
\end{aligned}
\end{equation*}
Markov's inequality now shows that, with
probability at least $1-\delta$,
\begin{equation*}
    \Tr \Big( e^{-\eta\overline{\mathcal C}_n^u
      -\frac{\eta^2}{2}\overline{\calI}_n^u} \Big)
    \le\frac{d}{\delta} \Longrightarrow -\overline{\mathcal C}_n^u \preceq \frac{\eta}{2}\overline{\calI}_n^u + \frac{\log(d/\delta)}{\eta}. 
\end{equation*}
Choosing $\delta = \frac{1}{4}, \eta = 1$ and combining with \cref{eq:target-posterior-fisher}, we arrive at the desired lemma. 
\end{proof}
\medskip
\noindent Let $\calS_n$ be the intersection of $\Tr(\Cov(u^\star \mid \calH_n))\le 4c_1$ and the event in \Cref{lem:likelihood-Fisher}, then $\bbP(\calS_n)\ge \frac{1}{2}$. On $\calS_n$, we have
\begin{align*}
\Tr\left((dI_d + \overline{\calI}_n^u)^{-1}\right) \le 4Cc_1. 
\end{align*}
When $c_1$ is sufficiently small, this implies that at least $d/2$ eigenvalues of $\overline{\calI}_n^u$ must be at least $d$, which is the statement of \Cref{lem:posterior-target-spread}.

\subsection{Proof of \Cref{lem:Kt-property}}
In this section we verify the properties of $K_t$ in \Cref{lem:Kt-property}. 

\paragraph{Submartingale.} By the Bayes rule, the posterior density $\rho_{t,W}$ of $W$ is a martingale $\rho_{t,W} = \bbE[\rho_{t+1,W}\mid \calH_t]$, averaged over the randomness in $r_{t+1}$. Since the Fisher information $J(\cdot)$ is convex, we have
\begin{align*}
J(\rho_{t,W}) = J\left(\bbE[\rho_{t+1,W}\mid \calH_t] \right) \preceq \bbE[J(\rho_{t+1,W})\mid \calH_t]. 
\end{align*}
This implies the submartingale property of $(K_t)_{t\ge 0}$. 

\paragraph{Expected trace.} We claim that
\begin{align}\label{eq:claim-trace}
\bbE[\Tr(K_n)] = d + \frac{1}{d^2}\bbE[ \| \nabla_{W^\star} \log p(\calH_n \mid W^\star) \|_F^2 ], 
\end{align}
where $p(\calH_n \mid W^\star)$ is the conditional density of $\calH_n$ given only $W^\star$, marginalized over $u^\star$. To see it, let
\begin{align*}
S_0(W) = \nabla \log \rho_{0,W}(W), \quad S_n(W,\calH_n) = \nabla_W \log p(\calH_n \mid W) 
\end{align*}
be the score of the prior and likelihood, respectively. By Bayes' rule, we have $\nabla \log \rho_{n,W}(W^\star) = S_0(W^\star) + S_n(W^\star, \calH_n)$, and hence
\begin{align*}
\bbE[\Tr(K_n)] &= \frac{1}{d^2}\bbE[\|\nabla \log \rho_{t,W}(W^\star)\|_F^2] \\
&= \frac{1}{d^2}\bbE[\|S_0(W^\star) + S_n(W^\star,\calH_n)\|_F^2]  \\
&\overset{(a)}{=} \frac{1}{d^2}\bbE[\|S_0(W^\star)\|_F^2] + \frac{1}{d^2}\bbE[\|S_n(W^\star,\calH_n)\|_F^2] \\
&\overset{(b)}{=} d + \frac{1}{d^2}\bbE[\|S_n(W^\star,\calH_n)\|_F^2], 
\end{align*}
where (a) follows from the score equation $\bbE[S_n(W^\star,\calH_n)\mid W^\star]=0$, and (b) follows from $S_0(W^\star) = -dW^\star$ by the Gaussian prior. Therefore \cref{eq:claim-trace} holds. 

\medskip
\noindent To further upper bound $\bbE[\|S_n(W^\star,\calH_n)\|_F^2]$, note that 
\begin{align*}
S_n(W^\star, \calH_n) & = \nabla_{W^\star} \log p(\calH_n \mid W^\star) \\
&= \frac{1}{p(\calH_n \mid W^\star)} \int \nabla_{W^\star} p(\calH_n\mid W^\star, u^\star) p(u^\star\mid W^\star) \mathrm{d}u^\star\\
&= \frac{1}{p(\calH_n \mid W^\star)} \int \nabla_{W^\star} \log p(\calH_n\mid W^\star, u^\star) p(\calH_n, u^\star\mid W^\star) \mathrm{d}u^\star\\
&= \int p(u^\star\mid \calH_n, W^\star) \nabla_{W^\star} \log p(\calH_n \mid W^\star, u^\star)\mathrm{d}u^\star \\
&= \bbE[\nabla_{W^\star} \log p(\calH_n \mid W^\star, u^\star) \mid \calH_n, W^\star]. 
\end{align*}
By Jensen's inequality, 
\begin{align*}
\bbE[\|S_n(W^\star,\calH_n)\|_F^2] &\le \bbE[\|\nabla_{W^\star} \log p(\calH_n \mid W^\star, u^\star)\|_F^2] \\
&=\bbE\Big[\Big\|\sum_{t=1}^n Z_t \nabla_{W^\star} f_{\theta^\star}(a_t) \Big\|_F^2 \Big] = \sum_{t=1}^n \bbE[\|\nabla_{W^\star} f_{\theta^\star}(a_t)\|_F^2] \le \frac{n\varepsilon^2}{r^2}, 
\end{align*}
where the last step follows from the deterministic inequality
\begin{align*}
\|\nabla_{W^\star} f_{\theta^\star}(a_t)\|_F^2 = \sum_{i,j=1}^d |\nabla_{W_{ij}^\star} f_{\theta^\star}(a_t)|^2 &= \frac{\varepsilon^2}{r^2} (1-p_t)^2  \sum_{i,j=1}^d\frac{|(W^\star a_t^1 - \frac{r}{8\varepsilon}a_t^2)_i (a_t^1)_j|^2}{\|W^\star a_t^1 - \frac{r}{8\varepsilon}a_t^2\|^2} \\
&= \frac{\varepsilon^2}{r^2} (1-p_t)^2  \|a_t^1\|^2 \le \frac{\varepsilon^2}{r^2}.
\end{align*}
Now the target inequality \cref{eq:info-constraint} follows from \cref{eq:claim-trace}. 

\paragraph{Anti-concentration.} To prove the anti-concentration result in \cref{eq:anti-concentration}, we need the following lemma. 

\begin{lemma}\label{lem:anti-concentration}
Let $Z\sim f$ be a random vector in $\bbR^d$ with $d\ge 3$ and a finite Fisher information matrix $J(f)=\bbE[\nabla\log f(Z)\nabla\log f(Z)^\top]$. Then there is an absolute constant $C>0$ such that for every $s>0$, 
\begin{align*}
    \sup_{z\in \bbR^d} \bbP(\|Z-z\|\le s) \le \frac{Cs^2}{d^2}\Tr(J(f)). 
\end{align*}
\end{lemma}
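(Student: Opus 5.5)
The plan is to reduce the statement to the sharp Euclidean Sobolev inequality of \cite{talenti1976}, applied to $g\triangleq\sqrt{f}$, and then use H\"older's inequality on the ball $B(z,s)=\{x:\|x-z\|\le s\}$. Both the Fisher-information trace and the probability of a ball are translation invariant, so it suffices to bound $\bbP(\|Z-z\|\le s)$ uniformly in $z$.

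\medskip
\noindent\textbf{Step 1 (Fisher information as a Dirichlet energy).} Wherever $f>0$ we have $\nabla\log f=2\nabla g/g$. Hence
\begin{equation*}
\Tr(J(f))=\int \frac{\|\nabla f\|^2}{f}=4\int\|\nabla g\|^2 .
\end{equation*}
Since $\Tr(J(f))<\infty$, this shows $g\in \dot H^1(\bbR^d)$ with $\|g\|_2=1$. The zero set of $f$ needs some care. I would handle it by the standard approximation: replace $f$ by its convolution with a small Gaussian, which does not increase $J$ (Fisher information is convex and translation invariant), and then pass to the limit using lower semicontinuity. Alternatively, one can simply note that $\sqrt{f}$ is $H^1$ whenever $J(f)$ is finite.

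\medskip
\noindent\textbf{Step 2 (Sobolev plus H\"older).} Let $2^*=2d/(d-2)$, which requires $d\ge3$. Talenti's sharp inequality states
\begin{equation*}
\|g\|_{2^*}^2\le S_d\,\|\nabla g\|_2^2,\qquad S_d=\frac{1}{\pi d(d-2)}\Big(\frac{\Gamma(d)}{\Gamma(d/2)}\Big)^{2/d}.
\end{equation*}
H\"older's inequality with exponents $d/(d-2)$ and $d/2$ then gives
\begin{equation*}
\bbP(Z\in B(z,s))=\int_{B(z,s)}g^2\le |B(z,s)|^{2/d}\,\|g\|_{2^*}^2\le |B_s|^{2/d}\, \frac{S_d}{4}\,\Tr(J(f)).
\end{equation*}
Here $|B_s|^{2/d}=\pi s^2/\Gamma(d/2+1)^{2/d}$.

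\medskip
\noindent\textbf{Step 3 (dimension bookkeeping).} This is the main obstacle, since the $d^{-2}$ rate rests entirely on the constants. By Stirling's formula:
\begin{itemize}
\item $\Gamma(d/2+1)^{2/d}\asymp d/(2e)$, so $|B_s|^{2/d}\asymp 2\pi e\,s^2/d$;
\item $(\Gamma(d)/\Gamma(d/2))^{2/d}\asymp \big((d/e)^{d/2}\,2^{d}\big)^{2/d}\asymp 4d/e$, so $S_d\asymp 4/(\pi e\, d)$.
\end{itemize}
Multiplying, the $\pi$ and $e$ factors cancel and the product is $\asymp s^2/d^2$. I would check that the ratio of the exact expressions to these asymptotics is bounded uniformly over all $d\ge3$, not only as $d\to\infty$. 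This holds because each ratio is a continuous function of $d$ with a finite limit. That yields a single absolute constant $C$, and taking the supremum over $z$ completes the argument.
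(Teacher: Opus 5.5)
Your proposal is correct and follows essentially the same route as the paper. Both arguments use H\"older's inequality on the ball, then Talenti's sharp Sobolev inequality applied to $\sqrt f$ via $\int\|\nabla\sqrt f\|^2=\Tr(J(f))/4$, and then the scalings $\mathrm{vol}_d(s\bbB^d)^{2/d}=O(s^2/d)$ and $S_d=O(1/d)$. One harmless slip: $\Gamma(d)/\Gamma(d/2)\approx(2d/e)^{d/2}$, so $(\Gamma(d)/\Gamma(d/2))^{2/d}\to 2d/e$ rather than $4d/e$, which changes only the absolute constant.
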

\begin{proof}
Since the Fisher information is invariant with translations, it suffices to consider $z=0$. By Hölder's inequality, 
\begin{align*}
\bbP(\|Z\|\le s) = \int_{s\bbB^d} f(z)\mathrm{d}z \le \mathrm{vol}_d(s\bbB^d)^{\frac{2}{d}} \left( \int_{\bbR^d} f(z)^{\frac{d}{d-2}}\mathrm{d}z \right)^{\frac{d-2}{d}}. 
\end{align*}
Since $\mathrm{vol}_d(\bbB^d)^{1/d} = \sqrt{\pi} \Gamma(\frac{d}{2}+1)^{-1/d} = O(d^{-1/2})$, the first term scales as $\mathrm{vol}_d(s\bbB^d)^{\frac{2}{d}} = O(\frac{s^2}{d})$. For the second term, the sharp Euclidean Sobolev inequality in \cite{talenti1976} gives
\begin{align*}
\int_{\bbR^d} f(z)^{\frac{d}{d-2}}\mathrm{d}z  = \int_{\bbR^d} (\sqrt{f(z)})^{\frac{2d}{d-2}}\mathrm{d}z \le \left(C_d^2 \int_{\bbR^d} \|\nabla \sqrt{f(z)}\|^2 \mathrm{d}z\right)^{\frac{d}{d-2}}, 
\end{align*}
with the sharp constant $C_d$ given by
\begin{align*}
C_d = \frac{1}{\sqrt{\pi d(d-2)}}\left(\frac{\Gamma(d)}{\Gamma(\frac{d}{2})}\right)^{\frac{1}{d}} = O(d^{-1/2}). 
\end{align*}
A combination of the above displays gives
\begin{align*}
\bbP(\|Z\|\le s) \le \frac{Cs^2}{d^2} \int_{\bbR^d} \|\nabla \sqrt{f(z)}\|^2 \mathrm{d}z = \frac{Cs^2}{4d^2} \Tr(J(f)),
\end{align*}
as claimed. 
\end{proof}

\noindent We apply \Cref{lem:anti-concentration} to $Z=W^\star a_t^1$ and $z=\frac{r}{8\varepsilon} a_t^2$, with $W^\star \sim \rho_{t-1,W}$ following the posterior distribution of $W^\star$ given $\calH_{t-1}$. We claim that
\begin{align}\label{eq:Fisher-claim}
\frac{\Tr(J(Z))}{d^2}\le \frac{1}{{\aone_t}^\top K_{t-1}^{-1} a_t^1}. 
\end{align}
To show \cref{eq:Fisher-claim}, for notational simplicity, write $\rho = \rho_{t-1,W}$, $a=a_t^1$, and $W=W^\star$. We first show that the row score matrix $S_{W}(W) \in \bbR^{d\times d}$ with rows $(\nabla_{W_{i,:}}\log \rho(W)^\top)_{i\in [d]}$ and the vector score $S_{Z}(Z) = \nabla \log f_Z(Z)\in \bbR^d$ are related via the identity $\bbE[S_W(W)\mid Z] = S_Z(Z)a^\top$. To see it, note that for any smooth test function $\phi: \mathbb{R}^d \to \mathbb{R}$,
\begin{align*}
\bbE[\phi(Z)S_Z(Z)a^\top] = -\bbE[\nabla_Z \phi(Z)a^\top] = -\bbE[\nabla_W \phi(Wa)] = \bbE[\phi(Wa)S_W(W)]. 
\end{align*}
Next, since $\bbE[\nabla_{W_{i,:}}\log \rho(W)\mid Z] = S_{Z,i}(Z)a$ for every $i\in [d]$, conditional Jensen implies
\begin{align*}
\bbE[S_{Z,i}(Z)^2] a a^\top &= \bbE\left[\bbE[\nabla_{W_{i,:}}\log \rho(W)\mid Z] \bbE[\nabla_{W_{i,:}}\log \rho(W) \mid Z]^\top \right] \\
&\preceq \bbE[ \nabla_{W_{i,:}}\log \rho(W)\nabla_{W_{i,:}}\log \rho(W)^\top ]. 
\end{align*}
Summing over $i=1,\dots,d$ gives
\begin{align*}
\frac{\Tr(J(Z))}{d^2} aa^\top \preceq K(\rho) &\Longleftrightarrow \frac{\Tr(J(Z))}{d^2} (K(\rho)^{-1/2}a)(K(\rho)^{-1/2}a)^\top \preceq I_d \\
&\Longleftrightarrow \frac{\Tr(J(Z))}{d^2} \|K(\rho)^{-1/2}a\|^2 \le 1, 
\end{align*}
by definition of $K(\rho)$ in \cref{eq:Kt}. Since $K_{t-1}=K(\rho_{t-1,W})$, this gives \cref{eq:Fisher-claim}. Applying \Cref{lem:anti-concentration} now yields
\begin{align}\label{eq:basic-ineq}
\bbP(f_t^1 \le C) {\aone_t}^\top K_{t-1}^{-1} a_t^1 = \bbP(\|W^\star a_t^1 - \frac{r}{8\varepsilon}a_t^2\| \le Cr) {\aone_t}^\top K_{t-1}^{-1} a_t^1 \lesssim C^2 r^2. 
\end{align}

\noindent We now deduce the target from \cref{eq:basic-ineq}. Let $h_t={\aone_t}^\top K_{t-1}^{-1}\aone_t$, and $\calE=\{\norm{u^\star}\le1/2\}$. Under the event $\calE$, we have $f_t^2\le1$, and hence $p_t\le\exp(-(f_t^1-1)_+)$. Carrying out a dyadic decomposition over $f_t^1$ and applying \cref{eq:basic-ineq} at radii $C=2,4,8,\ldots$ gives
\begin{align*}
    h_t\bbE\left[
      p_t^2\Ind_{\calE}
      \,\middle|\,\calH_{t-1},a_t
    \right]
    &\lesssim
      h_t\bbP(f_t^1\le 2\mid\calH_{t-1},a_t)
      +\sum_{j\ge 1}e^{-c2^j}h_t
        \bbP(f_t^1\le 2^{j+1}\mid\calH_{t-1},a_t) \nonumber\\
    &\lesssim
      r^2+r^2\sum_{j\ge 1}4^{j+1}e^{-c2^j}
    \lesssim r^2.
\end{align*}
This is the desired bound in \cref{eq:anti-concentration}. 

\subsection{Proof of \Cref{lem:tension-estimation}}
We construct the orthogonal projection matrix $Q_0,\dots,Q_n$ recursively as follows. Initialize $Q_0 = I_d$. Suppose that $Q_{t-1}$ has been constructed at the end of time $t-1$. At time $t$, after $K_t$ is updated, perform the eigendecomposition of $Q_{t-1}K_tQ_{t-1}$: 
\begin{align*}
Q_{t-1}K_tQ_{t-1} = \sum_{j=1}^d \lambda_{t,j} u_{t,j} u_{t,j}^\top. 
\end{align*}
Clearly, every eigenvector $u_{t,j}$ with $\lambda_{t,j}>0$ must belong to the eigenspace associated with eigenvalue $1$ of the matrix $Q_{t-1}$. For some $\Lambda>0$ to be determined later, define
\begin{align}\label{eq:Qt}
Q_t = \sum_{j: 0<\lambda_{t,j}\le \Lambda} u_{t,j}u_{t,j}^\top 
\end{align}
be the orthogonal projection such that all eigendirections with eigenvalue larger than $\Lambda$ are removed from $Q_{t-1}$. It is then clear that $0\preceq Q_t \preceq Q_{t-1}$, $Q_{t-1}Q_t=Q_tQ_{t-1}=Q_t$, and $Q_tK_tQ_t \preceq \Lambda Q_t$. Also $Q_n$ is $\calH_n$-measurable. 

\medskip
\noindent We first choose $\Lambda$ to ensure that $\bbE[\Tr(I_d-Q_n)]\le \frac{d}{16}$. Since $K_t$ has eigenvalues at least $\Lambda$ on all directions of $Q_{t-1}-Q_t$, we have 
\begin{align*}
\Lambda \Tr(Q_{t-1}-Q_t) \le \Tr(K_t(Q_{t-1}-Q_t)). 
\end{align*}
Taking expectation on both sides, and using the submartingale property of $K_t\preceq \bbE[K_n\mid \calH_t]$ in \Cref{lem:Kt-property}, we get
\begin{align*}
\bbE[\Tr(Q_{t-1})] - \bbE[\Tr(Q_t)] \le \frac{1}{\Lambda}\bbE[\Tr(K_n(Q_{t-1}-Q_t))]. 
\end{align*}
Summing over $t=1,\dots,n$ and telescoping now gives
\begin{align*}
\bbE[\Tr(I_d-Q_n)] \le \frac{1}{\Lambda}\bbE[\Tr(K_n(I_d-Q_n))] \le \frac{1}{\Lambda}\bbE[\Tr(K_n)] \le \frac{1}{\Lambda}\left(d+ \frac{\varepsilon^2 n}{r^2d^2} \right), 
\end{align*}
by \Cref{lem:Kt-property}. Choosing
\begin{align}\label{eq:Lambda-estimation}
\Lambda = 16\left(1+\frac{\eps^2 n}{r^2d^3}\right), 
\end{align}
this gives $\bbE[\Tr(I_d-Q_n)]\le \frac{d}{16}$. 

\medskip
\noindent For the second inequality \cref{eq:tension}, note that $Q_tK_tQ_t\preceq \Lambda Q_t$ implies that $K_t^{-1} \succeq \frac{1}{\Lambda}Q_t$. By \cref{eq:anti-concentration} in \Cref{lem:Kt-property}, we have
\begin{align*}
    Cr^2 &\ge {\aone_t}^\top K_{t-1}^{-1} \aone_t \cdot \bbE[p_t^2\Ind_{\|u^\star\|\le 1/2}\mid \calH_{t-1},a_t] \\
    &\ge \frac{1}{\Lambda} \bbE[{\aone_t}^\top Q_{t-1} \aone_t\cdot p_t^2\Ind_{\|u^\star\|\le 1/2}\mid \calH_{t-1},a_t] \\
    &\ge \frac{1}{\Lambda} \bbE[{\aone_t}^\top Q_n \aone_t\cdot p_t^2\Ind_{\|u^\star\|\le 1/2}\mid \calH_{t-1},a_t],
\end{align*}
where we have used $Q_{t-1}\succeq Q_n$. Summing over $t\in [n]$ and using \cref{eq:target-likelihood-fisher} gives
\begin{align*}
    \bbE[\Tr(\calI_n^u Q_n)\Ind_{\|u^\star\|\le 1/2}]\le C\Lambda\varepsilon^2 r^2 n.  
\end{align*}
Therefore,
\begin{align*}
\bbE[\Tr(\overline{\calI}_n^u Q_n)] &= \bbE[\Tr(\calI_n^u Q_n)] \\
&= \bbE[\Tr(\calI_n^u Q_n)\Ind_{\|u^\star\|\le 1/2}] + \bbE[\Tr(\calI_n^u Q_n)\Ind_{\|u^\star\|> 1/2}] \\
&\lesssim \varepsilon^2 (\Lambda r^2 + e^{-cd})n, 
\end{align*}
where the last step follows from $\bbP(\|u^\star\|>\frac{1}{2})\le e^{-cd}$ and the deterministic inequality $\Tr({\calI}_n^u Q_n)\le \Tr({\calI}_n^u) \le n\varepsilon^2$. Plugging the expression of $\Lambda$ into \cref{eq:Lambda-estimation} gives \cref{eq:tension}. 

\subsection{Proof of \Cref{lem:tension-regret}}
The proof is similar to \Cref{lem:tension-estimation}, with the same construction of $Q_t$ but a different choice of the parameter $\Lambda$. Our proof will use an auxiliary matrix $\widetilde{K}_t$, whose construction is motivated by the following lemma. 

\begin{lemma}\label{lem:Kt-upper-bound}
Let $S_t(W) = \nabla_W \log \frac{\rho_{t,W}(W)}{\rho_{0,W}(W)}$ be the relative score, viewed as a $d\times d$ matrix. Then
\begin{align*}
K_t \preceq 2I_d + \frac{1}{d^2}\bbE[ S_t(W^\star)^\top S_t(W^\star) \mid \calH_t ]. 
\end{align*}
\end{lemma}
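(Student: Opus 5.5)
The plan is to decompose the posterior score into prior and relative parts and then apply the elementary inequality $(x+y)(x+y)^\top \preceq 2xx^\top + 2yy^\top$ row by row.

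\textbf{Decomposition.} By definition of $S_t$,
\begin{equation*}
\nabla_W\log\rho_{t,W}(W) = \nabla_W\log\rho_{0,W}(W) + S_t(W) = -dW + S_t(W),
\end{equation*}
using the Gaussian prior $\operatorname{vec}(W^\star)\sim N(0,\frac1d I_{d^2})$. Hence the $i$th row score is $-dW_{i,:}+S_t(W)_{i,:}$.

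\textbf{Row-wise bound.} For each row $i$,
\begin{equation*}
\bigl(-dW_{i,:}+S_{t,i}\bigr)\bigl(-dW_{i,:}+S_{t,i}\bigr)^\top \preceq 2d^2 W_{i,:}W_{i,:}^\top + 2 S_{t,i}S_{t,i}^\top ,
\end{equation*}
where rows are treated as column vectors in $\bbR^d$. Summing over $i$ gives $2d^2 W^\top W + 2S_t(W)^\top S_t(W)$. Taking expectation under $\rho_{t,W}$, which is the conditional law of $W^\star$ given $\calH_t$, and dividing by $d^2$ yields
\begin{equation*}
K_t \preceq 2\,\bbE[W^{\star\top}W^\star\mid\calH_t] + \frac{2}{d^2}\,\bbE[S_t^\top S_t\mid\calH_t].
\end{equation*}

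\textbf{Main obstacle.} The first term is the difficulty: it is not pathwise $\preceq I_d$, and the stated factor on the $S_t$ term is $1$, not $2$. A cruder bound therefore does not match the claim, and I would use an exact identity instead. Integrating by parts against $\rho_{t,W}$, with $\bbE_\rho[\nabla\log\rho\, g^\top]=-\bbE_\rho[\nabla g]$ applied to $g=W_{i,:}$, gives $\bbE_\rho[(\nabla_{W_{i,:}}\log\rho)W_{i,:}^\top] = -I_d$. Expanding the square exactly,
\begin{equation*}
\sum_i \bbE_\rho\bigl[(-dW_{i,:}+S_{t,i})(-dW_{i,:}+S_{t,i})^\top\bigr]
= \sum_i \bbE_\rho\bigl[S_{t,i}S_{t,i}^\top - d^2W_{i,:}W_{i,:}^\top\bigr] + d\sum_i\bigl(\text{sym. of } \bbE_\rho[W_{i,:}(\nabla_{W_{i,:}}\log\rho)^\top]\cdot 2\bigr)\text{-type terms}.
\end{equation*}
More precisely, write the posterior score as $\nabla\log\rho=-dW+S$, so that $S=\nabla\log\rho+dW$. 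The cross term is then $-d\,\bbE[W_{i,:}S_{t,i}^\top+S_{t,i}W_{i,:}^\top]$, and $\bbE[W_{i,:}S_{t,i}^\top] = \bbE[W_{i,:}(\nabla\log\rho)_i^\top] + d\,\bbE[W_{i,:}W_{i,:}^\top] = -I_d + d\,\bbE[W_{i,:}W_{i,:}^\top]$. Substituting, the $W$ second moments combine as $d^2 - 2d^2 = -d^2$. Summing over $i$ and dividing by $d^2$ gives
\begin{equation*}
K_t = \frac{2}{d}\cdot d\, I_d\cdot\frac{1}{d}\cdot d - \bbE[W^{\star\top}W^\star\mid\calH_t] + \frac{1}{d^2}\bbE[S_t^\top S_t\mid\calH_t],
\end{equation*}
that is, $K_t = 2I_d - \bbE[W^{\star\top}W^\star\mid\calH_t] + d^{-2}\bbE[S_t^\top S_t\mid\calH_t]$.

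\textbf{Conclusion.} Dropping the negative semidefinite term $-\bbE[W^{\star\top}W^\star\mid\calH_t]$ gives the claimed bound $K_t \preceq 2I_d + d^{-2}\,\bbE[S_t(W^\star)^\top S_t(W^\star)\mid\calH_t]$.

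The remaining work is the constant bookkeeping in the integration-by-parts step: the factor $d$ in the cross term times $d$ rows, together with the $1/d^2$ normalization, is what produces exactly $2I_d$. This step also needs decay of $\rho_{t,W}$ at infinity for the boundary terms to vanish, which holds because the posterior is the Gaussian prior multiplied by a bounded likelihood factor.
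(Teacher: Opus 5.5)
Your proof is correct and is essentially the paper's argument: you split the posterior score as $-dW+S_t$, use the integration-by-parts identity $\bbE_{\rho}[W_{i,:}(\nabla_{W_{i,:}}\log\rho)^\top]=-I_d$ to rewrite the cross term, and obtain the exact identity $K_t=2I_d-\bbE[W^{\star\top}W^\star\mid\calH_t]+d^{-2}\bbE[S_t^\top S_t\mid\calH_t]$, from which you drop the negative semidefinite term. The paper proves the same identity in vectorized form for $J(\rho_{t,W})$ and then takes the normalized output partial trace, whereas you work row by row; you should also replace the garbled intermediate display and the expression $\frac{2}{d}\cdot d\,I_d\cdot\frac{1}{d}\cdot d$ with the clean computation from your ``More precisely'' paragraph.
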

\begin{proof}
Let $w=\mathrm{vec}(W^\star) \in \bbR^{d^2}$ and $s=\mathrm{vec}(S_t(W^\star)) \in \bbR^{d^2}$. Since $\nabla_{W^\star} \log \rho_{0,W}(W^\star) = -dw$, we have $\nabla_{W^\star} \log \rho_{t,W}(W^\star) = -dw + s$. For any $v\in \bbR^{d^2}$, standard score identity gives
\begin{align*}
-\|v\|^2 &= \bbE[(v^\top w)(v^\top \nabla_{W^\star} \log \rho_{t,W}(W^\star))] \\
&= \bbE[(v^\top w)(v^\top (-dw + s) )] \\
&= -d\bbE[(v^\top w)^2] + \bbE[(v^\top w)(v^\top s)], 
\end{align*}
which implies that
\begin{align*}
d \bbE[ww^\top] = I_{d^2} + \frac{1}{2}\bbE[ws^\top + sw^\top]. 
\end{align*}
Therefore, the Fisher information of $\rho_{t,W}$ can be written as
\begin{align*}
J(\rho_{t,W}) &= \bbE[(-dw+s)(-dw+s)^\top] \\
&= d^2 \bbE[ww^\top] - d\bbE[ws^\top+sw^\top] + \bbE[ss^\top] \\
&= 2dI_{d^2} - d^2\bbE[ww^\top] + \bbE[ss^\top] \\
&\preceq 2dI_{d^2} +  \bbE[ss^\top]. 
\end{align*}
Taking the normalized output partial trace gives the claim. 
\end{proof}

\medskip
\noindent By simple algebra, one can show that 
\begin{align*}
S_t(W^\star) = \sum_{s=1}^t \bbE[Z_s \nabla_{W^\star} f_{\theta^\star}(a_s) \mid \calH_t, W^\star].
\end{align*}
Now we define
\begin{align}\label{eq:St-tilde}
\widetilde{S}_t(W^\star) = \sum_{s=1}^t \bbE[Z_s \nabla_{W^\star} f_{\theta^\star}(a_s) \mid \calH_t, W^\star] Q_{s-1}, 
\end{align}
and
\begin{align}\label{eq:Kt-tilde}
\widetilde{K}_t = 2I_d + \frac{1}{d^2}\bbE[ \widetilde{S}_t(W^\star)^\top \widetilde{S}_t(W^\star) \mid \calH_t ]. 
\end{align}
Intuitively, $\widetilde{S}_t$ is the  cumulative relative score along the directions which are still unlearned, and $\widetilde{K}_t$ is an upper bound of the information matrix relevant to the unlearned directions. 

\medskip
\noindent We establish three properties of $\widetilde{K}_t$: 
\begin{enumerate}
    \item $(\widetilde{K}_t)_{t\ge 0}$ is a submartingale. In fact, since $\bbE[Z_{t+1}\mid \calH_t, W^\star]=0$, we have $\bbE[\widetilde{S}_{t+1}(W^\star)\mid \calH_t, W^\star] = \widetilde{S}_t(W^\star)$ by \cref{eq:St-tilde}. Therefore, $\bbE[\widetilde{K}_{t+1}\mid \calH_t] \succeq \widetilde{K}_t$ by conditional Jensen. 
    \item For every $t\ge 1$, $Q_{t-1}K_tQ_{t-1}\preceq Q_{t-1}\widetilde{K}_tQ_{t-1}$. Because $Q_{s-1}Q_{t-1}=Q_{t-1}$ for all $s\le t$, we have $\widetilde{S}_t(W^\star)Q_{t-1} = S_t(W^\star)Q_{t-1}$, and the conclusion follows from \Cref{lem:Kt-upper-bound} and \cref{eq:Kt-tilde}. 
    \item The trace of $\widetilde{K}_n$ satisfies
\begin{align}\label{eq:Kt-tilde-trace}
\bbE[\Tr(\widetilde{K}_n)] \le 2d+\frac{\varepsilon^2}{r^2 d^2}  \sum_{t=1}^n \bbE\|Q_{t-1} \aone_t\|^2. 
\end{align}
Indeed, conditional Jensen's inequality gives
\begin{align*}
\bbE[\Tr(\widetilde{K}_n)] &\le 2d + \frac{1}{d^2} \bbE \left\| \sum_{t=1}^n Z_t\nabla_{W^\star}f_{\theta^\star}(a_t) Q_{t-1} \right\|_F^2 \\
&= 2d + \frac{1}{d^2}\sum_{t=1}^n \bbE\| \nabla_{W^\star}f_{\theta^\star}(a_t) Q_{t-1} \|_F^2 \\
&= 2d + \frac{\eps^2}{r^2d^2}\sum_{t=1}^n \bbE[(1-p_t)^2\|Q_{t-1}a_t^1\|^2], 
\end{align*}
where the last step follows from the computation of $\nabla_{W^\star} f_{\theta^\star}(a_t)$ in the proof of \Cref{lem:Kt-property}. This implies \cref{eq:Kt-tilde-trace}. 
\end{enumerate}
\noindent To proceed, note that the second property implies
\begin{align*}
\Tr(K_t(Q_{t-1}-Q_t)) &= \Tr((Q_{t-1}-Q_t)K_t(Q_{t-1}-Q_t)) \\
&= \Tr((Q_{t-1}-Q_t)Q_{t-1}K_tQ_{t-1}(Q_{t-1}-Q_t)) \\
&\le \Tr((Q_{t-1}-Q_t)Q_{t-1}\widetilde{K}_tQ_{t-1}(Q_{t-1}-Q_t)) \\
&= \Tr(\widetilde{K}_t(Q_{t-1}-Q_t)), 
\end{align*}
and thus the proof of \Cref{lem:tension-estimation} gives
\begin{align*}
\Lambda \Tr(Q_{t-1}-Q_t) \le \Tr(K_t(Q_{t-1}-Q_t)) \le \Tr(\widetilde{K}_t(Q_{t-1}-Q_t)). 
\end{align*}
Using the same telescoping and the submartingale property of $(\widetilde{K}_t)_{t\ge 0}$, we arrive at 
\begin{align}\label{eq:codim-from-trace}
\bbE[\Tr(I_d-Q_n)] \le \frac{1}{\Lambda}\bbE[\Tr(\widetilde{K}_n)] \le \frac{1}{\Lambda}\left(2d + \frac{\varepsilon^2}{r^2 d^2}  \sum_{t=1}^n \bbE[\|Q_{t-1} \aone_t\|^2] \right). 
\end{align}

\medskip
\noindent To relate this quantity to the regret, by \cref{eq:target-excess-mse}: 
\begin{align*}
f_{\theta^\star}(a) - f_{\theta^\star}(a_{\theta^\star}) \ge \frac{c\varepsilon}{r}\|W^\star a^1 - \frac{r}{8\varepsilon}a^2\| =: \frac{c\varepsilon}{r}D(a). 
\end{align*}
Next, for every $s>0$, \cref{eq:basic-ineq} gives
\begin{align*}
\bbP(D(a_t)\le s \mid \calH_{t-1}, a_t) {\aone_t}^\top K_{t-1}^{-1} \aone_t \le C s^2. 
\end{align*}
Choosing $s\asymp ({\aone_t}^\top K_{t-1}^{-1} \aone_t)^{1/2}$ and applying Markov's inequality, this gives $$\bbE[D(a_t)\mid \calH_{t-1}, a_t] \ge c'({\aone_t}^\top K_{t-1}^{-1} \aone_t)^{1/2} \ge \frac{c'}{\sqrt{\Lambda}}\|Q_{t-1}\aone_t\|$$ for absolute constants $c'>0$, using $K_{t-1}^{-1}\succeq \frac{1}{\Lambda}Q_{t-1}$. As a result, 
\begin{align*}
\mathfrak{R}_n \ge \frac{c\varepsilon}{r}\sum_{t=1}^n \bbE[D(a_t)] \ge \frac{cc'\varepsilon}{\sqrt{\Lambda} r} \sum_{t=1}^n \bbE[\|Q_{t-1}\aone_t\|] \ge \frac{cc'\varepsilon}{\sqrt{\Lambda} r} \sum_{t=1}^n \bbE[\|Q_{t-1}\aone_t\|^2],
\end{align*}
which together with \cref{eq:codim-from-trace} gives
\begin{align*}
\bbE[\Tr(I_d-Q_n)] \le \frac{1}{\Lambda}\bbE[\Tr(\widetilde{K}_n)] \le C\left(\frac{d}{\Lambda}+ \frac{\eps \mathfrak{R}_n}{rd^2\sqrt{\Lambda}}\right). 
\end{align*}
Finally, we choose 
\begin{align*}
\Lambda = C'\left( 1 + \frac{\eps^2 \mathfrak{R}_n^2}{r^2 d^6} \right)
\end{align*}
with a large enough constant $C'>0$ to ensure that $\bbE[\Tr(I_d-Q_n)]\le \frac{d}{16}$. Then \cref{eq:tension-2} follows from the same analysis in the proof of \Cref{lem:tension-estimation} and the new choice of $\Lambda$ above.

\section{Discussion}
\label{sec:discussion}

\noindent This section examines the scope of the lower bound.  We first show a matching algorithm achieving an $\calOtilde ( \sqrt{d} n^{3/4} \wedge d^{4/3} \sqrt{n})$ regret upper bound when noisy observations are generated from a function belonging to the support of the bounded family considered in \Cref{thm:gaussian-hidden-map-estimation}. We then extend the lower bounds in \Cref{thm:convex-optimization-lower-bound,thm:convex-bandit-regret-lower-bound} to unconstrained action spaces. We finally discuss some extensions that are unlikely to succeed in improving the construction.

\input{algorithm_new}

\subsection{Extension to Unconstrained Action Spaces}
\label{sec:unconstrained-action-spaces}

The lower bounds in
\Cref{thm:convex-optimization-lower-bound,thm:convex-bandit-regret-lower-bound}
were proved for the Euclidean ball.  We now show that the same bounds hold
when the learner may query any point in $\bbR^d$.  The main difficulty is that
a direct extension of the hard functions could make distant queries highly
informative.  We avoid this by joining each hard function to a fixed conic
branch outside a neighborhood of the unit ball.

\begin{theorem}[Unconstrained Estimation and Regret Lower Bounds]
\label{thm:unconstrained-convex-bandit-regret-lower-bound}
There are universal constants $c_0,c_1,c,C>0$ such that the following holds
for all sufficiently large $d$.  For every $\eps\in(0,1/2]$ and every
adaptive learner that makes
\begin{equation}
    n<c_1\min\left\{
      \frac{d^{5/2}}{\eps^2},\frac{d^2}{\eps^4}
    \right\}
    \label{eq:unconstrained-query-budget}
\end{equation}
noisy function-value queries and outputs an $\calH_n$-measurable
$\widehat a\in\bbR^d$, there exists a function
$f\in\calF_d(\bbR^d)$ such that
\begin{equation*}
    \bbE_f\left[
      f(\widehat a)-\min_{a\in\bbR^d}f(a)
    \right]
    \ge c_0\eps.
\end{equation*}
Moreover, for every $d\ge C$ and $n\ge Cd^2$,
\begin{equation}
    \mathfrak R_n^\star(d;\bbR^d)
    \ge c\min\left\{
      \sqrt d\,n^{3/4},d^{4/3}\sqrt n
    \right\}.
    \label{eq:unconstrained-convex-bandit-regret-lower-bound}
\end{equation}
\end{theorem}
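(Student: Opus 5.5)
We extend each hard function $f_{\theta^\star}$ from $\bbB^{2d}$ to all of $\bbR^{2d}$ by gluing it to a fixed conic branch: define
\[
g_{\theta^\star}(a)=\max\bigl\{f_{\theta^\star}(a),\ \phi(\|a\|)\bigr\},
\]
where $\phi(s)=L(s-s_0)+B$ is an increasing affine function. We choose $L\le 1$, and choose $(s_0,B)$ so that two conditions hold:
\begin{enumerate}[(a)]
\item $\phi(\|a\|)<\inf_{\theta}f_{\theta}(a)$ for all $\|a\|\le 1+\delta$, uniformly over the support of $\pi_{\mathrm b}$ (possible since $f_\theta$ is bounded below there by $-C\eps$ and above by $C\eps/r$ on bounded sets);
\item $\phi(\|a\|)>\sup_\theta f_\theta(a)$ for all $\|a\|\ge R_0$, a fixed radius (possible since $f_\theta$ is $1$-Lipschitz with $|f_\theta(0)|\le C\eps$; we may need $L$ slightly larger than the Lipschitz constant, so we first rescale $f_\theta$ by $1/2$, costing only constants).
\end{enumerate}
Then $g_\theta$ is convex as a maximum of convex functions and $1$-Lipschitz. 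It coincides with $f_\theta$ on $\bbB^{2d}$, and its minimizer is still $a_\theta\in\bbB^{2d}$. For $\|a\|\ge R_0$ we have $g_\theta(a)=\phi(\|a\|)$, which does not depend on $\theta$. A final rescaling of the action variable moves the relevant region inside the unit ball, so the minimizer lies in $\bbB^{2d}$, as $\calF_d(\bbR^d)$ requires.

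Next we reduce unconstrained learners to the machinery of \Cref{sec:hidden-map-family}. On the annulus $1\le\|a\|\le R_0$, $g_\theta(a)$ depends on $\theta$ only through $f_\theta(a)$, so queries there are no more informative than before. We then recheck each of the main lemmas with actions allowed in $R_0\bbB^{2d}$.

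\textbf{Score bound.} The $W^\star$-score bound $\|\nabla_{W^\star} g\|_F^2\le \eps^2\|\aone\|^2/r^2$ picks up a factor $R_0^2=O(1)$, and \cref{eq:info-constraint} holds up to constants. Queries with $\|a\|>R_0$ carry zero score.

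\textbf{Fisher information for $u^\star$.} Similarly $\calI_n^u$ gains a constant factor. \Cref{lem:posterior-target-spread} is unchanged.

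\textbf{Anti-concentration.} \Cref{lem:Kt-property}(3) needs only $f_t^2\le C$ on $\{\|u^\star\|\le 1/2\}$, which remains true for $\|\aone\|\le R_0$ with a larger constant. For queries where the max is attained by $\phi$, the relevant gradient weight is $0$.

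\textbf{Regret.} \Cref{lem:tension-regret} uses $g_\theta(a)-g_\theta(a_\theta)\ge \frac{c\eps}{r}D(a)$. This extends because $g_\theta\ge f_\theta$ everywhere, and \cref{eq:target-excess-mse} holds on $R_0\bbB^{2d}$ up to constants. Beyond $R_0$ we have $g_\theta-g_\theta(a_\theta)\ge c$, which dominates any per-round regret of order $\eps$. The information bound is then obtained by treating such rounds as carrying no information.

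Finally we rerun \Cref{thm:gaussian-hidden-map-estimation,thm:gaussian-hidden-map-regret} and \Cref{lem:conditioning-transfer}. For estimation, we project $\widehat a$ onto $R_0\bbB^{2d}$, which can only decrease suboptimality, and then apply \cref{eq:target-excess-mse}.

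The main obstacle is choosing $\phi$ so that both conditions hold uniformly while keeping the Lipschitz constant at $1$. This is delicate because $f_\theta$ on $\bbB^{2d}$ ranges over an interval of width $\Theta(\eps/r)$, which may be $\Theta(1)$. The margins are then tight, and I expect to need the global $1/2$-rescaling together with an $O(1)$ choice of $R_0$. The other delicate point is re-verifying \cref{eq:target-excess-mse} on the annulus, where $f_\theta$ must be evaluated outside the ball.
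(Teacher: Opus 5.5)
Your architecture matches the paper's: glue to a parameter-free cone, rerun the Gaussian-prior lemmas, then transfer to $\pi_{\mathrm b}$. However, the gluing you chose breaks steps you call unchanged.

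\textbf{The hard maximum.} With a hard maximum, \Cref{lem:posterior-target-spread} is not available as is. Its proof (\Cref{lem:likelihood-Fisher}) differentiates the log-likelihood twice in $u$. It then controls the curvature term $\sum_t Z_t\nabla_u^2\mu_{\theta^\star}(a_t)$ by a matrix Freedman bound, which requires $(\nabla_u^2\mu_{\theta^\star}(a_t))^2$ to be dominated by the increment of the $u$-Fisher matrix. For $\max\{f_\theta(a),\phi(\norm a)\}$ the $u$-Hessian has a singular part supported on the switching set $\{f_\theta(a)=\phi(\norm a)\}$. For a fixed annulus query this set is a hypersurface in parameter space, so the posterior charges both sides of it and the bound fails. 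The paper instead glues with $\Gamma+\Psi(\lambda\widetilde f_\theta-\Gamma-2\lambda)$, where $\Psi''\le 3/(2\lambda)$. The small factor $\lambda=1/64$ is exactly what lets the extra $\lambda^2\eps^2\Psi''(s_t)p_t^2$ term fit under $\lambda\eps p_t$.

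\textbf{Outside the ball.} The formula $f_\theta$ is $1$-Lipschitz only on $\bbB^{2d}$. Its quadratic branch grows like $\eps\norm{\aone}^2/2$, so (b) fails at large radius for every $\theta$, and $g_\theta$ is neither Lipschitz nor parameter-free there. A Lipschitz extension such as $\inf_{z\in\bbB^{2d}}\{f_\theta(z)+\norm{a-z}\}$ repairs this. But then $\theta$ enters on your annulus through a $\theta$-dependent minimizing $z$. Your score and anti-concentration checks assume the explicit formula evaluated at the $\calH_{t-1}$-measurable $\aone_t$, so they no longer apply. The paper avoids any annulus: the cone already takes over for $\norm a\ge\mathfrak r_0=30/31<1$. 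Hence $\theta$ enters only through $f_\theta$ on the unit ball, and your condition (a) is unnecessary.

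\textbf{The Gaussian prior and the transfer.} All the lemmas are proved under the Gaussian prior. There, (b), the location of $a_\theta$, and the regret domination $g_\theta(a)-g_\theta(a_\theta)\ge f_\theta(a)-f_\theta(a_\theta)$ hold only on $\calE_{\mathrm b}$. So you need an auxiliary observation model that is parameter-free outside a fixed ball for every $\theta$; this is the role of the paper's $\mu_\theta$. The regret half of \Cref{lem:conditioning-transfer} also cannot simply be rerun. Its remainder uses a per-round regret bound valid only for actions in the ball, whereas unconstrained actions can incur unbounded regret on $\calE_{\mathrm b}^c$. The paper instead transfers the clipped in-ball tube cost $\mathfrak D_n$ and the bounded estimate $\frac1n\sum_t\Pi^1(a_t)$. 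Alternatively, you could first restrict learners to $R_0\bbB^{2d}$ by simulating the parameter-free observations outside, and then transfer $\sum_t\bbE[D(a_t)]$ rather than the regret.
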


\medskip
\noindent The formal proof is deferred to
\Cref{sec:conic-extension-proofs}.  We describe here the construction and why
the information bottlenecks from Sections~2 and~3 continue to apply.

\paragraph{Smoothed conic extension.}
We work in dimension $2d$.  Let $\theta=(W,u)$ belong to the support of
the bounded prior $\pi_{\mathrm b}$, and let $f_\theta$ be the hard function
in \cref{eq:intro-hidden-map-softmax}.  The parameters below satisfy
$\eps\le1/2$ and $\eps/r\le1/16$.  By
\Cref{lem:target-excess-controls-estimation,lem:high-probability-lipschitz},
$f_\theta$ is convex and $1$-Lipschitz on $\bbB^{2d}$, satisfies
$f_\theta(0)=0$, and has minimizer
\begin{equation*}
    a_\theta
    =\left(u,\frac{8\eps}{r}Wu\right),
    \qquad
    \norm{a_\theta}\le\frac{3\sqrt5}{8}<1.
\end{equation*}
Define the radial projection
\begin{equation*}
    \Pi(z)\triangleq\frac{z}{1\vee\norm z}.
\end{equation*}
For $a=(\aone,\atwo)$, write
$\Pi(a)=(\Pi^1(a),\Pi^2(a))$, where
$\Pi^j(a)=a^j/(1\vee\norm a)$ for $j\in\{1,2\}$.

\medskip
\noindent Fix $\lambda=1/64$, and let $\Psi$ be the convex smoothed hinge in
\cref{eq:smoothed-hinge-definition}.  Define
\begin{align}
    \widetilde f_\theta(a)
      &\triangleq\inf_{z\in\bbB^{2d}}
        \{f_\theta(z)+\norm{a-z}\},\nonumber\\
    \Gamma(a)
      &\triangleq\lambda+\frac12(\norm a-1),\nonumber\\
    g_\theta(a)
      &\triangleq\Gamma(a)+
        \Psi\left(
          \lambda\widetilde f_\theta(a)-\Gamma(a)-2\lambda
        \right).
    \label{eq:unconstrained-hard-function}
\end{align}
The infimal-convolution extension $\widetilde f_\theta$ agrees with
$f_\theta$ on the unit ball and remains convex and $1$-Lipschitz on
$\bbR^{2d}$.  The fixed branch $\Gamma$ prevents queries far from the ball
from revealing the parameter, while $\Psi$ joins the two branches without
destroying convexity or Lipschitzness.

\begin{lemma}[Conic Extension]
\label{lem:conic-extension}
There is a universal $\mathfrak r_0<1$ such that, for every $\theta$ in the
support of $\pi_{\mathrm b}$, the function
$g_\theta$ belongs to $\calF_{2d}(\bbR^{2d})$, and $a_\theta$ is a global
minimizer of $g_\theta$.  Moreover,
$g_\theta(a)=\Gamma(a)$ whenever $\norm a\ge\mathfrak r_0$, and, for every
$a\in\bbR^{2d}$,
\begin{equation}
    g_\theta(a)-g_\theta(a_\theta)
    \ge
    \lambda\bigl(
      f_\theta(\Pi(a))-f_\theta(a_\theta)
    \bigr)
    \ge
    c\lambda\eps\norm{\Pi^1(a)-u}^2.
    \label{eq:conic-regret-domination}
\end{equation}
\end{lemma}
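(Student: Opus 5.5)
The plan is to treat $g_\theta$ as a smoothed maximum of two branches. Set $h(a)\triangleq\lambda\widetilde f_\theta(a)-2\lambda$, so that $g_\theta=\Gamma+\Psi(h-\Gamma)$. I will only use the following properties of the smoothed hinge in \cref{eq:smoothed-hinge-definition}, which I would check first:
\begin{itemize}
\item $\Psi$ is convex and $C^1$ with $0\le\Psi'\le1$;
\item $\Psi(x)\ge\max\{x,0\}$;
\item $\Psi(x)=0$ for $x\le-x_0$ and $\Psi(x)=x$ for $x\ge x_0$, for some threshold $x_0$ of order $\lambda$.
\end{itemize}

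\textbf{Convexity and Lipschitzness.} The map $\phi(x,y)=y+\Psi(x-y)$ has Hessian $\Psi''(x-y)\begin{pmatrix}1&-1\\-1&1\end{pmatrix}\succeq0$, so it is jointly convex. Its partial derivatives are $\Psi'\ge0$ and $1-\Psi'\ge0$, so it is nondecreasing in each argument. Hence $g_\theta=\phi(h,\Gamma)$ is convex, since $h$ and $\Gamma$ are convex. Convexity of $h$ uses the standard fact that $\widetilde f_\theta$, an infimal convolution of convex $f_\theta$ on the ball with the norm, is convex, $1$-Lipschitz and equal to $f_\theta$ on $\bbB^{2d}$. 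Any (sub)gradient of $g_\theta$ is a convex combination $\Psi'\nabla h+(1-\Psi')\nabla\Gamma$ with $\|\nabla h\|\le\lambda$ and $\|\nabla\Gamma\|\le1/2$, so $g_\theta$ is $1$-Lipschitz.

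\textbf{Regret domination.} Since $\Psi(x)\ge x$, we have $g_\theta\ge h$ everywhere. Next I show $g_\theta(a_\theta)=h(a_\theta)$, i.e. $h(a_\theta)-\Gamma(a_\theta)\ge x_0$. This uses $\|a_\theta\|\le3\sqrt5/8$, so that $\Gamma(a_\theta)\le\lambda-\tfrac12(1-3\sqrt5/8)$, together with $f_\theta(a_\theta)\ge-C\eps$ from \Cref{lem:high-probability-lipschitz}. With $\lambda=1/64$ this gives a margin of a few multiples of $\lambda$. Then
\begin{equation*}
g_\theta(a)-g_\theta(a_\theta)\ge\lambda\bigl(\widetilde f_\theta(a)-f_\theta(a_\theta)\bigr).
\end{equation*}
Here $\Pi$ is exactly the Euclidean projection onto $\bbB^{2d}$, which is nonexpansive. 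For any $z\in\bbB^{2d}$, Lipschitzness of $f_\theta$ then gives
\begin{equation*}
f_\theta(\Pi(a))\le f_\theta(z)+\|\Pi(a)-z\|\le f_\theta(z)+\|a-z\|.
\end{equation*}
Taking the infimum over $z$ yields $\widetilde f_\theta(a)\ge f_\theta(\Pi(a))$. This proves the first inequality in \cref{eq:conic-regret-domination}. The second inequality follows from \cref{eq:target-excess-mse} applied at $\Pi(a)\in\bbB^{2d}$. The same chain shows that $a_\theta$ is a global minimizer, and $a_\theta\in\bbB^{2d}$ gives $g_\theta\in\calF_{2d}(\bbR^{2d})$.

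\textbf{Conic region.} For $\|a\|\ge\mathfrak r_0$ I need $h(a)-\Gamma(a)\le-x_0$, so that $\Psi$ vanishes and $g_\theta=\Gamma$. I bound $\widetilde f_\theta(a)\le f_\theta(\Pi(a))+(\|a\|-1)_+\le C\eps+(\|a\|-1)_+$ by taking $z=\Pi(a)$. This gives
\begin{equation*}
h-\Gamma\le\lambda C\eps-3\lambda-\bigl(\tfrac12-\lambda\bigr)(\|a\|-1)_+ +\tfrac12(1-\|a\|)_+.
\end{equation*}
This is at most $-x_0$ once $1-\|a\|$ is a small multiple of $\lambda$, which defines a universal $\mathfrak r_0<1$. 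It must still exceed $\|a_\theta\|$, consistent with the previous step.

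\textbf{Main obstacle.} I expect the main difficulty to be this constant bookkeeping. The threshold $x_0$ of $\Psi$, the bound on $\sup_{\bbB^{2d}}|f_\theta|$ of order $\eps\le1/2$ (with its Lipschitz constants), and $\lambda=1/64$ must simultaneously place $a_\theta$ in the linear regime of $\Psi$ and the shell $\|a\|\ge\mathfrak r_0$ in its zero regime. I would first compute these margins explicitly from the definition of $\Psi$. If they are too tight, I would adjust the choice of $\mathfrak r_0$ rather than the structure of the argument.
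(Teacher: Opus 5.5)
Your architecture is the paper's. Convexity and the Lipschitz bound (in fact $1/2$-Lipschitz) come from composing $\lambda\widetilde f_\theta-2\lambda$ and $\Gamma$ with the jointly convex, coordinatewise nondecreasing map $(x,y)\mapsto y+\Psi(x-y)$. Global minimality and the first inequality come from an affine minorant of $\Psi$ that is tight at $a_\theta$, because $a_\theta$ lies in the linear regime of $\Psi$. The collar comes from an upper bound on $\widetilde f_\theta$ that pushes the argument of $\Psi$ into its zero regime.

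The one real variation is your bound $\widetilde f_\theta(a)\ge f_\theta(\Pi(a))$ for all $a$. It follows from nonexpansiveness of the projection $\Pi$ onto $\bbB^{2d}$ and $1$-Lipschitzness of $f_\theta$ there. This is correct and treats all $a$ uniformly. The paper instead splits into $\|a\|\le 1$ and $\|a\|>1$, and in the second case uses the collar ($g_\theta=\Gamma\ge\lambda$) together with $f_\theta(\Pi(a))\le 1$. Your route makes the domination inequality independent of the collar.

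Two of the quantitative inputs you plan to use are wrong, however.

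First, the paper's $\Psi$ satisfies $\Psi(x)=x+\lambda/2$, not $x$, for $x\ge 0$. The correct minorant is $\Psi(x)\ge x+\lambda/2$, giving $g_\theta\ge\lambda\widetilde f_\theta-\tfrac{3\lambda}{2}$ with equality at $a_\theta$, where the argument of $\Psi$ is positive. The constant cancels in $g_\theta(a)-g_\theta(a_\theta)$, so this slip is harmless.

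Second, and more seriously, $\sup_{\bbB^{2d}}|f_\theta|$ is \emph{not} of order $\varepsilon$. The tube branch contributes $\tfrac18\|a^2\|$ regardless of $\varepsilon$; for example, $f_\theta(0,a^2)\ge\tfrac18-\varepsilon\log 2$ when $\|a^2\|=1$. Hence your conic-step bound $\widetilde f_\theta(a)\le C\varepsilon+(\|a\|-1)_+$ is false for small $\varepsilon$, and that step fails as written. Your lower bound at the minimizer is fine, since $f_\theta(a_\theta)=\varepsilon\operatorname{softmax}(0,-\|u\|^2/2)\in[-\varepsilon\log 2,0]$.

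The repair is the paper's: take $z=0$ in the infimal convolution and use $f_\theta(0)=0$ to get $\widetilde f_\theta(a)\le\|a\|$ for every $a$. Then $\lambda\widetilde f_\theta(a)-\Gamma(a)-2\lambda\le\tfrac12-3\lambda-(\tfrac12-\lambda)\|a\|$. With $\lambda=1/64$, this is at most $-\lambda$ (where $\Psi$ vanishes) exactly when $\|a\|\ge 30/31$. The margin comes from the $-2\lambda$ offset and the factor $\lambda$ in front of $\widetilde f_\theta$. Together these absorb an $O(1)$ value of $\widetilde f_\theta$, so no smallness of $f_\theta$ is needed, and none is available.
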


\medskip
\noindent Thus every query with $\norm a\ge\mathfrak r_0$ has a mean loss that is
independent of $\theta$, while every nearly optimal action must have
$\Pi^1(a)$ close to $u$. 

\paragraph{Auxiliary observation model.}
The bounded prior is needed to guarantee Lipschitzness, whereas the
information arguments in Sections~2 and~3 use the unconditioned Gaussian
prior $\pi$.  To connect the two, define the auxiliary mean
\begin{equation}
    \mu_\theta(a)
    \triangleq
    \begin{cases}
      \Gamma(a)+
      \Psi\left(
        \lambda f_\theta(a)-\Gamma(a)-2\lambda
      \right),&\norm a<1,\\
      \Gamma(a),&\norm a\ge1.
    \end{cases}
    \label{eq:smoothed-hinge-gaussian-auxiliary-mean}
\end{equation}
On the bounded-prior event $\calE_{\mathrm b}$,
$\mu_\theta=g_\theta$ at every action.  Under the Gaussian prior,
$\mu_\theta$ is parameter independent outside the unit ball; hence all of
its parameter derivatives vanish there.  Inside the ball, the chain rule and
the derivative bounds for $\Psi$ preserve, up to universal factors, the
score, Hessian, and anti-concentration estimates used in Sections~2 and~3. Using similar techniques in \Cref{sec:hidden-map-family}, we first prove estimation complexity and regret lower bounds under the auxiliary observation model in \cref{eq:smoothed-hinge-gaussian-auxiliary-mean} and the Gaussian prior, and then move to bounded priors using the same ideas in \Cref{subsec:bounded-prior-reduction}. We defer the details to \Cref{sec:conic-extension-proofs}.

\subsection{Limitations of the Construction and Open Questions}
\label{sec:limitations-open-questions}

\Cref{thm:constructed-family-regret-upper-bound} shows that tuning the
tube width, $r$, of the present construction cannot improve its dimension dependence.
We next discuss why several direct extensions also do not immediately yield a
stronger lower bound. It is important to point out that some of these are based on heuristic arguments. For more than two arguments,
we use the notation
\begin{equation*}
  \smax(z_1,\ldots,z_m)
  \triangleq \log\bigg(\sum_{j=1}^m e^{z_j}\bigg)-\log(m).
\end{equation*}

\begin{enumerate}
  \item \emph{Changing the tube width.}  \Cref{thm:constructed-family-regret-upper-bound}
  applies for every admissible $r$ and achieves regret $\calOtilde ( \sqrt{d} n^{3/4} \wedge d^{4/3} \sqrt{n} )$.  Thus changing $r$ alone cannot produce
  a larger regret lower bound for the prior considered.

  \item \emph{Several tubes sharing one output block.}  With two hidden maps,
  one might consider
  \begin{equation*}
    f_\theta(\aone,\atwo)
    =\eps\,\smax\left(
      \frac1r\left\|W_1^\star\aone-\frac{r}{8\eps}\atwo\right\|,
      \frac1r\left\|W_2^\star\aone-\frac{r}{8\eps}\atwo\right\|,
      \frac12\|\aone-u^\star\|^2-\frac12\|u^\star\|^2
    \right).
  \end{equation*}
  At $\aone=u^\star$, the same $\atwo$ must lie near both tube centers.  This
  requires $W_1^\star u^\star\approx W_2^\star u^\star$, which independent
  maps generally do not satisfy.  Thus the construction typically has no
  low-loss action near $\aone=u^\star$, and the intended estimation and regret reductions
  break down.

  \item \emph{Giving the tubes separate action blocks.}  The preceding
  conflict disappears if each tube has its own output block.  The tubes may
  either branch from the same input or be chained serially.  As an example of
  the latter, for $a=(a^1,a^2,a^3)\in(\bbR^d)^3$ one could consider
  \begin{equation*}
    f_\theta(a^1,a^2,a^3)
    =\eps\,\smax\left(
      \frac1r\left\|W_1^\star a^1-\frac{r}{8\eps}a^2\right\|,
      \frac1r\left\|W_2^\star a^2-\frac{r}{8\eps}a^3\right\|,
      \frac12\|a^1-u^\star\|^2-\frac12\|u^\star\|^2
    \right).
  \end{equation*}
  In either layout, the learner need not guess the tubes simultaneously: it
  can learn and recenter one block at a time.  Thus the cost of learning the tubes grows additively rather than multiplicatively in the number of tubes. Since the action dimension also grows with the number of blocks, this modification therefore does not suggest a better exponent for $d$.
\end{enumerate}

\noindent These observations do not rule out other ways to extend this construction, e.g., by coupling the hidden tubes in a way that prevents such sequential recentering.  The best-known upper bounds have leading term $\calOtilde(d^{3/2}\sqrt{n})$ in both the unconstrained and Euclidean-ball settings~\citep{lattimoregyorgy2023,fokkema2024}. We therefore make the following conjecture, which is a slight refinement of a conjecture posed by \cite{bubeckeldanlee2021}.

\begin{conjecture}[Minimax Regret]
\noindent The minimax expected regret of stochastic bandit convex optimization, in both the unconstrained setting $\calA = \bbR^d$, and the Euclidean ball action space $\calA = \bbB^d$ satisfies
\begin{equation*}
    \mathfrak R_n^\star(d ; \calA)
    =\widetilde{\Theta}(d^{3/2}\sqrt{n} \wedge \sqrt{d} n^{3/4} \wedge n).
\end{equation*}
\end{conjecture}

\subsection*{Acknowledgements}

NR would like to thank Dylan Foster for helpful comments and discussions over the course of writing this paper.

\bibliographystyle{plainnat}
\bibliography{references}

\appendix
\crefalias{section}{appendix}
\crefname{appendix}{appendix}{appendices}
\Crefname{appendix}{Appendix}{Appendices}
\phantomsection
\addcontentsline{toc}{section}{Appendix}
\addtocontents{toc}{\protect\setcounter{tocdepth}{0}}
\input{appendix}

\end{document}

%% file: commands.tex
\renewcommand{\Pr}{\mathrm{Pr}}

\usepackage{cases}

\newcommand{\Alg}{\mathsf{Alg}}
\newcommand{\calOtilde}{\widetilde{\mathcal{O}}}

\usepackage{xcolor}



\usepackage[most]{tcolorbox}
\tcbuselibrary{skins,breakable}

\newtcolorbox[
  auto counter,
  number within=section,
  number freestyle={\noexpand\thesection.\noexpand\arabic{\tcbcounter}~\noexpand\mytitle},
  crefname={Meta-algorithm}{Meta-algorithms}
]{metaalgorithm}[2][]{
  breakable,
  enhanced,
  colback=white,
  colframe=black,
  coltitle=black,
  fonttitle=\bfseries,
  code={\def\mytitle{#2}},     
  title=Meta-algorithm \thetcbcounter,
  boxrule=0.8pt,
  arc=2pt,
  left=8pt,right=8pt,
  top=6pt,bottom=6pt,
  attach boxed title to top center={yshift=-2mm},
  boxed title style={
    colback=white,
    colframe=black,
    boxrule=0.8pt,
    sharp corners
  },
  #1
}

\newcommand{\aone}{a^1}
\newcommand{\atwo}{a^2}
\newcommand{\Trout}{\operatorname{Tr}_{\mathrm{out}}}
\newcommand{\smax}{\operatorname{softmax}}
\newcommand{\Tr}{\operatorname{Tr}}
\usepackage{inconsolata}

%% file: algorithm_new.tex
\subsection{Tightness of the Regret Lower Bound for the Constructed Family}
\label{sec:tightness}

\noindent We next show that the regret lower bound is tight, up to logarithmic
factors, for the bounded family used in its proof.  The guarantee is
pointwise: the same policy works for every $(W^\star,u^\star)$ in the
support of $\pi_{\mathrm b}$. We assume that the learner knows the fixed problem
parameters $(r,\eps)$.

\begin{theorem}[Regret Upper Bound for the Constructed Family]
\label{thm:constructed-family-regret-upper-bound}
There is a universal constant $C>0$ such that the following holds for all
$d\ge C$, $n\ge Cd^2$, $\eps\in(0,1/2]$, and $r\ge 16\eps$.  There is a
learning algorithm, knowing $(r,\eps)$, such that, for every
$\theta^\star=(W^\star,u^\star)\in\operatorname{supp}(\pi_{\mathrm b})$,
\begin{equation}
  \bbE_{f_{\theta^\star}}\left[
    \sum_{t=1}^n
    \bigl(f_{\theta^\star}(a_t)
      -f_{\theta^\star}(a_{\theta^\star})\bigr)
  \right]
  \le
  C\log^3(edn)
  \min\left\{
    \sqrt d\,n^{3/4},d^{4/3}\sqrt n
  \right\}.
  \label{eq:constructed-family-regret-upper-bound}
\end{equation}
Here $a_{\theta^\star}$ is the minimizer in
\cref{eq:hidden-map-unconstrained-benchmark}.
\end{theorem}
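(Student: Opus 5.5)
We construct an explicit explore-then-exploit policy that follows the heuristic learning rule of \Cref{subsec:technical-novelty}. The regret bound is the minimum of two analyses: one for a ``no tube learning'' policy giving $\sqrt d\,n^{3/4}$, and one for a ``tube learning'' policy giving $d^{4/3}\sqrt n$. The learner picks whichever bound is smaller, which is possible since it knows $(n,d,r,\eps)$.

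\textbf{Setup.} Fix a random orthonormal-type basis $v_1,\dots,v_d$, scaled to norm $c_0$. We estimate $\beta_i=\langle u^\star,v_i\rangle$ coordinate by coordinate. Then we play the exploitation action $(\widehat u,(8\eps/r)\widehat W\widehat u)$, or a shrunken version of it, for the remaining rounds.

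The convexity and Lipschitz facts in \Cref{sec:construction-properties} give the per-query regret bounds. By \cref{eq:target-excess-mse}, the excess loss of $a$ is at most of order
\begin{equation*}
\frac{\eps}{r}\Bigl\|W^\star\aone-\frac{r}{8\eps}\atwo\Bigr\|+\eps\|\aone-u^\star\|^2 .
\end{equation*}
We also use a local expansion: if $f^1\lesssim 1$, then $p$ is bounded below, and the observed mean is an affine function of $\langle\aone,u^\star\rangle$ up to a known smooth distortion.

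\textbf{Phase 1: learning the tube in direction $v_i$.} We query $(\aone,\atwo)=(s v_i,\,(8\eps/r)y)$ for probe vectors $y$. In this regime the first branch dominates, so the observation reveals $\eps\,\|sW^\star v_i-y\|/r$ up to a bounded softmax correction. Estimating the $d$-vector $sW^\star v_i$ to error $E$ from such norm queries can be done by finite-difference or one-point gradient estimates around a centered guess. Each coordinate has noise variance of order $r^2/\eps^2$, so Phase 1 costs about $d^2r^2/(\eps^2E^2)$ queries, at regret about $\eps E/r$ each.

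\textbf{Phase 2: learning $\beta_i$.} We play the shrunken action $\tfrac{r}{E}(v_i,(8\eps/r)\widehat b_i)$. This lies in the tube deterministically, at the cost of a factor $(r/E)^2$ in the Fisher information, which is exactly the tradeoff behind \cref{eq:tube-proof-heuristic}. Averaging these observations against a reference query with the same $f^1$ but $\aone$ negated cancels the tube branch to first order and yields an unbiased estimate of $\beta_i$. We need about $dE^2/(\eps^2r^2)$ queries to reach accuracy $c/\sqrt d$, each with regret $O(\eps)$.

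Summing over $i$ and adding the exploitation regret (about $n\eps\cdot c^2$, made small by reducing the error) reproduces the budget computation in \Cref{subsec:technical-novelty}. With the choices $E\asymp \eps d^{1/3}$ (or $E\asymp1$, skipping Phase 1), and $\eps$ replaced by the target accuracy tuned as in \cref{eq:eps-choice}, we obtain the two rates. We use a doubling schedule for the accuracy and union bounds, which produce the $\log^3(edn)$ factor.

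\textbf{Main obstacle.} The hardest step is Phase 2's bias control. The observed mean depends nonlinearly on $\beta_i$ through the softmax weight $p$, and that weight itself depends on the unknown residual $\|W^\star\aone-\frac{r}{8\eps}\atwo\|$. I would handle this by symmetrization: pair each query $(\aone,\atwo)$ with $(-\aone,-\atwo)$. This leaves $f^1$ unchanged and flips the linear term $-\langle\aone,u^\star\rangle$, so the difference of the two observations is monotone in $\beta_i$. The weight $p$ is then bounded below because both queries are in the tube. We invert this monotone map using Lipschitz bounds on the inverse, losing only constants. We also need the case $\eps$ with $\eps\,n^{...}$ ranges outside the tuned regime to be absorbed by the trivial $O(n\eps/r)$ bound.
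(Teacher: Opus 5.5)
Your Phase~1/Phase~2 skeleton and the $\pm$ pairing match the paper's Stages~1--2. The exploitation phase, however, has a genuine gap. You play $(\widehat u,(8\varepsilon/r)\widehat W\widehat u)$ for the remaining rounds, but $\widehat W$ is known only to the deliberately coarse accuracy $E=r/\delta$ along the $v_i$, and not at all when Phase~1 is skipped. By \cref{eq:target-excess-mse}, each such round costs at least $c\frac{\varepsilon}{r}\|(W^\star-\widehat W)\widehat u\|$. For $r\le1$ this is typically of order $\varepsilon\|u^\star\|/\delta\ge\varepsilon\|u^\star\|$. For $\|u^\star\|$ of constant order (e.g.\ $3/8$), that is linear regret, no better than always playing $0$; your accounting ``$n\varepsilon c^2$'' counts only the quadratic branch. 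Shrinking does not rescue this. Scaling the action by $\lambda\in[0,1]$ trades $\lambda\frac{\varepsilon}{r}E\|u^\star\|$ against $\frac{\varepsilon}{2}(1-\lambda)^2\|u^\star\|^2$, and since $E/r=1/\delta\ge1>\|u^\star\|$, the best choice is $\lambda=0$, i.e.\ the trivial action.

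The missing ingredient is the paper's Stage~3. Freeze $a^1=\widehat u$ and separately localize the single vector $W^\star\widehat u$ with geometrically shrinking probe radii (\texttt{PointTrack}, \Cref{lem:one-tube-center-tracking}). An epoch at radius $R$ uses $\asymp d^2r^2\log(edn)/(\varepsilon^2R^2)$ queries at per-round regret $\asymp\varepsilon R/r$, so the total tube regret is $\widetilde O(d\sqrt n)$. This is below $\min\{\sqrt d\,n^{3/4},d^{4/3}\sqrt n\}$ exactly because $n\ge Cd^2$.

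Two further steps need repair. First, in Phase~2 the raw difference $\varepsilon[\log(e^{q}+e^{h^-})-\log(e^{q}+e^{h^+})]$, with $h^\pm=s^2/2\mp s\beta_i$, depends on the unknown tube argument $q=\frac{s}{r}\|b_i-\widehat b_i\|$, which is known only to be $O(1)$. To first order this difference equals $2\varepsilon sp\beta_i$ with an unknown softmax weight $p$. Inverting the difference as a map of $\beta_i$ alone therefore leaves a constant multiplicative bias that averaging cannot remove. The paper cancels $q$ exactly by working with $B^\pm=2e^{\mu^\pm/\varepsilon}=e^{q}+e^{h^\pm}$, so that $B^--B^+=2e^{s^2/2}\sinh(s\beta_i)$. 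It estimates $B^\pm$ unbiasedly via the lognormal correction \cref{eq:regret-upper-lognormal-correction} (variance bounded once $m\varepsilon^2\ge1$) and inverts with $\operatorname{arcsinh}$.

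Second, $\varepsilon$ and $r\ge16\varepsilon$ are fixed instance parameters, not accuracies you may tune as in \cref{eq:eps-choice}. Constant accuracy in $u^\star$ suffices only under that tuning. In general you must balance $\varepsilon dm$ against $n\varepsilon\cdot d/(m\varepsilon^2\delta^2)$, i.e.\ take $m\asymp\sqrt n/(\varepsilon\delta)$ per sign, which gives $d\sqrt n/\delta$. You must also choose the probe scale as a function of $r$: the paper uses $\delta=\min\{1,\max\{r,\sqrt\varepsilon\,n^{1/4}/d\}\}$, whereas your $E\asymp\varepsilon d^{1/3}$ presumes $r\asymp\varepsilon$. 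Finally, the fallback is $a_t=0$, whose regret is $O(n\varepsilon)$ because $f_{\theta^\star}(0)=0$ and $f_{\theta^\star}(a_{\theta^\star})\ge-C\varepsilon$. The $O(n\varepsilon/r)$ bound you cite is useless, since $r$ may be as small as $16\varepsilon$.
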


\noindent Below we describe the learning algorithm. 

\medskip
\noindent\textit{Overall flow.}
Unless $\varepsilon$ is too small, the algorithm runs three
stages.  First, it optionally learns the images
$b_i=W^\star v_i$ of an orthonormal basis, but only to the accuracy $E$ needed
by the next stage.  Second, it uses signed queries along the $v_i$'s to
estimate the coordinates of $u^\star$.  Third, after obtaining
$\widehat u$, it freezes the first block at $\widehat u$ and tracks only
the single tube center $W^\star\widehat u$.  Thus the learner never needs
to estimate the entire matrix $W^\star$ accurately.

\medskip
\noindent The scale $\delta$ controls the signed queries in the second
stage.  A larger $\delta$ produces a stronger signal about $u^\star$, but
requires more accurate knowledge of the tube directions $W^\star v_i$.
Define
\begin{equation}
  M_n\triangleq
  \min\left\{\sqrt d\,n^{3/4},d^{4/3}\sqrt n\right\},
  \qquad
  \delta\triangleq
  \min\left\{1,
    \max\left\{r,\frac{\sqrt\eps\,n^{1/4}}d\right\}
  \right\},
  \qquad
  E\triangleq1\wedge\frac r\delta.
  \label{eq:tube-regret-probing-scale}
\end{equation}
When $\delta\le r$, one has $E=1$, so the deterministic estimate
$\widehat b_i=0$ is already accurate enough and the first stage is
skipped.  When $\delta>r$, the learner localizes every $b_i$ to error
$E=r/\delta$.  This ensures
\begin{equation*}
  \frac{\delta}{r}\norm{\widehat b_i-b_i}=O(1),
\end{equation*}
which keeps the signed queries within a constant-width portion of the
tube.  
\makeatletter
\providecommand*{\theHALG@line}{}
\renewcommand*{\theHALG@line}{\thealgorithm.\arabic{ALG@line}}
\makeatother

\begin{algorithm}[t]
\caption{\texttt{ProgressiveTube}}
\label{alg:constructed-family-regret}
\begin{algorithmic}[1]
\State \textbf{Input:} Horizon $n$, noisy oracle access to
$f_{\theta^\star}$, and the known parameters $(r,\eps)$
\State Set $L\gets\log(edn)$ and compute $M_n,\delta,E$ from
\cref{eq:tube-regret-probing-scale}
\If{$\eps n\le C_0L^2M_n$}
  \State Query $a_t\gets0$ for every $t\in[n]$
  \State \textbf{Stop}
\EndIf
\State Fix an orthonormal basis $v_1,\ldots,v_d$
\If{$E<1$}
  \For{$i=1,\ldots,d$}
    \State Set $\widehat b_i\gets\texttt{TubeRefine}(v_i,E,L)$
  \EndFor
\Else
  \State Set $\widehat b_i\gets0$ for every $i\in[d]$
\EndIf
\State For every $i\in[d]$, set
$s_i\gets c\delta/(1+\norm{\widehat b_i})$ and query each of
\begin{equation*}
  a_i^\pm
  =\left(\pm s_iv_i,
    \pm\frac{8\eps s_i}{r}\widehat b_i\right)
\end{equation*}
exactly $m=\lceil C_1L\sqrt n/(\eps\delta)\rceil$ times
\State Use the signed exponential inversion in
\cref{eq:regret-upper-coordinate-inversion} to form $\widehat\beta_i$ for
every $i\in[d]$, and set
\begin{equation*}
  \widehat u
  \gets
  \Pi_{(3/8)\bbB_2^d}
  \left(\sum_{i=1}^d\widehat\beta_iv_i\right)
\end{equation*}
\State Run $\texttt{PointTrack}(\widehat u,N,L)$, where $N$ is the number
of remaining rounds
\end{algorithmic}
\end{algorithm}

\medskip
\noindent The complete implementations of the two longer subroutines are
given in \Cref{alg:tube-refine,alg:point-track} in the appendix.  We next
explain what each stage does and how its regret enters the final bound.

\medskip
\noindent\textit{Stage 1: localizing the required tube directions.}
Fix $v_i$ and write $b_i=W^\star v_i$.  The purpose of
\texttt{TubeRefine} is not to estimate $b_i$ to the noise level; it only
reduces its error to $E=r/\delta$.  Suppose the current estimate
$\overline b$ satisfies $\norm{\overline b-b_i}\le R$.  The subroutine
uniformly explores $\{\overline b \pm ARe_i\}_{i\in [d]}$ in
the second block, for a large constant $A>0$, and a least-squares fit to these distances
contracts the localization radius from $R$ to $qR$ using $
  \widetilde O(\frac{d^2r^2}{\eps^2R^2}) $
queries, for some constant contraction factor $q\in (0,1)$. The per-round regret during this process is at most $\widetilde{O}(\frac{\varepsilon R}{r})$. Therefore, repeating this process until $R$ decreases geometrically to $E$, localizing
one $b_i$ to error $E$ costs
\begin{equation*}
  \widetilde O\left(
    \sum_{k\ge 0}
      \frac{d^2r^2}{\eps^2(q^{-k}E)^2}
      \frac{\eps q^{-k} E}{r}
  \right)
  =\widetilde O\left(\frac{d^2r}{\eps E}\right).
\end{equation*}
Repeating this for all $d$ basis directions gives
\begin{equation}
  R_{\mathrm{map}}
  =\widetilde O\left(\frac{d^3r}{\eps E}\right)
  =\widetilde O\left(\frac{d^3\delta}{\eps}\right).
  \label{eq:tube-refine-summary}
\end{equation}
This cost is present only when $\delta>r$; otherwise $E=1$ and the entire
stage is skipped.

\medskip
\noindent\textit{Stage 2: estimating the target by signed cancellation.}
Let $\beta_i=\langle u^\star,v_i\rangle$.  The signed pair $a_i^\pm$ in
\Cref{alg:constructed-family-regret} has the same tube argument
\begin{equation*}
  q_i=\frac{s_i}{r}\norm{b_i-\widehat b_i}=O(1)
\end{equation*}
for both signs.  In the absence of observation noise,
\begin{equation*}
  2\exp\left(\frac{f_{\theta^\star}(a_i^\pm)}{\eps}\right)
  =e^{q_i}+e^{s_i^2/2\mp s_i\beta_i}.
\end{equation*}
Subtracting the two signs removes $e^{q_i}$ exactly, so $\beta_i$ can be
recovered even though $b_i$ is known only approximately.  Averaging each
sign $m$ times and applying the Gaussian exponential bias correction gives
\begin{equation*}
  \bbE\norm{\widehat u-u^\star}^2
  \lesssim\frac{d}{m\eps^2\delta^2}
  \le\frac{Cd}{\eps\delta\sqrt n}.
\end{equation*}
There are $2dm$ signed queries and each costs $O(\eps)$ regret, so their
exploration regret is $\widetilde O(d\sqrt n/\delta)$.  The target error
during the remaining rounds contributes
\begin{equation*}
  n\eps\,\bbE\norm{\widehat u-u^\star}^2
  =\widetilde O\left(\frac{d\sqrt n}{\delta}\right)
\end{equation*}
as well.  This explains why a larger probing scale $\delta$ makes the
target-learning stage cheaper.

\medskip
\noindent\textit{Stage 3: final tube tracking.}
After fixing $\widehat u$, the learner only needs to track the single image
$W^\star\widehat u$.  The subroutine \texttt{PointTrack} in the appendix applies the same
localization idea as in Stage 1 to this one vector while keeping the first
block fixed at $\widehat u$; it introduces no additional statistical
ingredient.  A geometric sequence of localization epochs gives total
regret $\widetilde O(d\sqrt n)$.  Since $n\ge Cd^2$, this term is no larger
than either rate in \cref{eq:constructed-family-regret-upper-bound} and does
not affect the final bound. 

\medskip
\noindent\textit{Putting the stages together.}
Ignoring logarithmic factors, the zero-action branch costs $O(\eps n)$,
whereas the structured branch costs
\begin{equation}
  \mathfrak R_n(\theta^\star)
  \lesssim
  \Ind_{\{\delta>r\}}\frac{d^3\delta}{\eps}
    +\frac{d\sqrt n}{\delta}. 
  \label{eq:progressive-tube-summary}
\end{equation}
Plugging in the parameters and distinguishing into different cases yields the target regret bound.

%% file: appendix.tex
\section{Properties of the Convex Function Family}
\label{sec:construction-properties}

For $\theta=(W,u) \in \bbR^{d \times d} \times \bbR^d$ and $a=(\aone,\atwo) \in \bbR^d \times \bbR^d$, recall that
\begin{equation*}
\begin{aligned}
    f^1_\theta(a)
    &\triangleq\frac1r\norm{W\aone-\frac{r}{8\eps}\atwo}, \\
    f^2_\theta(a)
    &\triangleq\frac12\norm{\aone-u}^2-\frac12\norm{u}^2, \\
    f_\theta(a)
    &\triangleq\eps\smax\bigl(f^1_\theta(a),f^2_\theta(a)\bigr).
\end{aligned}
\end{equation*}
The proof of \Cref{thm:convex-optimization-lower-bound} uses two elementary properties of the ground-truth function $f_{\theta^\star}$. First, small error forces the first block of the action to be close to $u^\star$. Second, a draw from the Gaussian prior is $1$-Lipschitz with high probability.  We prove these facts here. For the action $a_t$ at round $t$, we use
$f_t^i=f^i_{\theta^\star}(a_t)$ for $i \in \{ 1,2\}$.

\begin{lemma}[Minimizer and Optimization Error]
\label{lem:target-excess-controls-estimation}
The loss $f_{\theta^\star}$ is convex on $\bbR^{2d}$ and satisfies
$f_{\theta^\star}(0)=0$.  Its global minimizer is
\begin{equation}
    a_{\theta^\star}
    =\left(u^\star,\frac{8\eps}{r}W^\star u^\star\right).
    \label{eq:hidden-map-unconstrained-benchmark}
\end{equation}
Moreover, there is a universal $c>0$ such that
\begin{equation}
    f_{\theta^\star}(a)-f_{\theta^\star}(a_{\theta^\star})
    \geq
    c\max\left\{\frac{\eps}{r}\|W^\star a^1 - \frac{r}{8\varepsilon}a^2\| , \eps \norm{\aone-u^\star}^2\Ind_{\|u^\star\|\le 2}\right\}.
    \label{eq:target-excess-mse}
\end{equation}
\end{lemma}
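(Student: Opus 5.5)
The plan is to treat the three claims of Lemma~\ref{lem:target-excess-controls-estimation} in order: convexity and $f_{\theta^\star}(0)=0$, then the minimizer, then the quantitative excess bound \cref{eq:target-excess-mse}.

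\textbf{Convexity and $f_{\theta^\star}(0)=0$.} The map $a\mapsto f^1_{\theta^\star}(a)$ is a norm of an affine function, hence convex. The map $a\mapsto f^2_{\theta^\star}(a)=\frac12\|\aone\|^2-\langle \aone,u^\star\rangle$ is a convex quadratic. The two-argument $\smax$ is convex and nondecreasing in each argument, so the composition is convex. At $a=0$ both arguments vanish, giving $\eps(\log 2-\log 2)=0$.

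\textbf{The minimizer.} By monotonicity of $\smax$, $f_{\theta^\star}(a)\ge \eps\,\smax(0,\,f^2_{\theta^\star}(a))$, because $f^1\ge0$. Next, $f^2_{\theta^\star}(a)\ge -\frac12\|u^\star\|^2$, with equality iff $\aone=u^\star$. Both lower bounds are attained simultaneously at $a_{\theta^\star}$ in \cref{eq:hidden-map-unconstrained-benchmark}, where the tube argument is zero. Hence
\[
f_{\theta^\star}(a_{\theta^\star})=\eps\,\smax\bigl(0,-\tfrac12\|u^\star\|^2\bigr).
\]

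\textbf{The excess bound.} Write $s=f^1\ge0$ and $t=f^2\ge t_0:=-\frac12\|u^\star\|^2$, and set $g(s,t)=\log(e^s+e^t)$. Then
\[
f_{\theta^\star}(a)-f_{\theta^\star}(a_{\theta^\star})=\eps\,[g(s,t)-g(0,t_0)].
\]
Split this as $[g(s,t)-g(0,t)]+[g(0,t)-g(0,t_0)]$. Both brackets are nonnegative.

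For the first bracket, use $\partial_s g=e^s/(e^s+e^t)$. It suffices to show $g(s,t)-g(0,t)\ge c\min(s,\cdot)$ in a form that yields $c\,s$, since $\frac{\eps}{r}\|W^\star\aone-\frac r{8\eps}\atwo\|=\eps s$. A cleaner route is to compare with $t_0$ directly, using
\[
g(s,t)-g(0,t_0)\ge g(s,t_0)-g(0,t_0)=\log\frac{e^s+e^{t_0}}{1+e^{t_0}}.
\]
Since $e^{t_0}\le1$, the map $x\mapsto(e^s+x)/(1+x)$ is decreasing in $x$ for $s\ge0$, so the right-hand side is at least $\log\frac{e^s+1}{2}\ge s/2$. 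The last inequality holds by convexity: $\frac{e^s+1}{2}\ge e^{s/2}$. This gives $\eps s/2$, which is the first term of the max.

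For the second term, use the bound $g(s,t)-g(0,t_0)\ge g(0,t)-g(0,t_0)$, which follows from monotonicity in $s$. We have $t-t_0=\frac12\|\aone-u^\star\|^2=:\Delta$. On $\|u^\star\|\le2$ we have $t_0\ge-2$, and
\[
g(0,t_0+\Delta)-g(0,t_0)=\int_{t_0}^{t_0+\Delta}\frac{e^\tau}{1+e^\tau}\,d\tau\ge \Delta\,\frac{e^{t_0}}{1+e^{t_0}}\ge \frac{e^{-2}}{2}\,\Delta.
\]
Here we used that the sigmoid is increasing. The result is $c\eps\|\aone-u^\star\|^2$, with no restriction on $\Delta$ (possibly large, since $a\in\bbR^{2d}$ is arbitrary).

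Taking the max of the two lower bounds proves \cref{eq:target-excess-mse}. The only delicate point is getting a linear (not merely $\min(s,1)$) bound in the tube argument, which the $\log\frac{e^s+1}{2}\ge s/2$ step handles uniformly in $t_0\le0$.
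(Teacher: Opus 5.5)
Your proof is correct and follows essentially the same route as the paper. Both arguments get convexity from composing the convex, coordinatewise nondecreasing softmax with the two convex branches. For the excess bound, both lower one softmax argument to its minimum by monotonicity and then bound the remaining one-variable increment: your closed form $\log\frac{e^s+1}{2}\ge s/2$ is the paper's bound $\partial_x h\ge \varepsilon/2$ for $x\ge 0\ge y$, and your sigmoid bound at $t_0\ge -2$ is exactly its second step. The split $[g(s,t)-g(0,t)]+[g(0,t)-g(0,t_0)]$ that you start and then abandon is unnecessary and can be deleted.
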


\begin{proof}
$f_\theta$ is the scaled softmax of two convex functions. The first is the norm of an affine map, and the second is a convex quadratic. Since $\smax$ is convex and nondecreasing in each argument, this implies that $f_{\theta^\star}$ is convex. Furthermore, both convex functions vanish at the origin, and by the definition of $\smax$ this implies $f_{\theta^\star}(0)=0$.

\medskip
\noindent The smallest possible value of $f^1_{\theta^\star}$ is zero, attained when $\atwo=(8\eps/r)W^\star\aone$. The second branch is minimized at
$\aone=u^\star$. These conditions hold simultaneously at
\cref{eq:hidden-map-unconstrained-benchmark}; monotonicity of $\smax$ shows
that this point is a global minimizer.

\medskip
\noindent It remains to relate optimization error to estimation error (i.e., prove \cref{eq:target-excess-mse}). Define
$h(x,y)=\varepsilon[\log(e^x+e^y)-\log 2]$. For the first lower bound, 
\begin{align*}
f_{\theta^\star}(a)-f_{\theta^\star}(a^\star_{\theta^\star}) \ge  h\left(\frac{1}{r}\|W^\star a^1 - \frac{r}{8\varepsilon}a^2\|, -\frac{\|u^\star\|^2}{2}\right) - h\left(0, -\frac{\|u^\star\|^2}{2}\right) \ge \frac{\eps}{2r}\|W^\star a^1 - \frac{r}{8\varepsilon}a^2\|, 
\end{align*}
where the last step uses $\partial_x h(x,y) = \frac{\eps e^x}{e^x+e^y}\ge \frac{\eps}{2}$ whenever $x\ge 0$ and $y\le 0$. For the second lower bound, for $\|u^\star\|\le 2$ we use
\begin{align*}
    f_{\theta^\star}(a)-f_{\theta^\star}(a_{\theta^\star})
    &\geq
      h(0,f^2_{\theta^\star}(a))
      -h(0,f^2_{\theta^\star}(a_{\theta^\star})) \\
    &\geq
      c\eps \bigl(
        f^2_{\theta^\star}(a)
        -f^2_{\theta^\star}(a_{\theta^\star})
      \bigr)
     =\frac{c\eps}{2}\norm{\aone-u^\star}^2.
\end{align*}
The second inequality follows because
$f^2_{\theta^\star}(a)\geq
f^2_{\theta^\star}(a_{\theta^\star})
\ge -\norm{u^\star}^2/2\ge -2$, and hence
$\partial_y h(0,y)=\eps e^y/(1+e^y)\geq c\eps $ throughout the relevant interval.
\end{proof}

\begin{lemma}[High-Probability Lipschitzness]
\label{lem:high-probability-lipschitz}
Under the Gaussian prior \cref{eq:gaussian-priors}, suppose that
$\eps/r\leq1/16$ and $\eps\leq1/2$.  Under the event
\begin{equation}
    \calE_{\mathrm{b}}
    \triangleq \left\{
      \norm{W^\star}_{\mathrm{op}}\le4,
      \ \norm{u^\star}\le\frac38
    \right\},
    \label{eq:bounded-prior-event}
\end{equation}
the function $f_{\theta^\star}$ is
$1$-Lipschitz on $\bbB^{2d}$. Moreover,
$\Pr(\calE_{\mathrm{b}})\geq1-Ce^{-cd}$.
\end{lemma}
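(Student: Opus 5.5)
The proof splits into two independent parts: a deterministic Lipschitz bound that holds on $\calE_{\mathrm{b}}$, and a Gaussian tail bound for $\Pr(\calE_{\mathrm{b}}^c)$.

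\emph{Lipschitz part.} Write $f_{\theta^\star}=\eps\,h(f^1,f^2)$, where $h(x,y)=\log(e^x+e^y)-\log 2$. The partial derivatives of $h$ are $1-p$ and $p$, with $p\in(0,1)$. By the chain rule, wherever $f^1\neq 0$,
\[
\nabla f_{\theta^\star}(a)=\eps\bigl[(1-p)\nabla f^1(a)+p\,\nabla f^2(a)\bigr].
\]
Since $f_{\theta^\star}$ is convex and continuous, it suffices to show that this convex combination has norm at most $1/\eps$ almost everywhere on $\bbB^{2d}$. The set $\{f^1=0\}$ has measure zero, and a locally Lipschitz function with a.e.\ gradient bound is Lipschitz on the convex set $\bbB^{2d}$. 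We bound the two gradients separately.

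\begin{itemize}
\item $\nabla f^1(a)=\frac1r\bigl(W^{\star\top}g,\,-\tfrac{r}{8\eps}g\bigr)$ for a unit vector $g$. On $\calE_{\mathrm{b}}$ we have $\|W^\star\|_{\mathrm{op}}\le 4$ and $\eps/r\le 1/16$, so
\[
\eps\|\nabla f^1\|\le \eps\sqrt{16/r^2+1/(64\eps^2)}\le \sqrt{1/16+1/64}<1/2 .
\]
\item $\nabla f^2(a)=(\aone-2u^\star,0)$. On the unit ball with $\|u^\star\|\le 3/8$, this gives $\|\nabla f^2\|\le 1+3/4=7/4$. Using $\eps\le 1/2$, we get $\eps\|\nabla f^2\|\le 7/8$.
\end{itemize}

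A convex combination of the two scaled gradients then has norm at most $\max\{1/2,\,7/8\}<1$, which proves $1$-Lipschitzness on $\calE_{\mathrm{b}}$. I will recheck the constants so that each branch stays at most $1$; there is slack in both.

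\emph{Probability part.} $\sqrt d\,W^\star$ is a $d\times d$ standard Gaussian matrix. By the Davidson--Szarek / Gordon bound together with Gaussian concentration of the $1$-Lipschitz function $\|\cdot\|_{\mathrm{op}}$,
\[
\Pr\bigl(\|W^\star\|_{\mathrm{op}}>2+t\bigr)\le e^{-dt^2/2}.
\]
Taking $t=2$ bounds this term by $e^{-2d}$.

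Next, $16d\|u^\star\|^2\sim\chi^2_d$, so $\|u^\star\|^2$ has mean $1/16$. The threshold is $(3/8)^2=9/64$, which is $2.25$ times the mean. The Laurent--Massart bound therefore gives
\[
\Pr\bigl(\|u^\star\|>3/8\bigr)\le e^{-cd}.
\]

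A union bound over the two failure events yields $\Pr(\calE_{\mathrm{b}})\ge 1-Ce^{-cd}$.

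\emph{Main obstacle.} The only delicate step is the constant bookkeeping in the gradient bound: the matrix part must be controlled by $\eps/r\le 1/16$, and the quadratic part by $\eps\le 1/2$. The nondifferentiability at $f^1=0$ is handled by the a.e.\ argument above, or equivalently by working with subgradients.
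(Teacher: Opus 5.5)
Your proposal is correct and follows the paper's argument: on $\calE_{\mathrm b}$, bound the Lipschitz constant of each scaled branch, note that every (sub)gradient of the softmax is a convex combination of the branch (sub)gradients, and bound $\Pr(\calE_{\mathrm b}^c)$ by Gaussian operator-norm and $\chi^2$-norm concentration. The only slip is $\nabla_{\aone} f^2=\aone-u^\star$, not $\aone-2u^\star$, because $-\frac12\|u^\star\|^2$ does not depend on $a$; this gives the paper's sharper bound $\eps(1+\tfrac38)\le\tfrac{11}{16}$, and your $7/8$ is still a valid upper bound, so the conclusion is unaffected.
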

\begin{proof}
Under the event $\calE_{\mathrm{b}}$, the two branches within the softmax of $f_{\theta^\star}$ have Lipschitz constants
\begin{equation*}
\begin{aligned}
    \operatorname{Lip}(\eps f^1_{\theta^\star})
    &\leq
      \sqrt{
        \frac1{64}+\frac{\eps^2}{r^2}\norm{W^\star}_{\mathrm{op}}^2
      }
      \leq\frac{1}2, \\
    \operatorname{Lip}(\eps f^2_{\theta^\star})
    &\leq
      \eps(1+\norm{u^\star})
      \leq\frac{11}{16}.
\end{aligned}
\end{equation*}
Every subgradient of the soft maximum is a convex combination of
subgradients of these two scaled branches.  Hence $f_{\theta^\star}$ is
$1$-Lipschitz on $\calE_{\mathrm{b}}$. 

\medskip
\noindent To prove the probability bound, write $W^\star=d^{-1/2}G$ and $u^\star=(4\sqrt d)^{-1}g$, where $G$ and
$g$ have independent standard Gaussian entries. Gaussian vector and
random-matrix concentration
\citep[Theorem~3.1.1 and Corollary~7.3.3]{vershynin2018} give
\begin{equation*}
    \Pr(\norm{W^\star}_{\mathrm{op}}>4)
    +\Pr(\norm{u^\star}>3/8)
    \le Ce^{-cd}.
\end{equation*}
The claim follows from the definition of $\calE_{\mathrm{b}}$ and a union
bound.
\end{proof}

\input{appendix-algorithm}
\input{appendix-unconstrained}

%% file: appendix-algorithm.tex
\section{Proof of
\Cref{thm:constructed-family-regret-upper-bound}}
\label{sec:constructed-family-regret-upper-bound-proof}

\noindent Fix an arbitrary
$(W^\star,u^\star)\in\operatorname{supp}(\pi_{\mathrm b})$.  Thus
\begin{equation}
  \norm{W^\star}_{\mathrm{op}}\le4,
  \qquad
  \norm{u^\star}\le\frac38.
  \label{eq:regret-upper-bounded-parameter}
\end{equation}
All expectations and probability statements below concern only the
observation noise and the learner's randomization.  Put $L=\log(edn)$.

\subsection{Stage 1: localizing the hidden map}

\noindent Let $v_1,\ldots,v_d$ be the basis in
\Cref{alg:constructed-family-regret}, put $b_i=W^\star v_i$, and recall
\begin{equation*}
  \delta
  =\min\left\{1,
    \max\left\{r,\frac{\sqrt\eps\,n^{1/4}}d\right\}
  \right\},
  \qquad
  E=1\wedge\frac r\delta.
\end{equation*}
Stage 1 is used only when $E<1$, which also implies $E\ge r$. Its statistical primitive is a nonlinear
regression for the center of a soft distance.  We give the regression
argument first and then instantiate it for each $b_i$. Let $e_1,\ldots,e_d$ denote the standard basis of $\bbR^d$.
The regression uses the paired coordinate probes
$\overline b\pm ARe_j$, with $\norm{b-\overline{b}} \le R$ and a large constant $A>0$. 

\begin{lemma}[Localizing the center of a soft distance]
\label{lem:soft-distance-localization}
Let $b\in\bbR^d$, $c_0>0$, $|h|\le \frac{1}{4}$, and suppose
the learner observes standard normal perturbations of
\begin{equation}
  \ell_{b,h}(z)
  =\eps\smax\left(\frac{c_0}{r}\norm{b-z},h\right).
  \label{eq:soft-distance-model}
\end{equation}
Assume that $\norm{b-\overline b}\le R$ and that the points
$\overline b+A\tau Re_j$, where $j\in[d]$ and
$\tau\in\{-1,1\}$, are feasible for the queries. For a fixed $c_0>0$ and large enough $A>0$, there is an
experiment using at most
\begin{equation}
  N_R\le\frac{Cd^2Lr^2}{\eps^2R^2}
  \label{eq:soft-distance-query-count}
\end{equation}
queries which returns $\overline b^+$ satisfying for some universal constant $q\in (0,1)$, 
\begin{equation}
  \Pr\left(\norm{\overline b^+-b}>qR\right)\le n^{-5}.
  \label{eq:soft-distance-contraction}
\end{equation}
Every queried $z$ satisfies $\norm{z-b}\le CR$, and one can take $q=\frac{17}{18}$. 
\end{lemma}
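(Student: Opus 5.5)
The plan is to reduce localization to $d$ independent one-dimensional difference estimates, one per coordinate. Write $x\triangleq b-\overline b$, so $\norm{x}\le R$. For each $j\in[d]$ and $\tau\in\{\pm1\}$, query $z_{j,\tau}=\overline b+A\tau Re_j$ exactly $m\triangleq\lceil C'dLr^2/(\eps^2R^2)\rceil$ times, for a total of $N_R=2dm$ queries, which is the budget \cref{eq:soft-distance-query-count}. Every queried point satisfies $\norm{z_{j,\tau}-b}\le(A+1)R$, which gives the locality claim with $C=A+1$. Set $D_{j,\tau}\triangleq\norm{b-z_{j,\tau}}$. The key algebraic identity is
\begin{equation*}
  D_{j,\tau}^2=\norm{x}^2-2A\tau Rx_j+A^2R^2,
  \qquad\text{so}\qquad
  x_j=\frac{D_{j,-}^2-D_{j,+}^2}{4AR}.
\end{equation*}
Moreover every $D_{j,\tau}\in[(A-1)R,(A+1)R]$. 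The estimator is therefore $\overline b^+\triangleq\overline b+\widehat x$, with $\widehat x_j$ given by this formula applied to estimated distances $\widehat D_{j,\pm}$.

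To estimate the distances, average the $m$ observations at $z_{j,\tau}$ to get $\overline y_{j,\tau}$, and invert the map ignoring the unknown $h$:
\begin{equation*}
  \widehat D_{j,\tau}\triangleq\frac{r}{c_0}\log\bigl(2e^{\overline y_{j,\tau}/\eps}-1\bigr),
\end{equation*}
which corresponds to pretending $h=0$. Two error sources then have to be controlled.

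\emph{(i) The bias from the unknown $h$.} The exact inverse is $D=g_h(\ell)$, and the discrepancy $\phi(D)\triangleq\widetilde D(\ell(D))-D$ is a smooth function of $D$. Since $c_0D/r\ge c_0(A-1)R/r$, we have $|\phi'(D)|\lesssim e^{1/4-c_0(A-1)R/r}$. I will use that Stage 1 is only invoked with $R\ge E\ge r$, so this derivative is at most $e^{-cA}$.

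The crucial point is that this bias is not small in absolute terms: $\phi$ itself is of size about $re^{-cA}$, which can exceed $R/\sqrt d$. However, it nearly cancels in the $\pm$ difference. Writing $D_{j,-}^2-D_{j,+}^2=(D_{j,-}-D_{j,+})(D_{j,-}+D_{j,+})$ and noting $|D_{j,-}-D_{j,+}|\le2|x_j|$, the mean value theorem gives a per-coordinate bias of order $e^{-cA}|x_j|$. Summing over coordinates, the bias vector has norm at most $e^{-cA}\norm{x}\le e^{-cA}R$, which is below $R/36$ for $A$ large. This cancellation is the main obstacle; I would verify it first, tracking the second-order term from $D_{j,-}+D_{j,+}$ as well.

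\emph{(ii) The stochastic error.} Since $\partial_D\ell\asymp c_0\eps/r$ on the relevant range, the averaged noise, of standard deviation $m^{-1/2}$, translates to distance noise of order $r/(c_0\eps\sqrt m)$. By the choice of $m$ this is at most $R/(C''\sqrt{dL})$, and in particular $\ll\eps$ in observation units. So the log-inversion can be linearized, with a quadratic remainder that is smaller still.

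Combining these, $\widehat x_j-x_j$ is, up to the bias term, a sum of independent Gaussian-like terms of variance $\lesssim r^2/(\eps^2m)$. A Gaussian norm concentration bound (for instance, a $\chi^2$ tail, or a union bound on the event that the linearization is valid) then gives
\begin{equation*}
  \norm{\widehat x-x}_{\mathrm{stoch}}\le\frac{R}{36}
\end{equation*}
with probability at least $1-n^{-5}$, once $C'$ is large and the factor $L=\log(edn)$ is included in $m$. Adding the bias and stochastic contributions gives $\norm{\overline b^+-b}\le(1/36+1/36)R\le\frac{17}{18}R$, which proves \cref{eq:soft-distance-contraction} with $q=\frac{17}{18}$.
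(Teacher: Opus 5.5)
Your step (i) works only under the extra hypothesis $c_0(A-1)R/r\gg 1$, i.e.\ $R\gtrsim r$. You import this from Stage~1 (``$R\ge E\ge r$''), but it is not part of the lemma. The statement is for arbitrary $R>0$ with $A$ a fixed constant, and the paper relies on that generality: \texttt{PointTrack} (\Cref{lem:one-tube-center-tracking}) invokes the lemma at radii with $N_R\asymp d^2Lr^2/(\eps^2R^2)$ up to the horizon. These radii fall below $r$ as soon as $N_R$ exceeds order $d^2L/\eps^2$.

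For $R\ll r$ it is your estimator that fails, not just the proof. All probed distances satisfy $c_0D_{j,\tau}/r\ll 1$, and the $h=0$ inversion gives $\widetilde D\approx rh/c_0+e^{-h}D$. So $\phi\approx rh/c_0\gg AR$, and $\phi'\approx e^{-h}-1$ is not small. Plugging this into the squared-distance identity gives
\begin{equation*}
  \widehat x_j\approx x_j\Bigl(e^{-2h}+\frac{e^{-h}\,rh}{c_0AR}\Bigr).
\end{equation*}
This is a relative error of order $r/(c_0AR)$, unbounded as $R/r\to0$ for fixed $A$. Letting $A$ grow like $r/R$ would break the locality claim $\norm{z-b}\le CR$ and the feasibility assumption. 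The squaring is the culprit: through the factor $D_{j,-}+D_{j,+}+2\phi$ it turns the large additive bias $\phi$ into a multiplicative one.

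A smaller, fixable point: $m\eps^2=C'dLr^2/R^2$ need not be large (e.g.\ at the first Stage~1 radii when $r\ll 1/\sqrt{dL}$), so ``$\ll\eps$ in observation units'' does not follow. For $R\gtrsim r$, replace the linearization by the fact that $y\mapsto\frac{r}{c_0}\log(2e^{y/\eps}-1)$ is $\frac{2r}{c_0\eps}$-Lipschitz on $\{y\ge0\}$. Both the means and, with high probability, the averages lie in this set.

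The missing idea is that you never need to eliminate $h$ or recover $x_j$ to small relative error; a slope known only up to a constant factor already gives a constant contraction. The paper fits $(b',h')$ jointly by least squares over $\{\norm{b'-\overline b}\le R,\ |h'|\le\frac14\}$. The basic inequality and a $\chi^2$ bound give an $O(c_0\eps R/r)$ error on the signed differences $\Delta_j=\ell(z_j^-)-\ell(z_j^+)$. The mean value theorem then writes $\Delta_j$ as $\frac{c_0\eps}{r}$, times a softmax weight in $[(1+e^{1/4})^{-1},1]$, times $D_{j,-}-D_{j,+}$. The last factor equals $x_j=b'_j-\overline b_j$ times a factor in $[\frac{2A}{A+1},\frac{2A}{A-1}]$. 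These bounds hold uniformly over the constraint set, with no relation between $R$ and $r$, and solving the resulting inequality gives $q=\frac{17}{18}$. Within your own design, you could instead rescale the raw difference $\overline y_{j,-}-\overline y_{j,+}$ by the midpoint of this slope range, without inverting the softmax or squaring; that also works for all $R$.
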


\begin{proof}
For $z_j^\tau=\overline b+A\tau Re_j$, repeat every one of the $2d$
design points
\begin{equation*}
  m_R=\left\lceil\frac{CdLr^2}{\eps^2R^2}\right\rceil
\end{equation*}
times.  Write $\overline Y_j^\tau$ for the averages and choose
$(\overline b^+,\widehat h)$ by least squares over
$\{(b',h'): \norm{b'-\overline b}\le R, |h'|\le 1/4\}$. 

\medskip
\noindent We start from the basic inequality for least squares.  Let
$\Phi(b',h')=(\ell_{b',h'}(z_j^\tau))_{j,\tau}$ and let
$\overline\xi\in\bbR^{2d}$ be the averaged noise vector.  Its coordinates
are independent $\calN(0,1/m_R)$ variables, so
\begin{equation*}
  \norm{\overline\xi}
  \le C\sqrt{\frac{dL}{m_R}} \le \frac{c_0\eps R}{48r}
\end{equation*}
with probability at least $1-n^{-5}$, by adjusting constants. The basic inequality for least
squares gives
\begin{align*}
  \norm{\Phi(\overline b^+,\widehat h)-\Phi(b,h)}^2
  \le2\left\langle
    \overline\xi,
    \Phi(\overline b^+,\widehat h)-\Phi(b,h)
  \right\rangle \le2\norm{\overline\xi}\cdot 
    \norm{\Phi(\overline b^+,\widehat h)-\Phi(b,h)}.
\end{align*}
Consequently, on the same event,
\begin{equation}
  \norm{\Phi(\overline b^+,\widehat h)-\Phi(b,h)}
  \le \frac{c_0\eps R}{24r}. 
  \label{eq:soft-distance-prediction-error}
\end{equation}
Next we show that \cref{eq:soft-distance-prediction-error} implies $\norm{\overline b^+ - b}\le qR$. Define
\begin{align*}
\Delta(b,h) \triangleq (\Delta_j(b,h))_{j\in [d]}, \qquad \text{ with } \Delta_j(b,h) \triangleq \ell_{b,h}(z_j^-) - \ell_{b,h}(z_j^+). 
\end{align*}
It is clear that
\begin{align}\label{eq:delta-difference}
\norm{\Delta(\overline b^+,\widehat h)-\Delta(b,h)} \le \sqrt{2} \norm{\Phi(\overline b^+,\widehat h)-\Phi(b,h)} \le \frac{c_0\eps R}{12r}. 
\end{align}
By the mean value theorem, there exists some $\xi$ lying between $\norm{z_j^+ - b}$ and $\norm{z_j^- -b}$ such that
\begin{align*}
\Delta_j(b,h) = \frac{c_0\eps}{r} \frac{e^{c_0\xi/r}}{e^{c_0\xi/r} + e^h} (\norm{z_j^- -b} - \norm{z_j^+ - b}). 
\end{align*}
Since $|h|\le \frac{1}{4}$ and $c_0\xi/r\ge 0$, the ratio lies in $[\frac{5}{12}+0.01, 1]$. In addition,
\begin{align*}
\norm{z_j^- -b} - \norm{z_j^+ - b} = \frac{4AR(\overline{b}_j - b_j)}{\norm{z_j^- -b} + \norm{z_j^+ - b}} \in \Big[ \frac{4A(\overline{b}_j - b_j)}{2(A\pm 1)} \Big]. 
\end{align*}
Now taking $A$ large enough, we have
\begin{align}\label{eq:approx-error}
| \Delta_j(b,h) - \frac{3c_0\eps}{2r}(\overline{b}_j - b_j) | \le \frac{2c_0\eps}{3r}|\overline{b}_j - b_j|. 
\end{align}
Plugging \cref{eq:approx-error} with $b$ and $\overline b^+$ into \cref{eq:delta-difference} gives
\begin{align*}
\frac{c_0\eps R}{12r} \ge \frac{3c_0\eps}{2r}\norm{ \overline b^+ - b } - \frac{2c_0\eps}{3r}(\norm{\overline b - b} + \norm{\overline{b} - \overline{b}^+})  \ge \frac{3c_0\eps}{2r}\norm{ \overline b^+ - b } - \frac{4c_0\eps R}{3r}. 
\end{align*}
Solving the inequality yields $\norm{\overline b^+ - b}\le \frac{17}{18}R$, so that $q=\frac{17}{18}<1$. 
\end{proof}

\medskip
\noindent Iterating the lemma over radii
$R_j=q^j R_0$ gives the following consequence.  We stop as soon as
$R_j\le CE$, and project each estimate onto the known parameter ball.
Because both the sample sizes and the incurred regret are geometric sums,
\begin{align}
  N(E)&\le\frac{Cd^2Lr^2}{\eps^2E^2},
  \label{eq:tube-refine-total-queries}\\
  \sum_jN_{R_j}\frac{C\eps R_j}{r}
  &\le\frac{Cd^2Lr}{\eps E}.
  \label{eq:tube-refine-one-direction-regret}
\end{align}
The probability that any contraction fails is at most $n^{-4}$.  The
complete map-localization subroutine is as follows. When $E=1$, the
deterministic estimate
$\widehat b_i=0$ already satisfies
$\norm{\widehat b_i-b_i}\le4E$.  When $E<1$, we run \texttt{TubeRefine} in
\Cref{alg:tube-refine}. Note that for a small enough $c_0$, all actions will lie in $\bbB^{2d}$, and the second softmax component has magnitude at most $\frac{1}{4}$. 

\begin{algorithm}[!t]
\caption{\texttt{TubeRefine}}
\label{alg:tube-refine}
\begin{algorithmic}[1]
\State \textbf{Input:} A unit vector $v$, a target radius $E<1$, and
$L=\log(edn)$
\State Set $\overline b\gets0$ and $R\gets4$
\While{$R>E$}
    \State Let constants $q, A, C>0$ be given in \Cref{lem:soft-distance-localization}, and $c_0>0$ be small enough
  \State Set
  $m_R\gets\lceil CdLr^2/(\eps^2R^2)\rceil$
  \State For each $(\tau,j)\in\{-1,1\}\times[d]$, query
  \begin{equation*}
 \left(
      c_0v,\frac{8c_0\eps}{r}
      (\overline b+A\tau Re_j)
    \right)
  \end{equation*}
  exactly $m_R$ times, and denote the average observation by
  $\overline Y_j^\tau$
  \State Choose $(\widetilde b,\widetilde h)$ to minimize
  \begin{equation*}
    \sum_{\tau,j}\left[
      \overline Y_j^\tau
      -\eps\smax\left(
        \frac{c_0}{r}
        \norm{b-\overline b-A\tau Re_j},h
      \right)
    \right]^2
  \end{equation*}
  over $\norm{b-\overline b}\le R$ and $|h|\le \frac{1}{4}$
  \State Set
  $\overline b\gets\Pi_{4\bbB_2^d}(\widetilde b)$ and
  $R\gets qR$
\EndWhile
\State \textbf{Return:} $\overline b$
\end{algorithmic}
\end{algorithm}

\medskip
\noindent At localization radius $R\ge E$, the two arguments of the
softmax and the benchmark value show that every query has regret at most
$C\eps R/r$.  Consequently,
\cref{eq:tube-refine-one-direction-regret} and a union bound over the basis
give estimates satisfying
\begin{equation}
  \max_{i\in[d]}\norm{\widehat b_i-b_i}\le CE
  \label{eq:map-refine-accuracy}
\end{equation}
with probability at least $1-n^{-3}$, at total regret
\begin{equation}
  R_{\mathrm{map}}
  \le\frac{Cd^3Lr}{\eps E}
  =\frac{Cd^3L\delta}{\eps}.
  \label{eq:map-refine-regret}
\end{equation}
The number of queries used is at most
\begin{equation}
  N_{\mathrm{map}}
  \le\frac{Cd^3Lr^2}{\eps^2E^2}
  =\frac{Cd^3L\delta^2}{\eps^2}.
  \label{eq:map-refine-query-count}
\end{equation}

\subsection{Stage 2: estimating the target by signed cancellation}

\noindent We first derive the signed inversion used in this stage.  Fix a
unit vector $v$, a scalar $s>0$, and $b_0\in\bbR^d$, and consider
\begin{equation*}
  a^+=\left(sv,\frac{8\eps s}{r}b_0\right),
  \qquad
  a^-=\left(-sv,-\frac{8\eps s}{r}b_0\right).
\end{equation*}
Writing $b=W^\star v$ and $\beta=\langle u^\star,v\rangle$, the two
softmax arguments are
\begin{equation*}
  q=\frac{s}{r}\norm{b-b_0},
  \qquad
  h^\pm=\frac{s^2}{2}\mp s\beta.
\end{equation*}
In particular, the tube argument is identical for the two signs.  If
$\mu^\pm=f_{\theta^\star}(a^\pm)$ and
$B^\pm=2e^{\mu^\pm/\eps}$, then
\begin{equation}
  B^\pm=e^q+e^{s^2/2\mp s\beta},
  \qquad
  B^--B^+=2e^{s^2/2}\sinh(s\beta),
  \label{eq:regret-upper-symmetric-identity}
\end{equation}
and hence
\begin{equation}
  \beta
  =\frac1s\operatorname{arcsinh}\left(
    \frac{e^{-s^2/2}}2(B^--B^+)
  \right).
  \label{eq:regret-upper-coordinate-inversion}
\end{equation}
Thus the remaining error in $b_0$ affects the size of the observations but
cancels from their signed difference.

\medskip
\noindent Suppose that each sign is queried $m$ times, and let
$\overline Y^\pm$ be the two observation averages.  Define
\begin{equation}
  \widehat B^\pm
  \triangleq
  2\exp\left(
    \frac{\overline Y^\pm}{\eps}-\frac1{2m\eps^2}
  \right).
  \label{eq:regret-upper-lognormal-correction}
\end{equation}
The Gaussian moment-generating function gives
\begin{equation}
  \bbE[\widehat B^\pm]=B^\pm,
  \qquad
  \operatorname{Var}(\widehat B^\pm)
  =(B^\pm)^2\left(e^{1/(m\eps^2)}-1\right).
  \label{eq:regret-upper-lognormal-moments}
\end{equation}
If $q\le C_0$, $s\le c_0$, and $m\eps^2\ge1$, all the $B^\pm$ are
bounded by a universal constant.  Since $\operatorname{arcsinh}$ is
$1$-Lipschitz, inserting $\widehat B^\pm$ into
\cref{eq:regret-upper-coordinate-inversion}, and then clipping the result
to $[-3/8,3/8]$, yields
\begin{equation}
  \bbE[(\widehat\beta-\beta)^2]
  \le\frac{C}{m\eps^2s^2}.
  \label{eq:regret-upper-coordinate-error}
\end{equation}

\medskip
\noindent We now apply this identity coordinate by coordinate.  On the
event in \cref{eq:map-refine-accuracy}, set
\begin{equation*}
  s_i=\frac{c_1\delta}{1+\norm{\widehat b_i}},
  \qquad
  a_i^\pm
  =\left(\pm s_iv_i,
    \pm\frac{8\eps s_i}{r}\widehat b_i\right).
\end{equation*}
If $E=1$, use the same definition with $\widehat b_i=0$.  Since
$\norm{b_i}\le4$, clipping $\widehat b_i$ to $4\bbB_2^d$ ensures
$c\delta\le s_i\le C\delta$.  Moreover,
\begin{equation*}
  \frac{s_i}{r}\norm{b_i-\widehat b_i}
  \le C\frac{\delta E}{r}\le C.
\end{equation*}
Thus the hypotheses of \cref{eq:regret-upper-coordinate-error} hold, and
the actions are feasible because $r\ge 16\eps$ and $c_1$ is sufficiently
small.

\medskip
\noindent Query each sign
\begin{equation*}
  m=\left\lceil\frac{C L\sqrt n}{\eps\delta}\right\rceil
\end{equation*}
times, form $\widehat\beta_i$ by
\cref{eq:regret-upper-coordinate-inversion,eq:regret-upper-lognormal-correction},
and project $\sum_i\widehat\beta_iv_i$ onto $(3/8)\bbB_2^d$.  Euclidean
projection cannot increase its distance from $u^\star$.  Therefore
\cref{eq:regret-upper-coordinate-error} and orthogonality give
\begin{equation}
  \bbE\norm{\widehat u-u^\star}^2
  \le\frac{Cd}{m\eps^2\delta^2}+Cn^{-3}
  \le\frac{CdL}{\eps\delta\sqrt n}.
  \label{eq:target-estimation-error-regret}
\end{equation}
Every signed query has both softmax arguments bounded by a universal
constant, and hence has regret at most $C\eps$.  The target-exploration
regret is consequently
\begin{equation}
  R_{\mathrm{target}}
  \le C\eps dm
  \le\frac{CdL\sqrt n}{\delta}.
  \label{eq:target-exploration-regret}
\end{equation}

\subsection{Stage 3: tracking the tube at the estimated target}

\noindent This last stage contains no new statistical idea.  With the
first block fixed at $x=\widehat u$, it applies the same paired-coordinate
regression as Stage 1 to the single vector $b=W^\star x$, shrinking the
localization radius whenever the next epoch fits in the remaining budget.
We include the short implementation for completeness.  The constants
$c_2>0$ and $C_2>0$ are sufficiently small and large, respectively.

\begin{algorithm}[!t]
\caption{\texttt{PointTrack}}
\label{alg:point-track}
\begin{algorithmic}[1]
\State \textbf{Input:} A fixed first block
$x\in(3/8)\bbB_2^d$, a budget $N$, and $L=\log(edn)$
\State Set $q_0\gets7/4$,
$m_0\gets\lceil C_2dLr^2/\eps^2\rceil$, and $N_0\gets2dm_0$
\If{$N<N_0$}
  \State Query $(x,0)$ for all $N$ rounds and \textbf{stop}
\EndIf
\State At each point $\tau q_0e_j$, take $m_0$ observations using the
action $(x,(8\eps/r)\tau q_0e_j)$, and fit $(\overline b,\overline h)$ by
nonlinear least squares under the model in
\cref{eq:soft-distance-model}, over $\norm b\le3/2$ and $|h|\le1/4$
\State Set $R\gets c_2$ and $N_{\mathrm{used}}\gets N_0$
\While{the next paired-coordinate epoch fits in the budget}
  \State Set
  $m_R\gets\lceil C_2dLr^2/(\eps^2R^2)\rceil$ and
  $N_R\gets2dm_R$
  \State Apply the paired-coordinate regression in
  \Cref{lem:soft-distance-localization} at center $\overline b$ and radius
  $R$, using actions
  $(x,(8\eps/r)(\overline b+A\tau Re_j))$
  \State Set
  $\overline b$ to the projected regression output,
  $R\gets qR$ ($q$ in \Cref{lem:soft-distance-localization}), and
  $N_{\mathrm{used}}\gets N_{\mathrm{used}}+N_R$
\EndWhile
\State For every unused round, query
$\left(x,(8\eps/r)\overline b\right)$
\end{algorithmic}
\end{algorithm}

\begin{lemma}[Tracking one tube center]
\label{lem:one-tube-center-tracking}
Fix $x\in(3/8)\bbB_2^d$ and let $b=W^\star x$.  There is a policy
which keeps its first block equal to $x$ and whose tube regret over $N$
rounds satisfies
\begin{equation}
  \bbE\left[
    \sum_{t=1}^N
    \left(
      f_{\theta^\star}\left(x,\frac{8\eps}{r}z_t\right)
      -f_{\theta^\star}\left(x,\frac{8\eps}{r}b\right)
    \right)
  \right]
  \le Cd\sqrt{NL}.
  \label{eq:one-tube-center-regret}
\end{equation}
\end{lemma}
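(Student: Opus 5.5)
The plan is to analyze \texttt{PointTrack} (\Cref{alg:point-track}) epoch by epoch, applying \Cref{lem:soft-distance-localization} at each epoch. With the first block frozen at $x$, the observed mean is
\[
z\mapsto \eps\,\smax\Bigl(\tfrac1r\norm{b-z},\,h\Bigr),\qquad h=\tfrac12\norm{x}^2-\langle x,u^\star\rangle .
\]
Since $\norm{x}\le 3/8$ and $\norm{u^\star}\le 3/8$, we have $|h|\le 1/4$, so this is exactly the model \cref{eq:soft-distance-model} with $c_0=1$. Also $\norm{b}\le \norm{W^\star}_{\mathrm{op}}\norm{x}\le 3/2$.

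\textbf{Per-round regret.} For any query $(x,(8\eps/r)z)$, monotonicity and the Lipschitz bound $\partial_x h\le\eps$ give regret at most $\frac{\eps}{r}\norm{z-b}$. The same bound also holds for the benchmark gap.

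\textbf{Initial phase.} The initial fit uses $N_0=2dm_0\asymp d^2Lr^2/\eps^2$ queries. Each query has $\norm{z-b}\le 7/4+3/2$, so per-round regret is $O(\eps/r)$. The total initial regret is therefore $O(d^2Lr/\eps)$. I will need the initial least-squares step to give $\norm{\overline b-b}\le c_2$ with probability $1-n^{-5}$. This is the same basic-inequality argument as in the lemma, with $R$ of constant order and the constant $C_2$ chosen large.

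\textbf{Epochs.} In epoch $k$ the radius is $R_k=c_2q^k$, and the epoch uses $N_{R_k}\asymp d^2Lr^2/(\eps^2R_k^2)$ queries. All queried $z$ lie within $CR_k$ of $b$ by the lemma, so the epoch's regret is at most $C d^2 L r/(\eps R_k)$. On the good event, after the last completed epoch $K$ the final radius is $R_{K+1}=qR_K$. Every remaining round then has regret at most $\frac{\eps}{r}R_{K+1}$, so the exploitation regret is at most $N\eps R_{K+1}/r$.

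\textbf{Summing and balancing.} The epoch regrets form a geometric sum dominated by the last term, $C d^2Lr/(\eps R_K)$. The stopping rule says the next epoch would not fit, so $N\lesssim d^2Lr^2/(\eps^2R_{K+1}^2)$, i.e. $R_{K+1}\lesssim dr\sqrt{L}/(\eps\sqrt N)$. This gives the exploitation bound
\[
\frac{N\eps R_{K+1}}{r}\lesssim d\sqrt{NL}.
\]
For the exploration side, I will use the fact that the total budget spent already satisfies $N\ge N_{R_K}\asymp d^2Lr^2/(\eps^2R_K^2)$. This gives $d^2Lr/(\eps R_K)\lesssim d\sqrt{NL}$. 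The same inequality $N\ge N_0$ bounds the initial cost, since $d^2Lr/\eps\le d\sqrt{NL}$.

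\textbf{Edge cases.}
\begin{itemize}
\item If $N<N_0$, the algorithm plays $(x,0)$ throughout, incurring regret $N\eps\norm{b}/r$. Using $N<N_0$, this is at most $\sqrt{N}\sqrt{N_0}\,\eps/r\lesssim d\sqrt{NL}$.
\item On the failure event, which has probability at most $n^{-3}$ after a union bound over $O(\log n)$ epochs, the regret is at most $N\cdot O(\eps/r)$. This contributes $O(1)$.
\end{itemize}

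\textbf{Main obstacle.} The delicate step is the initial global fit with no prior localization. Here the paired-probe linearization \cref{eq:approx-error} needs $\norm{\overline b-b}\le R$ with $A$ large, but the design $\pm q_0e_j$ has fixed amplitude $7/4$ relative to $\norm b\le 3/2$. I would try to redo the mean-value computation there directly: show that $\Delta_j$ is comparable to $\frac{\eps}{r}(b_j-\overline b_j)$ up to constant factors, which suffices for constant-radius contraction to $c_2$. Failing that, I would run one or two preliminary halving epochs starting from $R=4$ exactly as in \texttt{TubeRefine}, which costs only $O(d^2Lr/\eps)$ regret.
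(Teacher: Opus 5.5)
Your overall route is the paper's: the zero-action fallback when $N<N_0$, one global fit with the fixed design $\pm q_0e_j$, geometric epochs via \Cref{lem:soft-distance-localization} with epoch regret $Cd^2Lr/(\eps R)\asymp Cd\sqrt{LN_R}$, and a union bound over failures. Your explicit bound on the rounds after the last epoch is correct and fills in a step the paper leaves implicit. It uses $N\lesssim N_{R_{K+1}}$, which holds because $N_0\le N_{R_0}$ and the epoch lengths grow geometrically.

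You never check feasibility, and here it matters. With $\norm{x}\le 3/8$ and $8\eps/r\le 1/2$, an epoch probe $(x,\frac{8\eps}{r}(\overline b\pm ARe_j))$ with $\norm{\overline b}\le 3/2$ can leave $\bbB^{2d}$ unless $AR$ is below a small absolute constant (about $0.35$ in the worst case). This is the real reason the epochs can only start once the error is a small constant $c_2$. It is also why the initial step cannot be a direct application of the lemma.

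That initial step is the one place your proposal is incomplete, and the gap is shared with the paper. The paper disposes of it in one sentence: it asserts that the least-squares calculation of \Cref{lem:soft-distance-localization}, run on the compact set $\{\norm{b}\le 3/2,\ |h|\le 1/4\}$, gives error $c_2$ once $C_2$ is large. Your ``Initial phase'' paragraph asserts the same thing. You rightly note that the large-$A$ linearization \cref{eq:approx-error} is unavailable at amplitude $7/4$. Neither of your fallbacks repairs this.

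\begin{itemize}
\item Fallback (ii) is infeasible. \texttt{TubeRefine} stays in the ball only because it shrinks the first block to $c_0v$ and the second block by the same factor. With the first block frozen at $x$, a probe $\frac{8\eps}{r}(\pm ARe_j)$ with $A$ large and $R\ge 3/2$ has norm $AR/2>1$ when $r=16\eps$.
\item Fallback (i) is too weak. Write $\Delta_j=\frac{\eps}{r}m_jb_j$ with $m_j$ in a fixed band. The triangle-inequality argument of the lemma closes only if the band has ratio below $3$. Even then it yields, up to the noise term, $\norm{\overline b^+-b}\le\theta\cdot\frac32$ for some fixed $\theta<1$, not a small prescribed $c_2$.
\end{itemize}

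What is actually needed is a quantitative identifiability bound for this fixed design, uniform in $d,\eps,r$: $\norm{b'-b}\ge c_2$ should force $\norm{\Phi(b',h')-\Phi(b,h)}\ge c(c_2)\,\eps/r$. After that, the basic inequality with $m_0=C_2dLr^2/\eps^2$ finishes the step.

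The structure to exploit is that, for every $j$,
\[
\norm{b+q_0e_j}^2-\norm{b-q_0e_j}^2=4q_0b_j,
\qquad
\norm{b+q_0e_j}^2+\norm{b-q_0e_j}^2=2(\norm{b}^2+q_0^2).
\]
For known $h$, the outputs determine all distances and hence $b$ stably. A wrong $h'$ acts on every distance through the same monotone map $D\mapsto r\log(e^{D/r}+e^{h}-e^{h'})$. You would have to show this distortion is detectable through the $d-O(1)$ coordinates with $b_j\approx 0$: they all report the common distance $\sqrt{\norm{b}^2+q_0^2}$, which must be consistent with $\sum_j b_j^2$.
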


\begin{proof}
For fixed $x$, the observations have the form in
\cref{eq:soft-distance-model}, with
\begin{equation*}
  h=\frac12\norm{x}^2-\langle x,u^\star\rangle. 
\end{equation*}
Since $\norm{x}, \norm{u^\star}\le \frac{3}{8}$, we have $|h|\le \frac{1}{4}$, satisfying the condition of \Cref{lem:soft-distance-localization}. Also $\norm b\le \frac{3}{2}$.  We analyze the
explicit policy in \Cref{alg:point-track}.  If
$N<Cd^2Lr^2/\eps^2$, use $z_t=0$.  The softmax is $1$-Lipschitz in each argument, so the left-hand side of
\cref{eq:one-tube-center-regret} is at most
\begin{equation*}
  \frac{C\eps N}{r}\le Cd\sqrt{NL}.
\end{equation*}

\medskip
\noindent Otherwise, the initial paired-coordinate regression at radius
$7/4$ obtains an estimate of $b$ with a sufficiently small constant error
$c_2$.  The calculation in
\Cref{lem:soft-distance-localization}, now on the compact set
$\{\norm b\le3/2,\ |h|\le1/4\}$, proves its asserted accuracy when $C_2$ is
large enough.  This design is feasible because its physical
second block has norm at most $7/8$, whereas $\norm{x}\le3/8$.  It costs
$Cd^2Lr^2/\eps^2$ queries and $Cd^2Lr/\eps$ regret, both bounded by
$Cd\sqrt{NL}$ in the present case.  Project the estimate onto
$(3/2)\bbB_2^d$.  From this point on, sufficiently small
paired-coordinate neighborhoods of the current center are feasible.
Apply \Cref{lem:soft-distance-localization} at geometrically decreasing radii.
At radius $R$, use
$N_R=Cd^2Lr^2/(\eps^2R^2)$ queries.  The instantaneous tube regret is at
most $C\eps R/r$, so this epoch contributes at most
$Cd^2Lr/(\eps R)=Cd\sqrt{LN_R}$.  Stop before the first epoch that would
exceed the remaining horizon and use the last center thereafter.  Since
the epoch lengths grow geometrically,
$\sum_j\sqrt{N_{R_j}}\le C\sqrt N$, proving
\cref{eq:one-tube-center-regret}.  The feasibility follows from
$\norm x\le3/8$, $\norm b\le3/2$, and $r/(8\eps)\ge2$; the local radii
are chosen as sufficiently small universal constants.  The union of the
localization failures has probability at most $n^{-3}$; clipping all
centers to $(3/2)\bbB_2^d$ makes its contribution to expected regret
negligible.
\end{proof}

\medskip
\noindent Apply \Cref{lem:one-tube-center-tracking} with $x=\widehat u$.
For every $z$, the $1$-Lipschitzness of the softmax in each argument gives
\begin{align}
  f_{\theta^\star}\left(
    \widehat u,\frac{8\eps}{r}z
  \right)-f_{\theta^\star}(a_{\theta^\star}) \le
  \frac{\eps}{r}\norm{z-W^\star\widehat u}
  +\frac\eps2\norm{\widehat u-u^\star}^2.
  \label{eq:target-tube-regret-decomposition}
\end{align}
Combining
\cref{eq:target-estimation-error-regret,eq:one-tube-center-regret,eq:target-tube-regret-decomposition}
shows that all rounds after target exploration contribute at most
\begin{equation}
  \frac{CdL\sqrt n}{\delta}+Cd\sqrt{nL}
  \label{eq:post-target-regret}
\end{equation}
expected regret.

\subsection{Regret bound and parameter regimes}

\noindent The loss range on $\bbB_2^{2d}$ is bounded by a universal
constant under \cref{eq:regret-upper-bounded-parameter} and
$r\ge 16\eps$.  Hence all localization failures together contribute at
most $C/n$ to expected regret.  Equations
\cref{eq:map-refine-regret,eq:target-exploration-regret,eq:post-target-regret}
give
\begin{equation}
  \mathfrak R_n(\theta^\star)
  \le CL^2\left(
    \Ind_{\{\delta>r\}}\frac{d^3\delta}{\eps}
    +\frac{d\sqrt n}{\delta}
  \right)
  \label{eq:constructed-family-parameter-dependent-regret}
\end{equation}
whenever the structured branch is used. We show that this implies the target regret bound. 

\medskip
\noindent Recall
$M_n=\min\{\sqrt d\,n^{3/4},d^{4/3}\sqrt n\}$.  If the zero-action branch
is used, then $f_{\theta^\star}(0)=0$ and
$f_{\theta^\star}(a_{\theta^\star})\ge-C\eps$, so its regret is at most
$C\eps n\le CL^2M_n$. In the sequel we assume that the zero-action test fails. 

\medskip
\noindent Suppose first that $d^2\le n\le d^{10/3}$.  Then
$M_n=\sqrt d\,n^{3/4}$.  Since the zero-action test fails, 
\begin{equation*}
  \eps\gtrsim\frac{M_n}{n}
  =\sqrt d\,n^{-1/4}.
\end{equation*}
The relation $r\ge 16\eps$ and $n\le d^{10/3}$ imply $\delta=r\wedge1$, and therefore the first term of \cref{eq:constructed-family-parameter-dependent-regret} is zero. The regret upper bound is then
\begin{equation*}
  \frac{d\sqrt n}{\delta}
  =\frac{d\sqrt n}{r\wedge 1}
  \lesssim \frac{d\sqrt n}{\eps}
  \lesssim\sqrt d\,n^{3/4}=M_n.
\end{equation*}

\medskip
\noindent Now suppose that $n>d^{10/3}$, so
$M_n=d^{4/3}\sqrt n$, and put
\begin{equation*}
  \delta_\star=\frac{\sqrt\eps\,n^{1/4}}d.
\end{equation*}
The failure of the zero-action test gives
$\eps\gtrsim d^{4/3}/\sqrt n$.  If
$r\ge\delta_\star$, then $E=1$, Stage 1 is skipped, and
\begin{equation*}
  \frac{d\sqrt n}{\delta}
  \le\frac{d\sqrt n}{\delta_\star \wedge 1}
  \le \frac{d^2n^{1/4}}{\sqrt\eps} + d\sqrt{n}
  \lesssim d^{4/3}\sqrt n=M_n.
\end{equation*}
\noindent If $r<\delta_\star\le1$, then $\delta=\delta_\star$ and the two
leading terms in \cref{eq:constructed-family-parameter-dependent-regret}
are equal:
\begin{equation*}
  \frac{d^3\delta}{\eps}
  =\frac{d\sqrt n}{\delta}
  =\frac{d^2n^{1/4}}{\sqrt\eps}
  \lesssim M_n.
\end{equation*}
Finally, if $\delta_\star>1$, then $\delta=1$ and
$\eps>d^2/\sqrt n$.  Hence
\begin{equation*}
  \frac{d^3}{\eps}<d\sqrt n\le M_n.
\end{equation*}
This proves \cref{eq:constructed-family-regret-upper-bound} in every
parameter regime.

%% file: appendix-unconstrained.tex
\section{Extension to Unconstrained Action Spaces: Proof of \Cref{thm:unconstrained-convex-bandit-regret-lower-bound}}
\label{sec:conic-extension-proofs}

In this appendix we use $d$ for the block dimension, so the hard functions
initially live on $\bbR^{2d}$. We use the Gaussian prior
in \cref{eq:gaussian-priors}, write $\pi$ for this prior, and write
$\pi_{\mathrm b}=\pi(\,\cdot\mid\calE_{\mathrm b})$ for its restriction to
the event in \cref{eq:bounded-prior-event}.  The parameters used below always
satisfy $\eps\le1/2$ and
\begin{equation}
    \frac{\eps}{r}\le \frac1{16}.
    \label{eq:unconstrained-parameter-condition}
\end{equation}
In particular, on $\calE_{\mathrm b}$ the minimizer in
\cref{eq:hidden-map-unconstrained-benchmark} obeys
\begin{equation*}
  \norm{a_\theta}
  \le \left[\left(\frac38\right)^2+
      \left(\frac{8\eps}{r}\,4\,\frac38\right)^2\right]^{1/2}
  \le \frac{3\sqrt5}{8}.
\end{equation*}

\medskip
\noindent For the construction in \Cref{sec:unconstrained-action-spaces}, take
\begin{equation*}
    \lambda=\frac1{64},
    \qquad
    \varrho=\frac{3\sqrt5}{8}<\frac{29}{33},
    \qquad
    \mathfrak r_0=\frac{30}{31}.
\end{equation*}
Define the smoothed hinge by
\begin{equation}
    \Psi(s)
    \triangleq
    \begin{cases}
      0,&s\le-\lambda,\\
      \lambda\left(q^3-\frac12q^4\right),
        \quad q=(s+\lambda)/\lambda,&-\lambda<s<0,\\
      s+\lambda/2,&s\ge0.
    \end{cases}
    \label{eq:smoothed-hinge-definition}
\end{equation}
A direct calculation gives
\begin{equation}
    0\le\Psi'\le1,
    \qquad
    0\le\Psi''\le\frac{3}{2\lambda},
    \qquad
    \Psi(s)\ge s+\lambda/2.
    \label{eq:smoothed-hinge-properties}
\end{equation}

\subsection{Proof of \Cref{lem:conic-extension}}

We first verify the deterministic properties of the extension.  The map
$(a,z)\mapsto f_\theta(z)+\norm{a-z}$ is jointly convex, so partial
minimization over $z\in\bbB^{2d}$ shows that $\widetilde f_\theta$ is
convex.  The triangle inequality shows that it is globally $1$-Lipschitz.
The $1$-Lipschitzness of
$f_\theta$ on the ball gives
\begin{equation}
    \widetilde f_\theta=f_\theta
    \quad\text{on }\bbB^{2d}.
    \label{eq:extension-agrees-on-ball}
\end{equation}
Optimality of $a_\theta$ gives
$\widetilde f_\theta(a)\ge f_\theta(a_\theta)$, whereas choosing
$z=0$ and using $f_\theta(0)=0$ gives
$\widetilde f_\theta(a)\le\norm a$.

\medskip
\noindent Let $M(x,y)=y+\Psi(x-y-2\lambda)$.  The map $M$ is jointly convex, and
nondecreasing in both coordinates. Indeed, its
two partial derivatives are $\Psi'(x-y-2\lambda)$ and
$1-\Psi'(x-y-2\lambda)$; they are nonnegative and sum to one.  Since
$\lambda\widetilde f_\theta$ and $\Gamma$ are both $1/2$-Lipschitz,
$g_\theta=M(\lambda\widetilde f_\theta,\Gamma)$ is convex and globally
$1/2$-Lipschitz.

\medskip
\noindent Set
\begin{equation*}
    s_\theta(a)
    =\lambda\widetilde f_\theta(a)-\Gamma(a)-2\lambda.
\end{equation*}
The upper bound $\widetilde f_\theta(a)\le\norm a$ yields $s_\theta(a)\le-\lambda$ whenever
$\norm a\ge\mathfrak r_0$, and hence
\begin{equation}
    g_\theta(a)=\Gamma(a)
    \qquad\text{for all }\norm a\ge\mathfrak r_0.
    \label{eq:conic-parameter-free-collar}
\end{equation}
At the minimizer, $1$-Lipschitzness and $f_\theta(0)=0$ imply
$f_\theta(a_\theta)\ge-\norm{a_\theta}$.  Therefore
\begin{equation*}
  s_\theta(a_\theta)
  \ge \frac12-3\lambda-\left(\frac12+\lambda\right)
      \norm{a_\theta}
  \ge\frac{29-33\varrho}{64}>0,
\end{equation*}
and the last branch of \cref{eq:smoothed-hinge-definition} gives
\begin{equation}
    g_\theta(a_\theta)
    =\lambda f_\theta(a_\theta)-\frac{3\lambda}{2}.
    \label{eq:gthetastar}
\end{equation}
On the other hand, \cref{eq:smoothed-hinge-properties} and optimality of
$a_\theta$ give, for every $a$,
\begin{equation*}
  g_\theta(a)
  \ge\lambda\widetilde f_\theta(a)-\frac{3\lambda}{2}
  \ge\lambda f_\theta(a_\theta)-\frac{3\lambda}{2}.
\end{equation*}
Thus $a_\theta$ is a global minimizer of $g_\theta$.

\medskip
\noindent It remains to compare the two optimization errors.  If $\norm a\le1$,
then $\Pi(a)=a$ and \cref{eq:extension-agrees-on-ball} together with
\cref{eq:smoothed-hinge-properties,eq:gthetastar} gives
\begin{equation*}
  g_\theta(a)-g_\theta(a_\theta)
  \ge\lambda\bigl(f_\theta(a)-f_\theta(a_\theta)\bigr).
\end{equation*}
If $\norm a>1$, then $g_\theta(a)=\Gamma(a)\ge\lambda$.  Moreover,
$\Pi(a)\in\bbB_2^{2d}$ and hence
$f_\theta(\Pi(a))\le\norm{\Pi(a)}\le1$.  It follows that
\begin{align*}
  g_\theta(a)-g_\theta(a_\theta)
  =\Gamma(a)+\frac{3\lambda}{2}
      -\lambda f_\theta(a_\theta) \ge\lambda\bigl(f_\theta(\Pi(a))
      -f_\theta(a_\theta)\bigr).
\end{align*}
Combining the two cases with \cref{eq:target-excess-mse} proves, in fact,
the slightly stronger comparison
\begin{equation}
\begin{aligned}
  g_\theta(a)-g_\theta(a_\theta)
  &\ge\lambda\bigl(f_\theta(\Pi(a))-f_\theta(a_\theta)\bigr)\\
  &\ge c\lambda\max\left\{
      \frac{\eps}{r}
      \left\|W\Pi^1(a)-\frac{r}{8\eps}\Pi^2(a)\right\|,
      \eps\norm{\Pi^1(a)-u}^2
    \right\},
\end{aligned}
    \label{eq:conic-two-loss-domination}
\end{equation}
where the indicator in \cref{eq:target-excess-mse} is one under
$\pi_{\mathrm b}$.  This includes \cref{eq:conic-regret-domination}.
Since $g_\theta$ is convex, globally $1/2$-Lipschitz, and has a global
minimizer, it belongs to $\calF_{2d}(\bbR^{2d})$.

\subsection{Information bounds for the auxiliary mean}

We next transport the projection-based arguments of Sections~2 and~3 to the
auxiliary observation model.  In this subsection
$\theta^\star=(W^\star,u^\star)\sim\pi$, the learner may choose any
$a_t\in\bbR^{2d}$, and
\begin{equation*}
    r_t=\mu_{\theta^\star}(a_t)+Z_t,
    \qquad Z_t\stackrel{\mathrm{iid}}\sim N(0,1),
\end{equation*}
where $\mu$ is defined in
\cref{eq:smoothed-hinge-gaussian-auxiliary-mean}.  Expectations in this
model are denoted by $\bbE_{\pi,\mu}$.

\medskip
\noindent For $\norm{a_t}<1$, retain the notation $p_t$ from
\cref{eq:softmax-second-branch-weight}, and set the counterpart of \cref{eq:target-likelihood-fisher}:
\begin{equation}
\begin{aligned}
  \calI_{n,\mu}^u
  \triangleq \lambda^2\eps^2\sum_{t=1}^n
    \Ind_{\{\norm{a_t}<1\}}p_t^2\aone_t{\aone_t}^\top, \qquad 
  \overline{\calI}_{n,\mu}^u
  \triangleq\bbE_{\pi,\mu}
    [\calI_{n,\mu}^u\mid\calH_n].
\end{aligned}
    \label{eq:auxiliary-target-information}
\end{equation}
\noindent For later use, define the tube mismatch and the clipped, in-ball cumulative cost: 
\begin{equation}
        D_t\triangleq
    \left\|W^\star\aone_t-\frac{r}{8\eps}\atwo_t\right\|, \qquad \mathfrak D_n
    \triangleq\sum_{t=1}^n\bbE_{\pi,\mu}
      \left[\Ind_{\{\norm{a_t}<1\}}(D_t\wedge1)\right].
    \label{eq:auxiliary-clipped-tube-cost}
\end{equation}
The clipping will make the conditioning argument in the regret proof uniform
over all horizons. The following result is an extension of \Cref{lem:posterior-target-spread,lem:tension-estimation,lem:tension-regret}.

\begin{lemma}[Auxiliary posterior spread and projection bounds]
\label{lem:auxiliary-mean-information}
Suppose $d\ge3$ and \cref{eq:unconstrained-parameter-condition} holds.
There are universal constants $c_1,c,C>0$ with the following properties.
\begin{enumerate}
  \item If an $\calH_n$-measurable estimator $\widehat u$ satisfies
  $\bbE_{\pi,\mu}\norm{\widehat u-u^\star}^2\le c_1$, then there is an
  event $\calS_{n,\mu}\in\sigma(\calH_n)$ of probability at least $1/2$
  on which $\overline{\calI}_{n,\mu}^u$ has at least $d/2$ eigenvalues
  of size at least $d$.

  \item There is an $\calH_n$-measurable orthogonal projection $Q_n$ such
  that, simultaneously,
  \begin{align}
    \bbE_{\pi,\mu}\Tr(I_d-Q_n)
      &\le\min\left\{
        \frac1{100}\left(
          d+\frac{\lambda^2\eps^2n}{r^2d^2}\right),
        \frac1{50}\left(
          d+C\frac{\lambda^2\eps^2}{r^2d^2}\mathfrak D_n
        \right)
      \right\},
        \label{eq:auxiliary-codimension}\\
    \bbE_{\pi,\mu}\Tr(\overline{\calI}_{n,\mu}^uQ_n)
      &\le C\lambda^2\eps^2(r^2+e^{-cd})n.
        \label{eq:auxiliary-tension}
  \end{align}
\end{enumerate}
\end{lemma}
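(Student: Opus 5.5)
The plan is to redo the proofs of \Cref{lem:posterior-target-spread,lem:Kt-property,lem:tension-estimation,lem:tension-regret} for the auxiliary mean $\mu_\theta$. First we record its parameter derivatives. Outside the ball, $\mu_\theta=\Gamma$ is parameter free, so every score and Hessian term vanishes. Inside the ball the chain rule gives $\nabla_\theta\mu_\theta(a)=\lambda\Psi'(s)\nabla_\theta f_\theta(a)$ with $0\le\Psi'\le1$, and
\begin{equation*}
\nabla_u^2\mu_\theta=\lambda\Psi'\nabla_u^2f_\theta+\lambda^2\Psi''\nabla_uf_\theta\nabla_uf_\theta^\top .
\end{equation*}
Using $\Psi''\le 3/(2\lambda)$, the second term is at most $C\lambda\eps^2p_t^2\aone_t{\aone_t}^\top$.

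\textbf{Part 1.} We follow \Cref{lem:likelihood-Fisher}. The squared-gradient term is bounded by $\calI^u_{n,\mu}$, since $\Psi'\le1$ and the factor $\lambda^2$ is built into the definition. The noise-weighted curvature correction now contains $Z_t\bigl(\lambda\Psi' B_t+\lambda^2\Psi''\nabla f\nabla f^\top\bigr)$. Its square is dominated by $C\lambda^2\eps^2p_t^2\aone_t{\aone_t}^\top$, the same $p_t^2$ structure as before, because $p(1-p)\le p$ and $\lambda^2\Psi''\le C\lambda$. The matrix Freedman, Jensen, and Markov steps then give $J(\rho_{n,u})\preceq C(dI_d+\overline{\calI}^u_{n,\mu})$ on an event of probability $3/4$. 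The Cram\'er--Rao argument of \Cref{sec:posterior-target-spread} then yields $\calS_{n,\mu}$, with constants adjusted.

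\textbf{Part 2.} Define $K_t$ as before. The submartingale property is unchanged, since it rests only on Bayes and convexity of $J$. The trace identity \cref{eq:claim-trace} carries over with the per-round bound
\begin{equation*}
\|\nabla_W\mu\|_F^2\le\lambda^2\frac{\eps^2}{r^2}\Ind_{\{\|a_t\|<1\}}\min\{1,\cdot\}.
\end{equation*}
This gives the first term of \cref{eq:auxiliary-codimension} with $\Lambda=100(1+\lambda^2\eps^2 n/(r^2d^3))$.

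For anti-concentration, \cref{eq:basic-ineq} concerns only the posterior of $W^\star$ and is unchanged. The relevant weight becomes $\Psi'(s_t)^2p_t^2$, which is at most $p_t^2$. Summing as in \Cref{lem:tension-estimation} gives \cref{eq:auxiliary-tension}. Here the $\Lambda$ factor is absorbed because the right side of \cref{eq:auxiliary-tension} has no $\Lambda$. We therefore plan to use the second form, with $\Lambda$ a universal constant, and put all dependence into the codimension bound. That is, we choose $\Lambda=100$ and carry the bound on $\bbE\Tr(I-Q_n)$ through the trace estimate.

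For the second codimension bound, we mimic \Cref{lem:tension-regret} with the truncated score $\widetilde S_t$. The key gradient bound is
\begin{equation*}
\|\nabla_W\mu(a_t)Q_{t-1}\|_F^2\le\lambda^2\eps^2r^{-2}\|Q_{t-1}\aone_t\|^2 .
\end{equation*}
We then need to show $\bbE\|Q_{t-1}\aone_t\|^2\Ind\le C\,\bbE[\Ind(D_t\wedge1)]$. From \cref{eq:basic-ineq} with $K_{t-1}^{-1}\succeq\Lambda^{-1}Q_{t-1}$ we get
\begin{equation*}
\bbE[D_t\wedge 1\mid\calH_{t-1},a_t]\ge c\min\{1,\|Q_{t-1}\aone_t\|\}\ge c\|Q_{t-1}\aone_t\|^2,
\end{equation*}
using $\|\aone_t\|\le1$. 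This gives the $\mathfrak D_n$ term with $\Lambda=50$.

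\textbf{Main obstacle.} The hard part is making both minima hold \emph{simultaneously} for a single $Q_n$. Both codimension bounds come from $\Lambda^{-1}\bbE\Tr(\widetilde K_n)$, but the first form of $\Lambda$ (with $n$) differs from the second. We first try fixing $\Lambda$ to be a universal constant and reading off both bounds from $\bbE\Tr(\widetilde K_n)$. Indeed
\begin{equation*}
\bbE\Tr(\widetilde K_n)\le 2d+\lambda^2\eps^2r^{-2}d^{-2}\min\Bigl\{n,\;C\mathfrak D_n\Bigr\},
\end{equation*}
since $\|Q\aone\|^2\le1$. With $\Lambda=200$ this yields both entries of the minimum at once. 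A constant $\Lambda$ also gives \cref{eq:auxiliary-tension} directly.
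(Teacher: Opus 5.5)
Your proposal is correct and follows the paper's route. Part~1 uses the same derivative bounds ($\Psi'\le1$, $\lambda^2\Psi''\le\frac32\lambda$) in the matrix-Freedman and Cram\'er--Rao argument. Part~2 uses the same projection construction with a universal threshold $\Lambda$; the paper's constants $\frac1{100}$ and $\frac1{50}$ correspond to $\Lambda=100$ applied to $\bbE\Tr(K_n)$ and $\bbE\Tr(\widetilde K_n)$, which keeps \cref{eq:auxiliary-tension} free of any $\Lambda$ factor, while the clipped bound $\bbE[D_t\wedge1\mid\calH_{t-1},a_t]\ge c\|Q_{t-1}\aone_t\|^2$ converts the $\widetilde K_n$ trace into $\mathfrak D_n$. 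Reading both codimension bounds off $\bbE\Tr(\widetilde K_n)$ with $\Lambda=200$ is an equally valid cosmetic variant, but discard your intermediate suggestion $\Lambda=100(1+\lambda^2\eps^2n/(r^2d^3))$, which would reintroduce the unwanted factor in the tension bound.
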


\begin{proof}
The proofs are similar to those in \Cref{lem:posterior-target-spread,lem:tension-estimation,lem:tension-regret}. We only highlight the differences.

\paragraph{Parameter derivatives.}
If $\norm{a_t}\ge1$, then $\mu_{\theta^\star}(a_t)=\Gamma(a_t)$ and all
parameter derivatives vanish.  If $\norm{a_t}<1$, put
\begin{equation*}
    s_t=\lambda f_{\theta^\star}(a_t)-\Gamma(a_t)-2\lambda.
\end{equation*}
The derivatives of the second softmax branch and the chain rule give
\begin{equation*}
\begin{aligned}
  \nabla_{u^\star}\mu_{\theta^\star}(a_t)
    &=-\lambda\eps\Psi'(s_t)p_t\aone_t,\\
  \nabla_{u^\star}^2\mu_{\theta^\star}(a_t)
    &=\left[\lambda\eps\Psi'(s_t)p_t(1-p_t)
      +\lambda^2\eps^2\Psi''(s_t)p_t^2\right]
      \aone_t{\aone_t}^\top.
\end{aligned}
\end{equation*}
By \cref{eq:smoothed-hinge-properties} and $\eps\le1/2$, the scalar in
brackets is between $0$ and $\lambda\eps p_t$. As $\norm{\aone_t}\le1$ for an in-ball action, this proves
\begin{equation}
\begin{gathered}
  \nabla_{u^\star}\mu_{\theta^\star}(a_t)
    \nabla_{u^\star}\mu_{\theta^\star}(a_t)^\top, \bigl(\nabla_{u^\star}^2\mu_{\theta^\star}(a_t)\bigr)^2
    \preceq\calI_{t,\mu}^u-\calI_{t-1,\mu}^u.
\end{gathered}
    \label{eq:auxiliary-target-derivative-bounds}
\end{equation}
Similarly,
\begin{equation}
  \norm{\nabla_{W^\star}\mu_{\theta^\star}(a_t)}_F
  \le \frac{\lambda\eps}{r}
    \Ind_{\{\norm{a_t}<1\}}.
    \label{eq:auxiliary-map-derivative-bound}
\end{equation}

\paragraph{First assertion.}
Let $\rho_{n,u}^{\mu}$ be the marginal posterior density of $u^\star$ given
$\calH_n$, integrating out $W^\star$.  The same differentiation of the
marginal posterior as in the proof of \Cref{lem:likelihood-Fisher} gives
\begin{equation}
  J(\rho_{n,u}^{\mu})
  \preceq16dI_d+\overline{\calI}_{n,\mu}^u
    -\overline{\calC}_{n,\mu}^u,
    \label{eq:auxiliary-posterior-fisher-first}
\end{equation}
where
\begin{equation*}
  \calC_{n,\mu}^u
  =\sum_{t=1}^n Z_t\nabla_{u^\star}^2
    \mu_{\theta^\star}(a_t),
  \qquad
  \overline{\calC}_{n,\mu}^u
  =\bbE_{\pi,\mu}[\calC_{n,\mu}^u\mid\calH_n].
\end{equation*}

\noindent Relative to the filtration which reveals
$(W^\star,u^\star,\calH_{t-1},a_t)$ before $Z_t$, the Hessian
$B_t=\nabla_{u^\star}^2\mu_{\theta^\star}(a_t)$ is predictable and
$Z_t$ is standard Gaussian.  Thus the matrix Laplace-transform argument
used in Section~3, together with the second inequality of
\cref{eq:auxiliary-target-derivative-bounds}, gives
\begin{equation*}
  \bbE_{\pi,\mu}\Tr\exp\left(
    -\eta\calC_{n,\mu}^u-\frac{\eta^2}{2}\calI_{n,\mu}^u
  \right)\le d.
\end{equation*}
Therefore, the same argument in \Cref{lem:likelihood-Fisher} gives
\begin{equation}
  J(\rho_{n,u}^{\mu})
  \preceq C(dI_d+\overline{\calI}_{n,\mu}^u)
  \quad\text{with probability at least }3/4.
    \label{eq:auxiliary-posterior-fisher-upper}
\end{equation}

\noindent The rest now follows from the same proof of \Cref{lem:posterior-target-spread}. 

\paragraph{Second assertion.}
Let $\rho_{t,W}^{\mu}$ be the marginal posterior density of $W^\star$ in
the auxiliary model and define
\begin{equation*}
    K_t=\frac1{d^2}\Trout(J(\rho_{t,W}^{\mu})).
\end{equation*}
The convexity of Fisher information again makes $(K_t)$ a positive
semidefinite submartingale.  Fisher's identity and
\cref{eq:auxiliary-map-derivative-bound} give the analogue of
\cref{eq:info-constraint}:
\begin{equation}
  \bbE_{\pi,\mu}\Tr(K_n)
  \le d+\frac{\lambda^2\eps^2n}{r^2d^2}.
    \label{eq:auxiliary-info-constraint}
\end{equation}
The Fisher small-ball argument used to prove
\cref{eq:anti-concentration} depends only on the current marginal posterior
of $W^\star$, so it applies without change.  For in-ball actions it yields
\begin{equation}
\begin{aligned}
  {\aone_t}^\top K_{t-1}^{-1}\aone_t\,
  \bbE_{\pi,\mu}\left[
    p_t^2\Ind_{\{\norm{u^\star}\le1/2\}}
    \mid\calH_{t-1},a_t\right]
  \le Cr^2.
\end{aligned}
    \label{eq:auxiliary-anti-concentration}
\end{equation}
\noindent These establish the counterparts of \Cref{lem:Kt-property}. The second assertion now follows from the same proofs of \Cref{lem:tension-estimation,lem:tension-regret}. 
\end{proof}

\subsection{Proof of \Cref{thm:unconstrained-convex-bandit-regret-lower-bound}}

\paragraph{Estimation complexity.}
Fix $\eps\in(0,1/2]$ and set
\begin{equation*}
    r=\max\{d^{-1/4},16\eps\}.
\end{equation*}
Then \cref{eq:unconstrained-parameter-condition} holds.  Consider an
arbitrary learner in the true model
$\theta^\star\sim\pi_{\mathrm b}$,
$r_t=g_{\theta^\star}(a_t)+Z_t$, and let
$\widehat u=\Pi^1(\widehat a)$.  Conditional on
$\calE_{\mathrm b}$, the entire trajectory in the Gaussian auxiliary model
has the same law as the trajectory in this bounded-prior model, since
$\mu_\theta=g_\theta$ at every action on $\calE_{\mathrm b}$.  Moreover,
$\norm{\widehat u}\le1$.  Splitting the Gaussian risk over
$\calE_{\mathrm b}$ and its complement, as in
\Cref{lem:conditioning-transfer}, therefore gives
\begin{equation}
  \bbE_{\pi,\mu}\norm{\widehat u-u^\star}^2
  \le \bbE_{\pi_{\mathrm b},g}\norm{\widehat u-u^\star}^2
    +Ce^{-cd}.
    \label{eq:auxiliary-estimation-transfer}
\end{equation}

\noindent Suppose, toward a contradiction, that the Bayes optimization error under
$\pi_{\mathrm b}$ is smaller than $c_0\eps$, for a sufficiently small
universal $c_0$.  By \cref{eq:conic-two-loss-domination},
\begin{equation*}
  \bbE_{\pi_{\mathrm b},g}\norm{\widehat u-u^\star}^2
  \le Cc_0/\lambda.
\end{equation*}
Thus, for small $c_0$ and large $d$, \cref{eq:auxiliary-estimation-transfer}
and the first part of \Cref{lem:auxiliary-mean-information} produce an event
$\calS_{n,\mu}$ of probability at least $1/2$ on which
$\overline{\calI}_{n,\mu}^u$ has at least $d/2$ eigenvalues at least $d$.

\medskip
\noindent Let $Q_n$ be the projection in the second part of that lemma.  If
\begin{equation}
    n<c_2\min\left\{
      \frac{d^{5/2}}{\eps^2},\frac{d^2}{\eps^4}
    \right\},
    \label{eq:unconstrained-revised-estimation-budget}
\end{equation}
then a direct check in the two cases
$r=d^{-1/4}$ and $r=16\eps$ shows that
\begin{equation*}
  \frac{\lambda^2\eps^2n}{r^2d^2}\le c d,
  \qquad
  \lambda^2\eps^2r^2n\le c d^2,
\end{equation*}
where $c$ can be made arbitrarily small by decreasing $c_2$.  Consequently,
the first term in the minimum in \cref{eq:auxiliary-codimension} gives
$\bbE\Tr(I_d-Q_n)\le d/16$.  Hence
$\operatorname{rank}(Q_n)\ge3d/4$ with probability at least $3/4$.
Intersecting this event with $\calS_{n,\mu}$ and using von Neumann's trace inequality gives
\begin{equation}
  \bbE_{\pi,\mu}\Tr(\overline{\calI}_{n,\mu}^uQ_n)
  \ge\frac{d^2}{16}.
    \label{eq:auxiliary-target-lower-estimation}
\end{equation}
On the other hand, \cref{eq:auxiliary-tension} gives
\begin{equation*}
  \bbE_{\pi,\mu}\Tr(\overline{\calI}_{n,\mu}^uQ_n)
  \le C\lambda^2\eps^2(r^2+e^{-cd})n
  <\frac{d^2}{16}
\end{equation*}
after decreasing $c_2$.  This contradicts
\cref{eq:auxiliary-target-lower-estimation}.  Consequently the Bayes error
is at least $c_0\eps$, so at least one
$g_\theta\in\calF_{2d}(\bbR^{2d})$ has expected optimization error at least
$c_0\eps$.

\paragraph{Regret lower bound.}
Fix a horizon $n\ge Cd^2$.  For a small universal constant $\kappa>0$,
take
\begin{equation}
 (\eps_n,r)=
 \begin{cases}
   \bigl(\kappa\sqrt d\,n^{-1/4},
     16\kappa\sqrt d\,n^{-1/4}\bigr),
       &d^2\le n\le d^{10/3},\\
   \bigl(\kappa d^{4/3}n^{-1/2},
     16\kappa d^{-1/3}\bigr),
       &n>d^{10/3}.
 \end{cases}
    \label{eq:unconstrained-regret-parameters}
\end{equation}
In both regimes $\eps_n\le1/2$ and $\eps_n/r\le1/16$, after decreasing
$\kappa$ if necessary, so the preceding construction applies.

\medskip
\noindent Let $\mathfrak R_n^{\mathrm b}$ be the Bayes regret of an arbitrary policy
against $g_{\theta^\star}$ under $\pi_{\mathrm b}$.  We argue by
contradiction, assuming
\begin{equation}
    \mathfrak R_n^{\mathrm b}<c\lambda\eps_n n
    \label{eq:assumed-small-unconstrained-regret}
\end{equation}
for a sufficiently small universal $c>0$.  Define
\begin{equation*}
    \widehat u=\frac1n\sum_{t=1}^n\Pi^1(a_t).
\end{equation*}
The second term in \cref{eq:conic-two-loss-domination} and Jensen's inequality
give
\begin{equation*}
  \bbE_{\pi_{\mathrm b},g}\norm{\widehat u-u^\star}^2
  \le\frac{C\mathfrak R_n^{\mathrm b}}{\lambda\eps_n n}.
\end{equation*}
The estimate $\widehat u$ lies in the unit ball, so the conditioning argument
in \cref{eq:auxiliary-estimation-transfer} applies to it as well.  Under
\cref{eq:assumed-small-unconstrained-regret}, the first part of
\Cref{lem:auxiliary-mean-information} therefore yields an event
$\calS_{n,\mu}$ of probability at least $1/2$ on which
$\overline{\calI}_{n,\mu}^u$ has at least $d/2$ eigenvalues at least $d$.

\medskip
\noindent We next control the codimension of the regret projection.  In the
bounded-prior model, the first term in
\cref{eq:conic-two-loss-domination} gives, for every in-ball action,
\begin{equation*}
  g_{\theta^\star}(a_t)-g_{\theta^\star}(a_{\theta^\star})
  \ge c\lambda\frac{\eps_n}{r}D_t.
\end{equation*}
Because the summands in \cref{eq:auxiliary-clipped-tube-cost} are bounded by
one, conditioning on $\calE_{\mathrm b}$ gives the uniform transfer bound
\begin{equation}
  \mathfrak D_n
  \le C\frac{r}{\lambda\eps_n}\mathfrak R_n^{\mathrm b}
    +Cne^{-cd}.
    \label{eq:auxiliary-tube-transfer}
\end{equation}
This is the point of clipping $D_t$: the exceptional-event remainder is
$ne^{-cd}$, with no factor growing like $r/\eps_n$.

\medskip
\noindent Let $Q_n$ be the projection in the second part of
\Cref{lem:auxiliary-mean-information}.  Substituting
\cref{eq:auxiliary-tube-transfer} into
the second term in the minimum in \cref{eq:auxiliary-codimension} gives
\begin{equation}
\begin{aligned}
  \bbE_{\pi,\mu}\Tr(I_d-Q_n)
  \le\frac1{50}\left[d
    +C\frac{\lambda\eps_n}{rd^2}\mathfrak R_n^{\mathrm b}
    +C\frac{\lambda^2\eps_n^2n}{r^2d^2}e^{-cd}
  \right].
\end{aligned}
    \label{eq:auxiliary-regret-codim-after-transfer}
\end{equation}
Using \cref{eq:assumed-small-unconstrained-regret} and
\cref{eq:unconstrained-regret-parameters}, the second term in brackets is
at most $C\kappa d$: in the first regime this follows from
$n^{3/4}\le d^{5/2}$, and in the second regime the term is exactly of order
$\kappa d$.  The third term is at most
$Cd^{4/3}e^{-cd}$.  Thus, after decreasing $c$ and $\kappa$,
\cref{eq:auxiliary-regret-codim-after-transfer} is at most $d/16$ for
large $d$.  As before,
$\operatorname{rank}(Q_n)\ge3d/4$ with probability at least $3/4$, and
intersection with $\calS_{n,\mu}$ gives
\begin{equation}
  \bbE_{\pi,\mu}\Tr(\overline{\calI}_{n,\mu}^uQ_n)
  \ge\frac{d^2}{16}.
    \label{eq:auxiliary-target-lower-regret}
\end{equation}
The tension inequality \cref{eq:auxiliary-tension}, however, yields
\begin{align*}
  \bbE_{\pi,\mu}\Tr(\overline{\calI}_{n,\mu}^uQ_n)
  &\le C\lambda^2\eps_n^2(r^2+e^{-cd})n\\
  &\le C\lambda^2\left(\kappa^4d^2
      +\kappa^2d^{8/3}e^{-cd}\right)
  <\frac{d^2}{16}
\end{align*}
when $\kappa$ is sufficiently small.  This contradicts
\cref{eq:auxiliary-target-lower-regret}.  We have proved
\begin{equation*}
  \mathfrak R_n^{\mathrm b}
  \ge c\lambda\eps_n n
  \ge c'\min\left\{
    \sqrt d\,n^{3/4},d^{4/3}\sqrt n
  \right\}.
\end{equation*}
Since Bayes regret lower bounds minimax regret, this proves the asserted
regret bound.